\definecolor{brightBlue}{RGB}{68, 119, 170}
\definecolor{brightCyan}{RGB}{102, 204, 238}
\definecolor{brightGreen}{RGB}{34, 136, 51}
\definecolor{brightYellow}{RGB}{204, 187, 68}
\definecolor{brightRed}{RGB}{238, 102, 119}
\definecolor{brightPurple}{RGB}{170, 51, 119}
\definecolor{brightGrey}{RGB}{187, 187, 187}
\definecolor{vibrantBlue}{RGB}{0, 119, 187}
\definecolor{vibrantCyan}{RGB}{51, 187, 238}
\definecolor{vibrantTeal}{RGB}{0, 153, 136}
\definecolor{vibrantOrange}{RGB}{238, 119, 51}
\definecolor{vibrantRed}{RGB}{204, 51, 17}
\definecolor{vibrantMagenta}{RGB}{238, 51, 119}
\definecolor{vibrantGrey}{RGB}{100, 100, 100}
\definecolor {processblue}{cmyk}{0.96,0,0,0}
\definecolor {mygreen}{HTML}{00A500}
\definecolor {myred}{HTML}{DD0000}
\definecolor {myyell}{HTML}{AAAA00}
\newcommand{\green}[1]{{\color{mygreen} #1}} 
\newcommand{\red}[1]{{\color{myred} \bf #1}}
\pgfplotsset{compat=newest}
\definecolor{C0}{RGB}{31, 119, 180}
\definecolor{C1}{RGB}{255, 127, 14}
\definecolor{C2}{RGB}{44, 160, 44}
\definecolor{C3}{RGB}{213, 48, 50}
\definecolor{C4}{RGB}{148, 103, 189}
\definecolor{C5}{RGB}{140, 86, 75}
\definecolor{C6}{RGB}{227, 119, 194}
\definecolor{C7}{RGB}{127, 127, 127}
\definecolor{C8}{RGB}{188, 189, 34}
\newtheorem{defin}{Definition}
\newtheorem{thm}{Theorem}
\newtheorem{lem}{Lemma}
\newtheorem{cor}[thm]{Corollary}
\newtheorem{prop}[thm]{Proposition}
\newtheorem{ass}[thm]{Assumption}
\newcommand{\R}{\mathbb{R}}
\newcommand{\N}{\mathbb{N}}
\newcommand{\E}{\mathbb{E}}
\newcommand{\Prob}{\mathbb{P}}
\newcommand{\As}{\mathcal A}
\newcommand{\Bs}{\mathcal B}
\newcommand{\Cs}{\mathcal C}
\newcommand{\Fs}{\mathcal F}
\newcommand{\Gs}{\mathcal G}
\newcommand{\Is}{\mathcal I}
\newcommand{\Ns}{\mathcal N}
\newcommand{\Os}{\mathcal O}
\newcommand{\Qs}{\mathcal Q}
\newcommand{\Rs}{\mathcal R}
\newcommand{\Ss}{\mathcal S}
\newcommand{\Ts}{\mathcal T}
\newcommand{\Us}{\mathcal U}
\newcommand{\Vs}{\mathcal V}
\newcommand{\Zs}{\mathcal Z}
\newcommand{\bm}{\boldsymbol}
\DeclareMathOperator*{\argmax}{arg\,max}
\DeclareMathOperator*{\argmin}{arg\,min}
\newcommand{\xmark}{\ding{55}}
\title{Local Linearity: the Key for No-regret Reinforcement Learning in Continuous MDPs}
\author{%
  Davide Maran \\
  Politecnico di Milano, Milan, Italy\\
  \texttt{davide.maran@polimi.it} \\
  \And
  Alberto Maria Metelli \\
  Politecnico di Milano, Milan, Italy\\
  \texttt{albertomaria.metelli@polimi.it} \\
  \And
  Matteo Papini \\
  Politecnico di Milano, Milan, Italy\\
  \texttt{matteo.papini@polimi.it} \\
  \And
  Marcello Restelli \\
  Politecnico di Milano, Milan, Italy\\
  \texttt{marcello.restelli@polimi.it} 
}
\begin{document}
\setlength{\abovedisplayskip}{4pt}
\setlength{\belowdisplayskip}{4pt}
\setlength{\textfloatsep}{8pt}

\maketitle

\begin{abstract}
    Achieving the no-regret property for Reinforcement Learning (RL) problems in continuous state and action-space environments is one of the major open problems in the field. Existing solutions either work under very specific assumptions or achieve bounds that are vacuous in some regimes. Furthermore, many structural assumptions
    are known to suffer from a provably unavoidable exponential dependence on the time horizon $H$ in the regret, which makes any possible solution unfeasible in practice. 
    In this paper, we identify \textit{local linearity} as the feature that makes Markov Decision Processes (MDPs) both \textit{learnable} (sublinear regret) and \textit{feasible} (regret that is polynomial in $H$). 
    We define a novel MDP representation class, namely \textit{Locally Linearizable MDPs}, generalizing other representation classes like Linear MDPs and MDPS with low inherent Belmman error. 
    Then, i) we introduce \textsc{Cinderella}, a no-regret algorithm for this general representation class, and ii) we show that all known learnable and feasible MDP families are representable in this class. 
    We first show that all known feasible MDPs belong to a family that we call \textit{Mildly Smooth MDPs}. Then, we show how any mildly smooth MDP can be represented as a Locally Linearizable MDP by an appropriate choice of representation. This way, \textsc{Cinderella} is shown to achieve state-of-the-art regret bounds for all previously known (and some new) continuous MDPs for which RL is learnable and feasible.
\end{abstract}


\section{Introduction}

\emph{Reinforcement learning} (RL) \cite{sutton2018reinforcement} is a paradigm of artificial intelligence in which an agent interacts with an environment, which is typically assumed to be a Markov Decision Process (MDP)~\cite{puterman2014markov}, to maximize a reward signal in the long term. By interacting with the environment, an RL algorithm tries to make the agent play actions leading to the highest possible expected reward; RL theory is the field that designs algorithms to be provably efficient, i.e., to work with probability close to one. 
This idea is formalized in a performance metric called the \emph{(cumulative) regret}, which measures the cumulative difference between actions played by the algorithm and the optimal ones in terms of expected reward.

For the case of episodic \emph{tabular} MDPs, when both the state and the action space are finite, an optimal result was first proved by~\cite{azar2017minimax}, who showed a bound on the regret of order $\mathcal {\widetilde O}(\sqrt{H^3|\Ss||\As|K})$, where $\Ss$ is the state space, $\As$ is the action space, $K$ is the number of episodes of interaction, and $H$ the time horizon of every episode. This regret is minimax-optimal in the sense that no algorithm can achieve smaller regret for every arbitrary tabular MDP. This result is not useful in many real-world scenarios, where $\Ss$ and $\As$ are huge, or even continuous \citep{kober2013reinforcement,kiran2021deep,hambly2023recent}. In fact, all applications of RL to the physical world, like robotics \cite{kober2013reinforcement} and autonomous driving \cite{kiran2021deep}, have to deal with continuous state spaces. Furthermore, the most common benchmarks used to evaluate practical RL algorithms \citep{todorov2012mujoco,brockman2016openai} have continuous state spaces.

One of the first studied families of MDPs with continuous spaces is the Linear Quadratic Regulator (LQR) \cite{bemporad2002explicit}, which goes back to control theory. LQR is a model where the state of the system evolves according to linear dynamics, and the reward is quadratic. Regret guarantees of order $\widetilde{\Os}(\sqrt K)$
for this problem were obtained by \cite{abbasi2011regret} (with a computationally inefficient algorithm) and, then, by \cite{dean2018regret, cohen2019learning}. Still, this parametric model of the environment is very restrictive and does not capture the vast majority of continuous MDPs.
A much wider and non-parametric family is given by \emph{Lipschitz MDPs}~\citep{rachelson2010locality},  which assume that bounded differences in the state-action pair $(s,a)$ correspond to bounded differences in the reward $r(s,a)$ and in the transition function $p(\cdot|s,a)$ (e.g., in Wasserstein metric). 
Lipschitz MDPs have been applied to several scenarios, like policy gradient methods \citep{pirotta2015policy, asadi2018lipschitz, metelli2020control}, RL with delayed feedback~\citep{liotet2022delayed}, configurable RL~\citep{metelli2022exploiting}, and auxiliary tasks for imitation learning~\citep{damiani2022balancing, maran2023tight}. While this model is very general, its regret guarantees are weak, both in terms of dependence on $K$ and on $H$. In fact, very recently, \cite{maran2024no} showed a regret lower bound of order $\Omega(2^HK^{\frac{d+1}{d+2}})$, where $d$ is the dimension of the state-action space, which makes this family of problems \textit{statistically unfeasible}.

Another part of the literature has focused instead on \textit{representation classes} of MDPs. In this paper, we call "representation class" a family of problems that depend both on an MDP and on an exogenous element, usually a \textit{feature map}. One example can be found in the popular class of \emph{Linear MDPs}~\citep{yang2019sample,jin2020provably}, which assumes that both the transition and the reward function can be factorized as a scalar product of a known feature map $\phi$ of dimension $d_\phi$ and some unknown vectors. Regret bounds of order $\widetilde {\mathcal O}(H^{2}d_\phi^{3/2}\sqrt {K})$ are possible \cite{jin2020provably}, which succeed in moving the complexity of the problem into the dimension $d_\phi$ of the feature map. Unfortunately, this representation class is very restrictive: i) linearity is assumed on both the $p_h$ and $r_h$ functions; 
ii) the same linear factorization must be constant along the state-action space $\Ss \times \As$, which goes in the opposite direction w.r.t. the locality principle introduced by Lipschitz MDPs. The first issue is solved by \cite{zanette2020learning}, which significantly extends this class of process by assuming linearity on the Bellman optimality operator, which turns out to be much weaker. This representation class is known as MDPs with \textit{low inherent Bellman error}, a generalization of linear MDPs that further allows for a small approximation error $\Is$.

Very recently, different kinds of assumptions for continuous spaces were introduced. In \emph{Kernelized MDPs}~\citep{yang2020provably}, both the reward function and the transition function belong to a \emph{reproducing kernel Hilbert space} (RKHS) induced by a known kernel 
coming from the Matérn covariance function with parameter $\nu>0$. The higher the value of $\nu$, the more stringent the assumption, as the corresponding RHKS contains fewer functions. This kind of assumption enforces the smoothness of the process, which is stronger for higher values of $\nu$. A generalization of this family can be found in \textit{Strongly smooth MDPs} \cite{maran2024no}, which require the transition and reward functions to be $\nu-$times continuously differentiable. Although it is a wide, non-parametric family of processes, enforcing the smoothness of the transition function is rather demanding as it implies that the state $s'$ is affected by a smooth noise. For this reason, \cite{maran2024no,maran2024projection} also defines the larger family of \textit{Weakly smooth MDPs}, which only requires the smoothness of the Bellman optimality operator.
This model is extremely general, as it can also capture Lipschitz MDPs. Still, for the same reason, it is affected by an exponential lower bound in $H$. Lastly, note that for the last three kinds of MDPs, \emph{Kernelized}, \textit{Strongly and Weakly Smooth}, the best-known regret bounds are linear in $K$ for some values of $d$ and $\nu$. 
As the regret is trivially bounded by $K$, these bounds are vacuous, not guaranteeing convergence to the performance of the optimal policy.

\textbf{Our contributions.} 
In this paper, we argue that the aspect that makes many continuous RL problems both learnable and feasible is local linearity, i.e., the possibility to locally approximate the MDP as a process exhibiting some sort of linearity for a well-designed feature map. To support this thesis, in the first part of the paper, ($i$) we introduce \textit{Locally Linearizable MDPs}, a novel \textit{representation} class of MDPs depending on both a feature map $\phi_h$ and a partition $\Us_h$ of the state-action space of the MDP. This class generalizes both LinearMDPs and low inherent Bellman Error while also allowing the feature map to be local. ($ii$) We design an algorithm, \textsc{Cinderella}, which enjoys satisfactory regret bounds for this representation class. 
In the second part of the paper, we explore how this approach can be compared to the state of the art on continuous MDP; ($iii$) we show that all families all families that were defined in the continuous RL literature, to the best of our knowledge, for which learnable and feasible RL is possible are included in a novel family of \textit{Mildly Smooth MDPs} defined in this paper; ($iv$) we show that this family is, in turn, a special instance of our \textit{Locally Linearizable MDPs} representation class, for an appropriate choice of the feature map. Therefore, \textsc{Cinderella} can be applied to learning on all these families of continuous MDPs. ($v$) we finally prove that the regret bound of \textsc{Cinderella} outperforms several state-of-the-art results in this field.

\section{Background and set-up}

\textbf{Markov Decision processes.}~~We consider a finite-horizon Markov decision process (MDP)~\cite{puterman2014markov} $M=(\Ss, \As, p, r, H)$, where $\Ss$ is the state space, $\As$ is the action space,\footnote{For convenience, we will denote $\Zs=\Ss\times \As$ and with $z$ any pair $(s,a)$.} $p=\{p_h\}_{h=1}^{H-1}$ is the sequence of transition functions mapping, for each step $h \in [H-1] \coloneqq \{1,\dots,H-1\}$, a pair $z=(s,a) \in \Zs$ to a probability distribution $p_h(\cdot |z)$ over $\Ss$, while the initial state $s_1$ may be arbitrarily chosen by the environment at each episode; $r=\{r_h\}_{h=1}^H$ is the sequence of reward functions, mapping, for each step $h \in [H]$, a pair $z=(s,a)$ to a real number $r_h(z)$, and $H$ is the horizon.
At each episode $k\in [K]$, the agent chooses a policy $\pi_k = \{\pi_{k,h}\}_{h=1}^H$, which is a sequence of step-dependent mappings from $\Ss$ to probability distributions over $\As$. For each step $h \in [H]$, the action is chosen as $a_h\sim \pi_{k,h}(\cdot |s_h)$ and the agent gains reward of mean $r_h(z_h)$ and independent on the past, then the environment transitions to the next state $s_{h+1}\sim p_h(\cdot |z_h)$. For a summary of this notation see \ref{app:not}.

\textbf{Value functions and Bellman operators.}~~
The state-action value function (or \emph{Q-function}) quantifies the expected sum of the rewards obtained under a policy $\pi$, starting from a state-step pair $(s,h)\in\Ss\times [H]$ and fixing the first action to some $a\in\As$:
\begin{align}\label{eq:action_state_val}
    Q_h^{\pi}(s,a) \coloneqq \mathbb{E}_{\pi} \left[ \sum_{\ell=h}^{H} r_\ell(s_\ell,a_\ell)\bigg | s_h=s,a_h=a  \right]=\mathbb{E}_{\pi} \left[ \sum_{\ell=h}^{H} r_\ell(z_\ell)\bigg | z_h=(s,a)  \right],
\end{align}
where $\E_{\pi}$ denotes expectation w.r.t. to the stochastic process $a_h \sim \pi_h(\cdot|s_h)$ and $s_{h+1} \sim p_h(\cdot|z_h)$ for all $h \in [H]$.
The state value function (or \emph{V-function}) is defined as $V_h^\pi(s)\coloneqq\E_{a\sim\pi_h(\cdot\vert s)}[Q_h^\pi(s,a)]$, for all $s\in\Ss$. The supremum of the value functions across all the policies is referred to as the optimal value function: $Q_h^\star(z):=\sup_\pi Q_h^\pi(z)$ for the Q-function and $V_h^\star(s):=\sup_{\pi}V_h^\pi(s)$ for the V-function.

In this work, as often done in the literature~\citep{zhang2021reinforcement}, we make the following
\begin{ass}\label{ass:normalized}
    The instantaneous reward minus its mean $r_h(s,a)$ is $1-$subgaussian, and normalized so that $0\le Q_h^\pi(z)\le 1$ for every policy $\pi$, $z\in \Zs$ and $ h\in [H]$.
\end{ass}
Passing to the case where the \emph{per-step} reward is in $[0,1]$  requires multiplying all upper bounds by $H$.
An explicit way to find the optimal value function is given by the \textit{Bellman optimality operator}, which is defined, for every function $f:\Zs\to \R$, as
$\mathcal T_h f(s,a):=r_h(s,a)+\E_{s'\sim p_h(\cdot|s,a)}\left[\sup_{a'\in \As}f(s',a')\right].$
In fact, it is easy to show that $Q_h^\star = \mathcal T_h Q_{h+1}^\star$ at every step, while the optimal state-value function is obtained simply as $V_h^\star(a) = \sup_{a\in \As}Q_h^\star(s,a)$.\footnote{The existence of optimal policies is more subtle than in the finite-action case~\citep{bertsekas1996stochastic}, but this does not prevent us from defining a meaningful notion of regret.}

\textbf{Agent's regret.}~~We evaluate the performance of an agent, i.e., of a policy $\pi_k$ played at episode $k\in[K]$, with its expected total reward, i.e., the V-function evaluated in the initial state $V^{\pi_k}_1(s_1^k)$. The goal of the agent is to play a sequence of policies $\{\pi_k\}_{k=1}^K$ to minimize the cumulative difference between the optimal performance $V^\star_1(s_1^k)$ and its performance $V_1^{\pi_k}(s_1^k)$, given the initial state $s_1^k$ chosen by the environment. This quantity takes the name of \textit{(cumulative) regret}, $R_K\coloneqq\sum_{k=1}^K \left(V_1^\star(s_1^k) - V_1^{\pi_k}(s_1^k) \right).$
This quantity is non-negative, and by the normalization condition, we can see that it cannot exceed $K$ as every term in the sum is bounded by $1$.
Note that if $R_K=o(K)$, then the average performance of the chosen policies will converge to optimal performance.
An algorithm choosing a sequence of policies with this property is called \emph{no-regret}.


\textbf{Representation classes of MDPs.}~~
As anticipated in the introduction, we call "representation class" a family of MDPs that is defined through its relation with a feature map or another exogenous element. While the most popular representation class is the Linear MDP, assuming the exact factorization of both $p_h$ and $r_h$, no-regret learning is possible for a much wider family, only requiring a form of \textit{approximate} linearity on the application of Bellman's optimality operator. This class was introduced by \cite{zanette2020learning} as MDPs with \textit{low inherent Bellman error}. Given a sequence of compact sets $\mathcal B_{h}\subset \R^{d_\phi}$, and calling $Q_h[\theta](s,a)$ the linear function $\phi_h(s,a)^\top \theta$, the inherent Bellman error w.r.t. $\{\mathcal B_{h}\}_h$ is defined as:
\begin{equation}\Is(\phi):=\max_{h\in [H]}\sup_{\theta\in \mathcal B_{h+1}}\inf_{\theta'\in \mathcal B_{h}}\|\phi_h(\cdot)^\top \theta'-\mathcal T_h Q_{h+1}[\theta](\cdot)\|_{L^\infty},\label{eq:inherent}\end{equation}
where the supremum norm $\|\cdot\|_{L^\infty}$ indicates the maximum of the function in absolute value over $\Zs$. In the realizable case (i.e., $\Is=0$), these processes are a strict generalization of Linear MDPs~\citep{zanette2020learning}.
To achieve regret guarantees for continuous state-action MDPs that go beyond this linear case, we will need to borrow some concepts of smoothness from mathematical analysis, as presented below.

\textbf{Smooth functions.}~~Let $\Omega \subset [-1,1]^d$ and $f : \Omega \to \R$. We define a multi-index $\bm \alpha$ as a tuple of non-negative integers $(\alpha_1,\dots \alpha_d)$. We say that $f\in \mathcal C^{\nu}(\Omega)$, for $\nu\in (0,+\infty)$, if it is $\nu_*-$times continuously differentiable for $\nu_*:=\lceil \nu - 1\rceil$,
and there exists a constant $L_{\nu}(f)$ such that:

\begin{align}
\forall \bm \alpha:\ \|\bm \alpha\|_1= \nu_*,\qquad \forall x,y\in \Omega:\quad \left|D^{\bm \alpha} f(x)-D^{\bm \alpha} f(y)\right|\le L_{\nu}(f)\|x-y\|_\infty^{\nu-\nu_*}
\end{align}

the multi-index derivative is defined as
$D^{\bm \alpha}f:=\frac{\partial^{ \alpha_1+...+  \alpha_d}}{\partial x_1^{ \alpha_1}\dots \partial x_d^{ \alpha_d}}.$
The previous set becomes a normed space when endowed with a norm $\|f\|_{\mathcal C^{\nu}}$ defined as
$\max \left \{\max_{|\bm \alpha|\le \nu_*}\|D^{\bm \alpha}f\|_{L^\infty}, L_{\nu}(f)\right \}.$
Note that, when $\nu\in \N$, this norm reduces to $\|f\|_{\mathcal C^{\nu}}=\max_{|\bm \alpha|\le \nu }\|D^{\bm \alpha}f\|_{L^\infty},$
since the Lipschitz constant $L_{\nu}(f)$ of the derivatives up to order $\nu_*=\nu-1$ correspond exactly to the upper bound of the derivatives of order $\nu$ (which exists as a Lipschitz function is differentiable almost everywhere). For these values of $\nu$, the spaces defined here are equivalent to the spaces $\mathcal C^{\nu-1,1}(\Omega)$ defined in \cite{maran2024no}.


\section{Locally Linearizable MDPs}\label{sec:loclin}

\begin{figure}[t]
    \centering
    \subfloat[]{\includegraphics[width=0.4\textwidth]{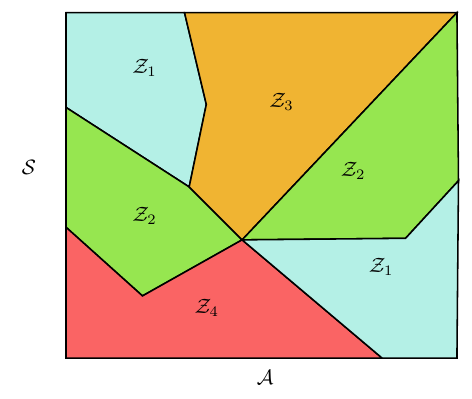}}\hfill
    \subfloat[]{\includegraphics[width=0.5\textwidth]{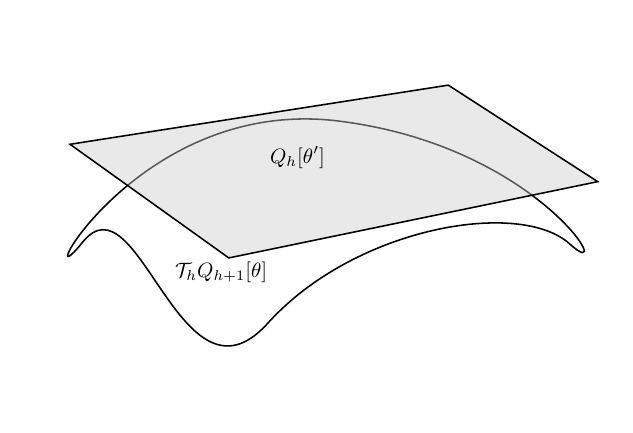}}\hfill
    \caption{In Locally Linearizable MDPs, we have that, as shown in (a), the space $\Zs$ is partitioned into several regions, which do not need to be convex nor connected. On each of these regions, as shown in (b), the result of the Bellman optimality operator can be well approximated by a $Q$ function that is linear in the feature map, with a parameter $\theta$ that may depend on the region itself.}
    \label{fig:loclin}
\end{figure}

As we have stated in the introduction, the main limitation of the low inherent Bellman error assumption is that it cannot model scenarios where the linear parameter $\theta$ changes across the state-action space. In fact, $\phi_h(s,a)^\top \theta$ must be an approximation of the Q-function $Q_h(s,a)$ \emph{uniformly} over $(s,a) \in \mathcal{Z}$. To overcome this limitation, we introduce a novel concept of \textit{locality} to enable the feature map to be associated with different parameters $\theta$ in different regions of the state-action space. 

Therefore, from this point on, we are going to associate to a given Markov Decision Process $M$ the following two entities:
\begin{enumerate}
    \item $\phi_h: \Zs \to \R^{d_h}$: a feature map which is allowed to depend on the current stage (also for its dimension.
    \item $\Us_h$: a sequence of partitions of the state-action space $\Zs$ in $N_h$ regions, so that $\Us_{h}\coloneqq\{\Zs_{h,n}\}_{n=1}^{N_h}$. We call $\rho_h:\Zs \to [N_h]$ the map linking every element $z\in \Zs$ to the index of its set in the partition $\Us_h$.
\end{enumerate}

These two elements are necessary to introduce the function class we are going to use as function approximator in this setting.
Calling $\bm \theta_h=\{\theta_{h,n}\}_{n=1}^{N_h}$ a list of vectors in $\R^{d_h}$, one for each of the regions $\Zs_{h,n}$, we employ, as function approximator for the state-action value function, the following set

\begin{equation}\Qs_h:=\left\{Q_h[\bm \theta_h](\cdot)=\phi_h(\cdot)^\top \theta_{h,\rho_h(\cdot)},\quad \text{where} \quad\bm \theta_h=\{\theta_{h,n}\}_{n=1}^{N_h},\ \theta_{h,n}\in \Bs_{h,n}\right\}.\label{eq:Qarchetype}\end{equation}

Coherently, we will call $\Vs_h\coloneqq\{V(s)=\sup_{a\in\mathcal{A}} Q(s,a):\ Q\in \Qs_h,\, s \in \mathcal{S}\}.$
For fixed $z$, we have $Q_h[\bm \theta_h](z)=\phi_h(z)^\top \theta_{h,\rho_h(z)}$,
so that the feature map is allowed to depend directly on $z$, while the linear parameter only depends on $\rho_h(z)$, the function indicating in which of the $N_h$ regions we are. $\Bs_{h,n}$ are arbitrary compact sets which contain the candidate values for the linear parameters $\theta_{h,n}$. Two relevant quantities for this model are the $2-$norm of the feature map and the diameter of the sets $\Bs_{h,n}$,
$$L_\phi\coloneqq\sup_{h\in [H], z\in \Zs}\|\phi_h(z)\|_2\qquad \Rs_{h,n}\coloneqq\text{diam}(\Bs_{h,n}),$$

Respectively. Analog normalization constants appear in \cite{jin2020provably,zanette2020learning}. The low inherent Bellman error property can be redefined in this setting as follows.

\begin{defin}\label{defin:low_bell}
    \textit{(Inherent Bellmann Error)} Given a family of compact sets $\Bs_{h,n}\subset \R^{d_h}$ depending on $h\in [H], n\in [N_h]$, and their Cartesian product
    $\Bs_h = \bigtimes_{n=1}^{N_h} \Bs_{h,n}$, we define:
    \begin{equation}\Is(\phi, \mathcal U)\coloneqq\max_{h\in [H]}\sup_{\bm \theta_{h+1}\in \Bs_{h+1}}\inf_{\bm \theta_{h}\in \Bs_{h}} \|Q_h[\bm \theta_h](\cdot)-\Ts_hQ_{h+1}[\bm \theta_{h+1}](\cdot)\|_{L^\infty}.\label{eq:lowinherent1}\end{equation}
\end{defin}

Note that, for $N_h=1$, our definition exactly reduces to Equation \eqref{eq:inherent}, as expected. As in that case, the term $\Is$ plays the role of an approximation error. By assuming a bound on $\Is(\phi, \mathcal U)$, we can now give a formal definition of the class of MDPs we are going to study in this paper.

\begin{defin}
    An $\Is-$\textit{Locally Linearizable MDP}\footnote{For readability, the bound $\Is$ on the inherent Bellman error will be dropped when talking about this class in general.} is a triple $(M,\{\phi_h\}_{h=1}^H,\{\Us_h\}_{h=1}^H)$ where $M$ is an MDP, $\phi_h: \Zs \to \R^{d_h}$ is a feature map and $\Us_h$ a sequence of partitions of the state-action space $\Zs$ in $N_h$ regions such that the corresponding inherent Bellman error (definition \ref{defin:low_bell}) satisfies
    $\Is(\phi, \mathcal U)\le \Is$.
\end{defin}

As for the class of MDPs with Low Inherent Bellmann error, $\phi$ (and $\mathcal U$ in our case) must be known to the agent, while $\Is$ is not needed.
The novel aspect this class is that, as a result of definition \ref{defin:low_bell}, linearity is independently enforced in separate regions of the state-action space. We provide a visualization of this concept in Figure \ref{fig:loclin}.
Note that, in principle, any MDP belongs to the Locally Linearizable representation class for $\Is=1$. In fact, if we take $\Us_h=\{\Zs\}$, the trivial partition containing just the state-action space as an element, and $\phi_h(z)=0$ (a feature map mapping everything to $0$), Equation~\eqref{eq:lowinherent1} is satisfied with $\Is=1$. Nonetheless, this class is \textit{interesting} only if $\Is$ is small, as it is easy to show that the regret of any algorithm in this class grows at least as $\Is K$. 

\textbf{Limitations of known approaches.}~~Formally, no algorithm in theoretical RL literature can achieve no-regret learning on this class of problems, as it is a superclass of the low inherent Bellman error, which, to the best of our knowledge, is not included in any other setting that has been tackled. 
Still, a clever strategy called feature extension (example 2.1 from \cite{jin2020provably})
may allow us to solve this class of MDPs. In fact, consider \textsc{Eleanor} \cite{zanette2020learning}, the only algorithm able to deal with MDPs with low inherent Bellman error. This trick employs a newly defined feature map:
$$\widetilde \phi_h(z)\coloneqq[\underbrace{\delta_{h,1}(z)\phi_h(z),\ \delta_{h,2}(z)\phi_h(z), \dots, \delta_{h,N_h}(z)\phi_h(z)}_{N_h}]^\top,\quad \delta_{h,n}(z)\coloneqq
\begin{cases}
    1 & \text{if } \rho_h(z)=n\\
    0 & \text{otherwise}
\end{cases}$$
so that its dimension expands from $d_h$ to $N_hd_h$. This way, any 
function of $\Qs_h$ (as defined in Equation~\ref{eq:Qarchetype}) is linear in $\widetilde \phi_h(z)$, with a single $\theta_h$ independent of the region $\Zs_{n,h}$. Indeed, we have for $Q_h[\bm \theta_h]\in \Qs_h$:

$$Q_h[\bm \theta_h](z)=\phi_h(z)^\top \theta_{h,\rho_h(z)}=\widetilde \phi_h(z)^\top [\theta_{h,1}, \theta_{h,2}, \dots, \theta_{h,N_h}].$$

This shows every Locally Linearizable MDP with feature map $\phi_h$ of dimension $d_h$ is also an MDP with low inherent Bellman error w.r.t. $\widetilde \phi_h$ of dimension $N_hd_h$ and the same value for $\Is$. If we apply the regret bound for \textsc{Eleanor} \citep[][Theorem 1]{zanette2020learning}, we obtain: 
\begin{equation}R_K =\widetilde {\Os}\left (\sum_{h=1}^H N_hd_h\sqrt K+\sum_{h=1}^H \sqrt{N_hd_h} \Is K\right),\label{eq:naive_regret}\end{equation}
holding with high probability. 
The issue is that the second term, growing linearly in $K$, depends on the number of regions as $\sqrt{N_h}$. As we will see in the second part of this paper, the application of this model to Smooth MDPs requires $N_h$ to be very large and also dependent on $K$.
For this reason, in the next section, we introduce an \textit{ad hoc} algorithm for Locally Linearizable MDPs to improve this dependence.

\begin{algorithm}[t]
    \caption{\textsc{Cinderella} (Constrained INDEpendent REgressions with Local Linear Approximations)}\label{alg:Cinderella} 
    {\small
    \begin{algorithmic}[1]
    \Require{Failure probability $\delta$, Regularization $\lambda$, Region mapping $\rho_h$, Feature mapping $\phi_h$}
    \State Initialize $\Lambda_{h,n}^1:=\lambda I$ for every $h\in [H], n\in [N_h]$\label{algo:line1}
    \For{$k=1,2,\dots K$}\label{algo:line2}
    \State Receive initial state $s_1^k$\label{algo:line3}
    \State Solve optimization program in \eqref{eq:optim1} obtaining $\overline \theta_{h,n}^k$ for every $h\in [H]$ and $ n\in [N_h]$\label{algo:line4}
    \State $Q_h[ {\overline {\bm{\theta}}_h^k}](z)=\phi_h(z)^\top \overline \theta_{h,\rho_h(z)}^k\ \ \forall h \in [H]$\label{algo:line5}
    \For{$h=1,2,\dots H-1$}
        \State Choose action $a_h^k \in \argmax_{a\in \As}Q_h[ {\overline{\bm{\theta}}_h^k}](s_h^k,a)$\label{algo:line6}
        \State Receive reward $r_h^k$ and next state $s_{h+1}^k$\label{algo:line7}
    \EndFor
    \EndFor
    \end{algorithmic}}
\end{algorithm}

\subsection{Algorithm}

As we have seen, even a wise application of the \textsc{Eleanor} algorithm is not enough to solve our setting of Locally Linearizable MDPs satisfyingly. Thus, we introduce a novel algorithm, \textsc{Cinderella} (Algorithm \ref{alg:Cinderella}).
Before analyzing it, we need to introduce some notation. Let us call $s_h^k, a_h^k, r_h^k$ the state, action, and reward relative to step $h$ of episode $k$. Moreover, we denote $z_h^k\coloneqq (s_h^k, a_h^k)$ and $\phi_h^k=\phi_h(z_h^k)$. At every episode $k\in [K]$, we compute an optimistic estimation of the $Q$-function for every step $h \in [H]$.
Then, we choose actions in order to maximize this function while fixing $s_1^k$ (line \ref{algo:line6}). Clearly, what is really important is how this surrogate $Q$-function is computed (line \ref{algo:line4}). Here, \textsc{Cinderella} relies on solving an optimization problem \eqref{eq:optim1}, which follows an idea similar to the one of \cite{zanette2020learning}. We want to optimize over three sets of variables: $\widehat \theta_{h,n},\overline \xi_{h,n}, \overline \theta_{h,n}$, the first one representing ridge regression of the linear parameter for region $n$ at step $h$, the second representing the uncertainty relative to this estimation, and the third one an "optimistic" estimate.
Under this view, the objective is to maximize the surrogate $V$-function in the first state, and the constraints are designed so that all variables match their intuitive interpretation. Formally, the optimization problem is defined as:
\begin{align}
    \max_{\widehat \theta_{h,n},\overline \xi_{h,n}, \overline \theta_{h,n}} & \max_{a\in \As}\phi_1(s_1^k,a)^\top \overline \theta_{1,\rho_1(s_1^k,a)}\label{eq:optim1}\\
    \text{s.t.}\qquad & \widehat \theta_{h,n} = {\Lambda_{h,n}^k}^{-1}\sum_{\tau=1}^{k-1}\mathbf 1\{\rho_h(z_h^\tau)=n\}\phi_{h}^\tau \left(r_h^\tau+\max_{a\in \As} \phi_{h+1}(s_{h+1}^\tau,a)^\top \overline \theta_{{h+1},\rho(s_{h+1}^\tau,a)}\right)\nonumber\\
    & \overline \theta_{h,n} = \widehat \theta_{h,n} + \overline \xi_{h,n}\nonumber\\
    & \|\overline \xi_{h,n}\|_{{\Lambda_{h,n}^k}}\le \sqrt{\alpha_{h,n}^k}.\nonumber
\end{align}

Where $\Lambda_{h,n}^k \coloneqq \sum_{\tau=1}^k \mathbf 1\{\rho_h(z_h^\tau)=n\}\phi_h^\tau {\phi_h^\tau}^\top + \lambda I,$ is the design matrix of $\lambda-$regularized ridge regression, and $\alpha_{h,n}^k$ is a constant determining the exploration rate which will be fixed in the analysis \ref{app:regetbound}. 
As it is for \textsc{Eleanor}, this algorithm turns out to be computationally inefficient, an issue that we discuss in the Appendix \ref{app:complexity}.
The first constraint enforces that $\widehat \theta_{h,n}$ is estimated with ridge regression having as target the (optimistic) value function estimated for step $h+1$, and the second constrain $\overline \theta_{h,n} = \widehat \theta_{h,n} + \overline \xi_{h,n}$ ensures that the optimistic estimate in every region is given by its mean estimate plus the uncertainty, while the third one bounds the magnitude of $\overline \xi_{h,n}$ so that this uncertainty shrinks the more data we collect.

Although the structure of the algorithm is directly inherited from \textsc{Eleanor}, \textsc{Cinderella} distinguishes itself by dividing the samples across the various regions of the state-action space of the MDP. This allows parameters associated with different regions to be learned independently.
Despite introducing another layer of technical difficulty in proving the regret bound, this procedure is relatively natural given the characteristics of our class of problems.

\textbf{\textsc{Cinderella}: Regret bound.}~~Having defined the algorithm, we can state a theorem showing a high probability regret bound in our setting.

\begin{thm}\label{thm:Cinderellaregret}
    Assume to be in an $\Is-$Locally Linearizable MDP with $L_\phi=\Os(1)$, $\sup_{n\in[N_h]}\Rs_{h,n}=\Os(\sqrt d_h)$ and that Assumption \ref{ass:normalized} holds. Then, with probability at least $1-\delta$, \textsc{Cinderella} (Algorithm \ref{alg:Cinderella}), with $\lambda=1$ achieves a regret bound of order
    $$R_K= \widetilde {\Os}\left (\sum_{h=1}^H N_hd_h\sqrt{K} + \sum_{h=1}^H \sqrt d_h \Is K\right ).$$
\end{thm}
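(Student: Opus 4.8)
The plan is to run the optimism-in-the-face-of-uncertainty template of \textsc{Eleanor}~\citep{zanette2020learning}, but to carry out both the concentration and the elliptical-potential arguments \emph{region by region}, so that the number of samples landing in region $n$ at stage $h$, namely $T_{h,n}:=\sum_{k=1}^K\mathbf 1\{\rho_h(z_h^k)=n\}$, enters its own estimates and the exact identity $\sum_{n=1}^{N_h}T_{h,n}=K$ can be used \emph{linearly}. First I would fix the exploration radius to $\alpha_{h,n}^k=\widetilde{\Theta}\big(d_h+N_{h+1}d_{h+1}+\mathcal I^2T_{h,n}\big)$ (with $\lambda=1$, $L_\phi=\Os(1)$ and $\sup_n\Rs_{h,n}=\Os(\sqrt{d_h})$ absorbed into the regularization/initialization bias), and establish the \emph{good event} $\Es$ on which, for every $h,n,k$, the ridge estimate $\widehat\theta_{h,n}^k$ from the first constraint of \eqref{eq:optim1} lies within $\|\cdot\|_{\Lambda_{h,n}^k}$-distance $\sqrt{\alpha_{h,n}^k}$ of a vector $\theta^\star_{h,n}$ attaining the infimum in Definition~\ref{defin:low_bell} for the backup $\Ts_hQ_{h+1}[\overline{\bm\theta}_{h+1}^k]$ of the next-stage optimistic $Q$-function. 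The delicate point is that the regression target at stage $h$ contains the data-dependent $\overline{\bm\theta}_{h+1}^k$; this is handled by a self-normalized martingale tail inequality applied uniformly over an $\e$-net of the product confidence region $\Bs_{h+1}$ (of dimension $N_{h+1}d_{h+1}$), whose log-covering number is $\widetilde{\Os}(N_{h+1}d_{h+1})$ — this is exactly where that term enters $\alpha$. The $\mathcal I$-bias of the target is split off and bounded deterministically: writing it as $\|\Phi^\top b\|_{(\Lambda_{h,n}^k)^{-1}}$ with $\|b\|_\infty\le\mathcal I$ and using $\Phi(\Phi^\top\Phi+\lambda I)^{-1}\Phi^\top\preceq I$ gives a contribution $\le\mathcal I\sqrt{T_{h,n}}$, with no extra $d_h$ or $N_h$ — the key gain over the feature-extension bound \eqref{eq:naive_regret}.

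On $\Es$, the second step is \emph{optimism}: a backward induction over $h$ using Definition~\ref{defin:low_bell} on each region shows that, at every episode, a feasible point of \eqref{eq:optim1} realizes a value function that upper-bounds $V_1^\star(s_1^k)$ up to an additive error telescoping to $\widetilde{\Os}\big(\sum_h\sqrt{d_h}\,\mathcal I\big)$ per episode. The third step is the standard regret decomposition $R_K\le\sum_k\big(\overline V_1^k(s_1^k)-V_1^{\pi_k}(s_1^k)\big)+(\text{optimism slack})$, followed by rolling out along the executed trajectory $z_1^k,\dots,z_H^k$: on $\Es$ each prediction gap $|\overline Q_h^k(z_h^k)-\Ts_h\overline V_{h+1}^k(z_h^k)|$ is at most $\mathrm{bonus}_h^k+\Os(\mathcal I)$ with $\mathrm{bonus}_h^k:=2\sqrt{\alpha_{h,\rho_h(z_h^k)}^k}\,\|\phi_h^k\|_{(\Lambda_{h,\rho_h(z_h^k)}^k)^{-1}}$ (the $\overline\xi$-constraint supplies one $\sqrt\alpha\,\|\phi\|_{\Lambda^{-1}}$, the event $\Es$ the other, and Definition~\ref{defin:low_bell} the $\Os(\mathcal I)$), while the residual $V^{\pi_k}$-terms form bounded martingale differences handled by Azuma–Hoeffding at $\widetilde{\Os}(\sqrt{HK})$.

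The core is then bounding $\sum_{k=1}^K\mathrm{bonus}_h^k$ for each $h$. I would split it as $\sum_{n=1}^{N_h}\sum_{k:\rho_h(z_h^k)=n}\mathrm{bonus}_h^k$, use monotonicity of $\alpha_{h,n}^k$ in $k$ to replace it by $\alpha_{h,n}^K$, and combine Cauchy–Schwarz with the elliptical-potential lemma (valid since $\|\phi_h^k\|_{(\Lambda_{h,n}^k)^{-1}}\le L_\phi/\sqrt\lambda=\Os(1)$): $\sum_{k:\rho=n}\|\phi_h^k\|_{(\Lambda_{h,n}^k)^{-1}}\le\sqrt{T_{h,n}}\sqrt{2d_h\log(1+T_{h,n}L_\phi^2/\lambda)}=\widetilde{\Os}(\sqrt{d_hT_{h,n}})$. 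Hence $\sum_{k:\rho=n}\mathrm{bonus}_h^k=\widetilde{\Os}\big((\sqrt{d_h+N_{h+1}d_{h+1}}+\mathcal I\sqrt{T_{h,n}})\sqrt{d_hT_{h,n}}\big)$. Summing over $n$: the first piece is $\widetilde{\Os}\big(\sqrt{d_h(d_h+N_{h+1}d_{h+1})}\sum_n\sqrt{T_{h,n}}\big)\le\widetilde{\Os}\big(\sqrt{d_h(d_h+N_{h+1}d_{h+1})N_hK}\big)$, which after summing over $h$ and applying AM–GM to the cross-term collapses to $\widetilde{\Os}\big(\sum_hN_hd_h\sqrt K\big)$; the second piece is $\widetilde{\Os}\big(\mathcal I\sqrt{d_h}\sum_nT_{h,n}\big)=\widetilde{\Os}\big(\mathcal I\sqrt{d_h}K\big)$ — here $\sum_nT_{h,n}=K$ is used linearly, which is precisely why no $\sqrt{N_h}$ appears. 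Adding the $\Os(\mathcal I)$ per-step terms ($HK\mathcal I\le\sum_h\sqrt{d_h}\mathcal I K$) and the Azuma term ($\sqrt{HK}\le\sum_hN_hd_h\sqrt K$), we obtain $R_K=\widetilde{\Os}\big(\sum_hN_hd_h\sqrt K+\sum_h\sqrt{d_h}\,\mathcal I K\big)$.

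I expect the main obstacle to be the first step: making the self-referential concentration rigorous under the per-region split — one must union-bound simultaneously over all $H$ stages, all $\sum_hN_h$ region-indexed regressions, all $K$ episodes, and an $\e$-net of each $\Bs_{h+1}$, and check that the per-region design matrices $\Lambda_{h,n}^k$ (which now grow only when a sample actually falls in region $n$) still satisfy the hypotheses of the self-normalized bound. A secondary difficulty is propagating the $\mathcal I$-slack through the backward induction for optimism without incurring an extra $N_h$ factor, which requires the linearization error to be controlled region-by-region exactly as in Definition~\ref{defin:low_bell} rather than globally.
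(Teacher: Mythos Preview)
Your proposal is correct and follows essentially the same route as the paper: per-region self-normalized concentration with a union bound over an $\varepsilon$-net of $\Vs_{h+1}$ (contributing the $N_{h+1}d_{h+1}$ term to $\alpha$), the deterministic bias bound $\mathcal I\sqrt{T_{h,n}}$ via the projector inequality (the paper invokes the equivalent Lemma~8 of \cite{zanette2020learning}), feasibility of the quasi-optimal parameter for optimism, the trajectory telescoping with Azuma for the martingale residual, and the per-region elliptical-potential sum where $\sum_n T_{h,n}=K$ is exploited linearly on the $\mathcal I$-piece. The only cosmetic differences are that the paper applies Cauchy--Schwarz globally over $k$ before splitting by region (rather than per region as you do), and that your explicit AM--GM step collapsing $\sqrt{N_hd_h\cdot N_{h+1}d_{h+1}}$ into $\sum_h N_hd_h$ makes precise an index shift the paper leaves implicit.
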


The proof of this result is long,
and is deferred to sections \ref{app:loclin} and \ref{app:regret} of the appendix. Note that the two normalization assumptions $L_\phi=\Os(1)$ and $\sup_{h\in[H],n\in[N_h]}\Rs_{h,n}=\Os(\sqrt d_h)$ correspond to the ones enforced by \cite{zanette2020learning}. 
While for $N_h=1$ the two reported bounds coincide, Theorem \ref{thm:Cinderellaregret} proves superior to the regret bound obtained for \textsc{Eleanor} in our setting (Equation \ref{eq:naive_regret}), as it prevents the second term, which is linear in $K$, to depend on $N_h$. This dramatically changes the potential of the regret bounds and, as we will see in the next sections, represents the key to achieving sub-linear regret bounds for RL in continuous state-action spaces.

\section{From local linearity to Mildly Smooth MDPs}\label{sec:msmdp}

Having proved that our \textsc{Cinderella} algorithm enjoys an improved regret bound on Locally Linearizable MDPs, we need to see how powerful this new class is once applied to problems with continuous state-action spaces for which a representation is not given a priori.
To this aim, we are going to define a 
family of continuous-space MDPs, and prove that they are included in the Locally Linearizable class for a particular choice of $\phi_h,\Us_h$.
From now on, we assume, without loss of generality, that $\Ss = [-1,1]^{d_S}$ and $\As=[-1,1]^{d_A}$, so that $\Zs=[-1,1]^d$. We call Mildly Smooth MDP a process where the Bellman optimality operator outputs functions that are smooth.

\begin{figure}
    \centering
    \includegraphics[scale=0.7]{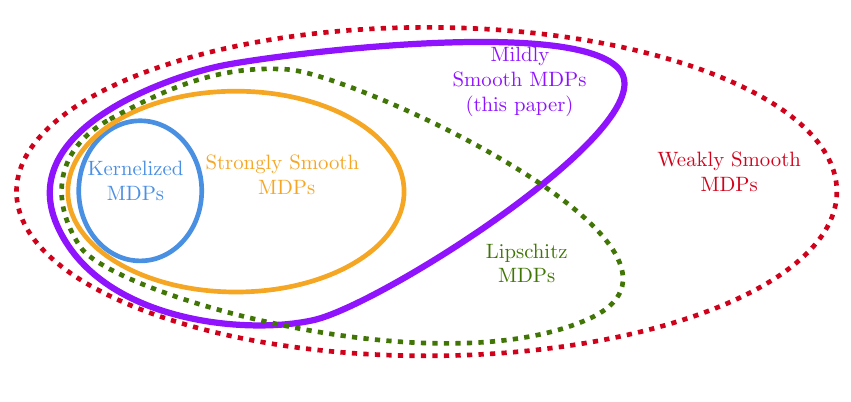}
    \caption{Relation between the setting described in this paper and the other settings proposed for reinforcement learning in continuous state-action spaces. The dashed line means that inclusion holds, but passing to the larger family brings a $\text{exp}(H)$ lower bound on the regret. As we can see, the Mildly smooth MDP is the largest known setting for which regret of order $\text{poly(H)}$ is possible. Note that the Strongly Smooth family also contains known families like LQRs and Linear MDPs with smooth feature map \cite{maran2024no}.}
    \label{fig:eulervenn}
\end{figure}

\begin{defin}\textit{(Mildly Smooth MDP)}\label{def:moderate1}. An MDPs is \emph{Mildly Smooth} of order $\nu$ if, for every $h\in [H]$, the Bellman optimality operator $\mathcal{T}_h$ is bounded on $L^\infty(\Zs) \to \mathcal C^{\nu}(\Zs)$.
\end{defin}

Boundedness over $L^\infty(\Zs) \to \mathcal C^{\nu}(\Zs)$ means that the operator transforms functions that are bounded (i.e., belong to $L^\infty(\Zs)$) into functions that are $\nu$-times differentiable (i.e., belong to $C^{\nu}(\Zs)$). Moreover, there exists a constant $C_{\mathcal T} < +\infty$ such that $\|\mathcal T_h f\|_{\mathcal C^{\nu}} \le C_{\mathcal T} (\|f\|_{L^\infty}+1)$ for every $h\in [H]$ and every function $f\in L^\infty(\Zs)$. 
Intuitively, this condition means that by applying the Bellman operator to bounded (possibly non-smooth) functions, we always get functions that are smooth. 

In order to reduce this family of processes to Locally Linearizable MDPs, we have to design the partition of the state-action space into sets $\Zs_{h,n}$. 
Since $\Zs\subset [-1,1]^d$, we can find, for every $\varepsilon>0$, a set $\Zs^\varepsilon$ which is an $\varepsilon$-cover of $\Zs$ in the infinity norm, such that
$|\Zs^\varepsilon|\eqqcolon N\le (2/\varepsilon)^d$.
Now, for every $z^n\in \Zs^\varepsilon$, we define recursively $\Zs_n$ to be the set of points which are near to $z^n$, 
formally:

\begin{equation}\Zs_1:=\{z\in \Zs: \|z-z^1\|_\infty \le \varepsilon\},\qquad \Zs_n:=\{z\in \Zs: \|z-z^n\|_\infty \le \varepsilon\}\setminus \cup_{\ell=1}^{n-1}\Zs_\ell.\label{eq:zset1}\end{equation}

By definition, every point of $\Zs$ is matched with a point $z^n$ of the cover $\Zs^\varepsilon$ and, importantly, is assigned to exactly one subset $\Zs_n$ of $\Zs$. This way, we have defined a partition $\{\Zs_n\}_{n=1}^N$ of $\Zs$, one that does not depend on the step $h$. Secondly, we have to choose the feature map $\phi_h(\cdot)$. To this end, we define $\phi_h(z)$ as the vector of \emph{Taylor polynomials of degree} $\nu_*$ centered in $z^n\in \Zs^\varepsilon$ for $n=\rho(z)$ (just ``Taylor feature map'' in the following). This means that, fixed $z\in \Zs$ such that $\rho(z)=n$, the map $\phi(z)$ will contain terms of the form $(z-z^n)^{\bm \alpha}$ for every multi-index $\|\bm \alpha\|_1\le \nu_*$. Note that the feature map does not depend on the time-step $h$ either. The dimension of this feature map, which we call $d_{\nu_*}$, corresponds to the number of non-negative multi-indexes such that $\|\bm \alpha\|_1\le \nu_*$, which is well-known to be $d_{\nu_*}=\binom{\nu_*+d}{\nu_*}\le \nu_*^d$.
The power of this choice lies in the fact that, as it is well known from mathematical analysis, any $\Cs^\nu$ function can be approximated by a Taylor polynomial of degree $\nu_*$ in a neighborhood of diameter $\varepsilon$ with an error of order $\varepsilon^{\nu}$. Seeing the regions $\Zs_n$ we have defined as neighborhoods of the points $z^n$ of the cover, the analogy is complete. This argument is made formal in the following result.

\begin{thm}
    Let $M$ be a Mildly Smooth MDP, and $\varepsilon\le 1/(2C_{\mathcal T}H)^{1/\nu}$. Then, choosing $\Zs_{h,n}$ as in Equation \eqref{eq:zset1} and taking $\phi_h$ as the Taylor feature map on the same regions, the tuple $(M,\{\Zs_{h,n}\}_{n,h},\{\phi_h\}_h)$ is an $\Is-$Locally Linearizable MDP with
    $$L_\phi=1+2\sqrt d_{\nu_*}, \qquad \Rs_{h,n}=2\sqrt d_{\nu_*}C_{\mathcal T},\qquad \Is\le 2C_{\mathcal T} \varepsilon^\nu.$$
\end{thm}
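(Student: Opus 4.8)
The statement bundles three quantitative claims: a bound on the feature norm $L_\phi$, a bound on the diameters $\Rs_{h,n}$ of the parameter sets, and a bound on the inherent Bellman error $\Is(\phi,\Us)$. The first depends only on the $\phi_h$, while the other two require us to exhibit a good family $\{\Bs_{h,n}\}$; the inherent Bellman error is the substantive part and rests on the classical Taylor‑approximation estimate for $\mathcal C^\nu$ functions, localized to the cells $\Zs_{h,n}$. The plan is therefore: dispatch $L_\phi$ directly, fix the $\Bs_{h,n}$, and then bound $\Is$ region by region.

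For $L_\phi$: every $z$ lies in a unique cell $\Zs_{h,n}$ and there satisfies $\|z-z^n\|_\infty\le\varepsilon\le 1$, so each entry $(z-z^n)^{\bm\alpha}$ of $\phi_h(z)$ has modulus $\le\varepsilon^{\|\bm\alpha\|_1}\le 1$; as $\phi_h(z)$ has $d_{\nu_*}$ entries, $\|\phi_h(z)\|_2\le\sqrt{d_{\nu_*}}\le 1+2\sqrt{d_{\nu_*}}$. For the parameter sets I would take $\Bs_{h,n}$ to be the collection of coefficient vectors $\theta=(D^{\bm\alpha}g(z^n)/\bm\alpha!)_{\|\bm\alpha\|_1\le\nu_*}$ arising from functions $g=\mathcal T_h f$ where $f$ ranges over stage-$(h{+}1)$ functions whose induced value function is truncated to $[0,1]$. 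Since Mild Smoothness only needs boundedness (not smoothness) of the argument, $\|g\|_{\mathcal C^\nu}\le C_{\mathcal T}(\|f\|_{L^\infty}+1)\le 2C_{\mathcal T}$, and because $\bm\alpha!\ge 1$ each coordinate of $\theta$ is $\le\|g\|_{\mathcal C^\nu}$ in modulus, whence $\|\theta\|_2=O(\sqrt{d_{\nu_*}}C_{\mathcal T})$; tracking the normalization this gives $\Rs_{h,n}=\mathrm{diam}(\Bs_{h,n})\le 2\sqrt{d_{\nu_*}}C_{\mathcal T}$.

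For the inherent Bellman error, fix $h$ and $\bm\theta_{h+1}\in\Bs_{h+1}$ and put $g:=\mathcal T_h Q_{h+1}[\bm\theta_{h+1}]$. Since $Q_{h+1}[\bm\theta_{h+1}]$ is bounded (its truncated value function has sup-norm $\le 1$), Mild Smoothness gives $g\in\mathcal C^\nu(\Zs)$ with $\|g\|_{\mathcal C^\nu}\le 2C_{\mathcal T}$ — this is exactly the step that uses the defining property of the class, that $\mathcal T_h$ smooths irrespective of how rough its input is. Now choose $\bm\theta_h$ cell-wise: on $\Zs_{n}$ let $\theta_{h,n}$ be the normalized partials of $g$ at $z^{n}$, so that $Q_h[\bm\theta_h](\cdot)=\phi_h(\cdot)^\top\theta_{h,\rho(\cdot)}$ is precisely the degree-$\nu_*$ Taylor polynomial of $g$ about the centre of the current cell; by the construction of $\Bs_h$ one has $\bm\theta_h\in\Bs_h$. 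The standard Taylor estimate (reduce to one dimension along the segment $[z^n,z]\subset[-1,1]^d$ and use the $\mathcal C^\nu$ Hölder control of the top-order partials) then yields, for $z\in\Zs_n$,
\[
\bigl|Q_h[\bm\theta_h](z)-g(z)\bigr|\;\le\;L_\nu(g)\,\|z-z^n\|_\infty^{\nu}\;\le\;\|g\|_{\mathcal C^\nu}\,\varepsilon^{\nu}\;\le\;2C_{\mathcal T}\,\varepsilon^{\nu}.
\]
Taking $\sup_{z\in\Zs}$, then $\inf_{\bm\theta_h\in\Bs_h}$ (attained by the above choice), then $\sup_{\bm\theta_{h+1}\in\Bs_{h+1}}$ and $\max_{h\in[H]}$ gives $\Is(\phi,\Us)\le 2C_{\mathcal T}\varepsilon^\nu$, i.e.\ the claimed membership in the $\Is$-Locally Linearizable class.

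The step I expect to be the main obstacle is the \emph{self-consistency} of the family $\{\Bs_{h,n}\}$: for $\inf_{\bm\theta_h\in\Bs_h}\|\cdot\|_{L^\infty}\le\Is$ to hold for \emph{every} $\bm\theta_{h+1}\in\Bs_{h+1}$, the cell-wise Taylor coefficients of $\mathcal T_h Q_{h+1}[\bm\theta_{h+1}]$ must land back inside $\Bs_h$, and this closure must hold simultaneously at all $H$ stages without the diameters $\Rs_{h,n}$ growing geometrically in $H$. This calls for a backward induction on $h$ together with the $[0,1]$-truncation, and it is exactly here that the hypothesis $\varepsilon\le(2C_{\mathcal T}H)^{-1/\nu}$ is used: it forces the per-step error $2C_{\mathcal T}\varepsilon^\nu\le 1/H$, so the represented value functions stay uniformly close to $[0,1]$ over the whole horizon and the $\Bs_{h,n}$ can be chosen uniform in $h$ with the stated diameter. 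A lesser nuisance is the dimensional constant in the multivariate Taylor remainder, which has to be carried carefully (or absorbed into the norm conventions, as in the appendix) to reach the clean constants $2C_{\mathcal T}\varepsilon^\nu$ and $2\sqrt{d_{\nu_*}}C_{\mathcal T}$.
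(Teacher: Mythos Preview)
Your overall architecture---Taylor remainder on each cell, Mild Smoothness to control $\|\mathcal T_h Q_{h+1}[\bm\theta_{h+1}]\|_{\mathcal C^\nu}$, then backward induction to close the sets $\Bs_{h,n}$---is exactly the paper's. Your $L_\phi$ bound is in fact tighter than the paper's (you use $\|z-z^n\|_\infty\le\varepsilon$ on each cell, the paper only uses $\|z-z^n\|_\infty\le 2$), and your identification of the self-consistency of $\{\Bs_{h,n}\}$ as the crux, together with the role of $\varepsilon\le(2C_{\mathcal T}H)^{-1/\nu}$, is spot on.

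The one place where your sketch diverges from the paper and would not go through as written is the mechanism you invoke to keep $\|Q_{h+1}[\bm\theta_{h+1}]\|_{L^\infty}\le 1$: you appeal to ``$[0,1]$-truncation''. But in this paper the function class $\Qs_h$ is \emph{not} clipped (Equation~\eqref{eq:Qarchetype}), and neither is the algorithm; the inherent Bellman error in Definition~\ref{defin:low_bell} is computed on the raw locally-linear $Q_{h+1}[\bm\theta_{h+1}]$, so you cannot replace it by its truncation without changing the object you are bounding. The paper's fix is to build the sup-norm control directly into $\Bs_{h,n}$: it sets
\[
\Bs_{h,n}=\bigl\{\theta:\ \|\phi_h(\cdot)^\top\theta\|_{L^\infty(\Zs_n)}\le A_h,\ \|\theta\|_\infty\le A_h'\bigr\},
\]
with $A_h=(H-h)/H$ and $A_h'=2C_{\mathcal T}$. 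The backward induction then checks two recursions: $A_h\ge A_{h+1}+C_{\mathcal T}(1+A_{h+1})\varepsilon^\nu$ (sup-norm of the Taylor approximant stays below $A_h$) and $A_h'\ge C_{\mathcal T}(1+A_{h+1})$ (coefficient bound). The first is exactly where $\varepsilon^\nu\le 1/(2C_{\mathcal T}H)$ enters, giving an increment $\le 1/H$ per step, so the $A_h$ stay in $[0,1]$ without any clipping. With $\bm\theta_{h+1}\in\Bs_{h+1}$ one then has $\|Q_{h+1}[\bm\theta_{h+1}]\|_{L^\infty}\le A_{h+1}\le 1$, which is what feeds the Mild-Smoothness bound $\|\mathcal T_h Q_{h+1}[\bm\theta_{h+1}]\|_{\mathcal C^\nu}\le C_{\mathcal T}(1+A_{h+1})\le 2C_{\mathcal T}$ and yields both $\Is\le 2C_{\mathcal T}\varepsilon^\nu$ and $\Rs_{h,n}\le\sqrt{d_{\nu_*}}\,A_h'=2\sqrt{d_{\nu_*}}C_{\mathcal T}$. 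Replace your truncation with this explicit $A_h$ schedule and your proof goes through.
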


This reduction shows how general the class of Locally Linearizable MDPs is and enables us to tackle Mildly Smooth MDPs with the \textsc{Cinderella} algorithm, originally designed for Locally Linearizable MDPs. By appropriately selecting the parameter $\varepsilon$, we can prove the 
following theorem, bounding the regret of a Locally Linearizable MDP with smoothness $\nu$ and state-action space of dimension $d$.

\begin{thm}
    \label{thm:main}
    Let $M$ be a Mildly smooth MDP of parameter $\nu>0$ satisfying Assumption \ref{ass:normalized}. With probability at least $1-\delta$, \textsc{Cinderella}, initialized with $\lambda=1$, $\Zs_{h,n}$ as in Equation \eqref{eq:zset1} and $\phi_h$ given by the Taylor feature map on the same regions, achieves a regret bound of order:
    $$R_K\le \widetilde {\Os}\left (Hd_{\nu_*}K^{\frac{\nu+2d}{2\nu+2d}}+H^{\frac{2\nu+2d}{\nu}}\right ).$$
\end{thm}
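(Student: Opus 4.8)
The plan is to combine the two results already established — the reduction of a Mildly Smooth MDP to a Locally Linearizable MDP (the reduction theorem above) and the regret bound for \textsc{Cinderella} on Locally Linearizable MDPs (Theorem~\ref{thm:Cinderellaregret}) — and then optimize over the free parameter $\varepsilon$. The first thing I would check is that the hypotheses of Theorem~\ref{thm:Cinderellaregret} are met by the representation produced by the reduction: there $L_\phi = 1 + 2\sqrt{d_{\nu_*}}$ and $\Rs_{h,n} = 2\sqrt{d_{\nu_*}}\,C_{\mathcal T}$, and since $d_{\nu_*} = \binom{\nu_*+d}{\nu_*}$ and $C_{\mathcal T}$ depend only on $\nu$ and $d$ (not on $H$ or $K$), we indeed have $L_\phi = \Os(1)$ and $\sup_n \Rs_{h,n} = \Os(\sqrt{d_{\nu_*}})$, so Theorem~\ref{thm:Cinderellaregret} applies and these quantities may be treated as constants inside $\widetilde\Os$.

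Next I would substitute the parameters of the reduced instance — $N_h = N \le (2/\varepsilon)^d$, $d_h = d_{\nu_*}$ for every $h$, and $\Is \le 2 C_{\mathcal T}\varepsilon^{\nu}$ — into the bound of Theorem~\ref{thm:Cinderellaregret}, obtaining
\[
R_K = \widetilde\Os\!\left(H\,d_{\nu_*}\,\varepsilon^{-d}\sqrt{K} \;+\; H\,\sqrt{d_{\nu_*}}\,\varepsilon^{\nu}K\right),
\]
where the factors $2^d$ and $C_{\mathcal T}$ are absorbed into the $\widetilde\Os$. The two terms are balanced by the choice $\varepsilon = \varepsilon^\star := K^{-1/(2\nu + 2d)}$, for which $\varepsilon^{-d}\sqrt K = \varepsilon^{\nu}K = K^{(\nu+2d)/(2\nu+2d)}$, so that each term becomes $\widetilde\Os\!\big(H d_{\nu_*} K^{(\nu+2d)/(2\nu+2d)}\big)$.

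The only subtlety — and the one point that requires a two-regime argument — is that the reduction theorem requires $\varepsilon \le (2C_{\mathcal T}H)^{-1/\nu}$. If $\varepsilon^\star$ already satisfies this, i.e.\ $K \ge (2C_{\mathcal T}H)^{(2\nu+2d)/\nu}$, I take $\varepsilon = \varepsilon^\star$ and obtain $R_K = \widetilde\Os\!\big(H d_{\nu_*} K^{(\nu+2d)/(2\nu+2d)}\big)$. Otherwise $K < (2C_{\mathcal T}H)^{(2\nu+2d)/\nu}$, and I simply invoke the trivial bound $R_K \le K < (2C_{\mathcal T}H)^{(2\nu+2d)/\nu} = \Os\!\big(H^{(2\nu+2d)/\nu}\big)$, again absorbing $C_{\mathcal T}$. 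Adding the two cases yields $R_K \le \widetilde\Os\!\big(H d_{\nu_*} K^{(\nu+2d)/(2\nu+2d)} + H^{(2\nu+2d)/\nu}\big)$, which is the claimed bound. I do not expect a real obstacle here: all the heavy lifting sits in Theorem~\ref{thm:Cinderellaregret} and in the reduction theorem; the content of this proof is the elementary balancing of $\varepsilon$, plus the bookkeeping of the lower-order additive term $H^{(2\nu+2d)/\nu}$ that the upper constraint on $\varepsilon$ forces into the bound.
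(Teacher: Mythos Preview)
Your proposal is correct and follows essentially the same route as the paper: plug the parameters from the reduction theorem ($N_h\le(2/\varepsilon)^d$, $d_h=d_{\nu_*}$, $\Is\le 2C_{\mathcal T}\varepsilon^\nu$) into the \textsc{Cinderella} regret bound, balance by setting $\varepsilon=K^{-1/(2\nu+2d)}$, and handle the constraint $\varepsilon\le(2C_{\mathcal T}H)^{-1/\nu}$ via the two-regime split with the trivial bound $R_K\le K$ in the small-$K$ case. The only cosmetic point is that $L_\phi=1+2\sqrt{d_{\nu_*}}$ is not literally $\Os(1)$ when $d_{\nu_*}$ is tracked; the paper sidesteps this by invoking the more general regret bound (Theorem~\ref{thm:regretbound}) rather than the simplified Theorem~\ref{thm:Cinderellaregret}, but since you treat $\nu,d$ as fixed problem constants this is harmless and the resulting bound is the same.
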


Before comparing our result with the state of the art, some comments are due. First note that the exponent of $K$ is $\frac{\nu+2d}{2\nu+2d}$, which is always in $(1/2,1)$. This means that the no-regret property is achieved \textit{in every regime}. Two elements in this regret bound are undesirable: i) the exponential dependence in $d$ (as $d_{\nu_*} \le \nu_{*}^d$) and ii) the lower-order term $H^{\frac{2\nu+2d}{\nu}}$, which has an exponent that may be very large, albeit polynomial in $H$.
The first issue is discussed in the appendix (Section \ref{app:d}). We show that even for the much simpler continuous \textit{bandit} problem, lower bounds entail that the problem is not learnable unless $d=o(\log(K))$. Therefore, terms of order $2^d$ can still be seen as $o(K^\alpha)$ for an arbitrarily small $\alpha>0$. For the second issue, note that the exponent of $H$ in the lower order term is significantly large only if $d\gg \nu$. This regime is known to be very difficult, and in the literature before this paper it was not even possible to achieve the no-regret property (even just for $d>2\nu$).

We end this section with a simple corollary showing how to deal with the "large $\nu$" regime.
\begin{cor}
    Under the assumption of theorem \ref{thm:main}, for $\nu>\log(K)$ we have
    $$R_K=\widetilde {\Os}\left (H\log(K)^dK^{\frac{1}{2}}+H^{2}\right ).$$
\end{cor}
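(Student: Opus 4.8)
The plan is \emph{not} to simply substitute $\nu \leftarrow \log K$ into Theorem~\ref{thm:main} — that would leave the possibly enormous factor $d_{\nu_*} \le \nu_*^d$ untouched — but instead to re-run the whole Locally-Linearizable reduction at a \emph{truncated} smoothness order $\nu' := \log K$ (we may assume $K \ge e$, otherwise $R_K \le K$ makes the claim trivial; note $\nu' < \nu$ by hypothesis). First I would observe that $M$, being Mildly Smooth of order $\nu$, is also Mildly Smooth of order $\nu'$ for every $\nu' \le \nu$. This is the continuity of the embedding $\mathcal{C}^{\nu}(\Zs) \hookrightarrow \mathcal{C}^{\nu'}(\Zs)$ on the bounded set $\Zs = [-1,1]^d$: there is a constant $c_d$ (depending only on $d$, of order $\sqrt d$) with $\|f\|_{\mathcal{C}^{\nu'}} \le c_d \|f\|_{\mathcal{C}^{\nu}}$ for all $f$. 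Indeed, the derivatives up to order $\nu'_* \le \nu_*$ are directly dominated by $\|f\|_{\mathcal{C}^\nu}$; and the top Hölder seminorm $L_{\nu'}(f)$ of the order-$\nu'_*$ derivatives is controlled either by a one-higher-order derivative of $f$ together with convexity of $[-1,1]^d$ (when $\nu'_* < \nu_*$), or by $L_\nu(f)$ times the diameter of $\Zs$ (when $\nu'_* = \nu_*$). Hence $\|\mathcal{T}_h f\|_{\mathcal{C}^{\nu'}} \le c_d\,C_{\mathcal{T}}(\|f\|_{L^\infty}+1)$, so the Mild-Smoothness constant only picks up a dimension-dependent factor that is absorbed into $\widetilde{\Os}$.

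Next I would apply Theorem~\ref{thm:main} to $M$ read as a Mildly Smooth MDP of order $\nu'$ — i.e.\ run \textsc{Cinderella} with $\lambda = 1$, the partition of Equation~\eqref{eq:zset1}, and the Taylor feature map of degree $\nu'_* = \lceil \log K - 1\rceil$ — which yields, with probability at least $1-\delta$,
\[
R_K \le \widetilde{\Os}\!\left(H\, d_{\nu'_*}\, K^{\frac{\nu'+2d}{2\nu'+2d}} + H^{\frac{2\nu'+2d}{\nu'}}\right),
\]
and then simplify the three quantities using $\nu' = \log K$ and the elementary identity $K^{1/\log K} = e$. For the dimension, $d_{\nu'_*} \le (\nu'_*)^d \le (\log K)^d$. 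For the power of $K$, $\frac{\nu'+2d}{2\nu'+2d} = \tfrac12 + \frac{d}{2\nu'+2d} \le \tfrac12 + \frac{d}{2\log K}$, so $K^{\frac{\nu'+2d}{2\nu'+2d}} \le K^{1/2}\,K^{d/(2\log K)} = e^{d/2}\,K^{1/2} \le 2^{d}\,K^{1/2}$, the $K$-independent factor being of the same flavour as the $(\log K)^d$ already present. For the lower-order term, $H^{\frac{2\nu'+2d}{\nu'}} = H^{2}\,H^{2d/\log K} \le e^{2d}\,H^{2}$ as soon as $H \le K$ (and if $H > K$ the statement is vacuous since $R_K \le K < H$). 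Collecting these three simplifications gives exactly $R_K \le \widetilde{\Os}\!\left(H(\log K)^d K^{1/2} + H^2\right)$.

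I expect the only non-routine step to be the first one: recognising that the corollary is a \emph{re-instantiation} of the reduction at a smaller smoothness order rather than a plug-in, together with the verification that Mild Smoothness is downward-closed in $\nu$ with an explicit, dimension-only constant. Everything after that is bookkeeping around $K^{1/\log K} = e$.
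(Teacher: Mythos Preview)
Your proposal is correct and follows essentially the same route as the paper: downgrade the smoothness order to $\nu' = \log K$ using the monotonicity $\|\cdot\|_{\mathcal{C}^{\nu'}} \le c\,\|\cdot\|_{\mathcal{C}^{\nu}}$, re-apply Theorem~\ref{thm:main} at order $\nu'$, and simplify the three resulting quantities via $K^{1/\log K}=e$. You are in fact slightly more careful than the paper on two points --- the dimension-dependent embedding constant $c_d$ and the treatment of the $H^{2d/\log K}$ factor in the lower-order term --- but the argument is the same.
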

\begin{proof}
    Applying theorem \ref{thm:main} for $\nu$ arbitrarily large does not work, as $d_{\nu_*}\approx \nu^d$ makes the bound vacuous.
    Still, note that, being $\|\cdot\|_{\mathcal C^{\nu'}}\le \|\cdot\|_{\mathcal C^{\nu'}}$ for $\nu'\le \nu$, under the assumption of the theorem we can take $\nu=\log(K)$. At this point, theorem \ref{thm:main} ensures
    
    $$R_K\le \widetilde {\Os}\left (H\log(K)^dK^{\frac{\log(K)+2d}{2\log(K)+2d}}+H^{2}\right )=\widetilde {\Os}\left (H\log(K)^dK^{\frac{1}{2}}+H^{2}\right ),$$
    
    due to the fact that $K^{\frac{\log(K)+2d}{2\log(K)+2d}}=K^{\frac{1}{2}}K^{\frac{d}{2\log(K)+2d}}=K^{\frac{1}{2}}2^{\frac{d\log(K)}{2\log(K)+2d}}\le 2^dK^{\frac{1}{2}}$.
\end{proof}

The appearance of $\log(K)^d$
 should not scare: this term is necessary even in the simpler case of bandits with squared exponential kernel, as the lower bound in \cite{vakili2021information} shows.

\newcommand{\resultrow}[5]{ #1 #3 #4 #2 #5 }
\begin{table*}[t]
    \centering
  \begin{tabular}{llllll}
    \toprule
    Algorithm  \resultrow{& Weakly}{& Strongly}{&Lipschitz}{& \bfseries Mildly}{&Kernelized} \\
    \midrule
    \cite{maran2024no} \textsc{Legendre-Eleanor}\resultrow{&$\green{K^{\frac{\nu+3d/2}{d+2\nu}}}$ }{ &$K^{\frac{\nu+3d/2}{2\nu+d}}$ }{ &$K^{\frac{1+3d/2}{2+d}}$ }{&$K^{\frac{\nu+3d/2}{2\nu+d}}$ }{ &$K^{\frac{\nu+3d/2}{2\nu+d}}$}\\
    \cite{jin2021bellman} \textsc{Golf} \resultrow{& $K^{\frac{2\nu+3d}{4\nu}}$}{ & $K^{\frac{2\nu+3d}{4\nu}}$}{ & $K^{\frac{2+3d}{4}}$}{& $K^{\frac{2\nu+3d}{4\nu}}$}{& $K^{\frac{2\nu+3d}{4\nu}}$}\\

    \cite{song2019efficient} \textsc{Net-Q-Learning} \resultrow{&\xmark }{ &$K^{\frac{1+d}{2+d}}$ }{ &\green{$K^{\frac{1+d}{2+d}}$} }{ &\xmark }{ &$K^{\frac{1+d}{2+d}}$}\\

    \rowcolor{vibrantGrey!10}          \phantom{[00]} \textsc{Cinderella} \bfseries{(Ours)} \resultrow{& \xmark}{ & $\green{K^{\frac{\nu+2d}{2\nu+2d}}}$ }{& \xmark }{& $\green{K^{\frac{\nu+2d}{2\nu+2d}}}$ }{& $\green{K^{\frac{\nu+2d}{2\nu+2d}}}$}\\

    \cite{maran2024no} \textsc{Legendre-LSVI}  \resultrow{& \xmark}{ & $K^{\frac{\nu+2d}{2\nu+d}}$ }{& \xmark }{ & \xmark }{& $K^{\frac{\nu+2d}{2\nu+d}}$}\\
    \cite{yang2020provably} \textsc{KOVI} \resultrow{& \xmark}{ & \xmark}{ & \xmark}{ & \xmark}{ & $K^{\frac{\nu+3d/2}{2\nu+d}}$}\\
    \midrule
    $\text{exp}(H)$ lower bound  \resultrow{&\red{Yes}}{ & No}{ &\red{Yes}}{ & No}{ & No}\\
    \bottomrule
  \end{tabular}
    \caption{\label{tab:sota}Table containing the order w.r.t. $K$ of the regret guarantee of each algorithm for each setting discussed in the paper. Columns correspond to different smoothness assumptions: Weakly and Strongly Smooth MDPs were defined in \cite{maran2024no}, Lipschitz MDPs in \cite{rachelson2010locality}, and Kernelized MDPs in \cite{yang2020provably}. Rows correspond to algorithms with no-regret guarantees for some of the settings. \cite{maran2024no,song2019efficient,vakili2024kernelized} represented the state of the art for Strongly smooth MDPs, Lipschitz MDPs, and Kernelized MDPs, respectively. The last row indicates whether the corresponding setting is feasible or if there exists an $\exp(H)$ lower bound for the regret.}
\end{table*}


\section{Comparison with related works}\label{sec:related}


Regret bounds for continuous MDPs have been an area of intense research in recent years. While many parametric families like LQRs have been shown to achieve $\text{poly}(H)\sqrt K$ regret bounds \cite{abbasi2011regret}, tackling this problem in more general cases has been proved to be very challenging.

Kernelized MDPs \citep{chowdhury2019online,yang2020provably,domingues2021kernel} are a representation class of processes assuming that the transition function $p_h(\cdot)$ as well as the reward function $r_h(\cdot)$ belong to a Reproducing Kernel Hilbert Space (RKHS)
with given kernel $k(\cdot,\cdot)$. Most of the literature deals with the case of the Matérn kernel. The smoothness of this kernel is determined by a parameter $\nu>0$; by fixing it we can compare this family with the other families of continuous MDPs. The best-known result \cite{yang2020provably} in this setting only achieves regret $\widetilde {\Os}  (K^{\frac{\nu+3d/2}{2\nu+d}})$, which is vacuous if $d>2\nu$. Very recently \cite{vakili2024kernelized} presented a regret bound of order $\widetilde {\Os}  (K^{\frac{\nu+d}{2\nu+d}} )$, but we were not able to verify the correctness of this result. We discuss a possible subtle issue of the proof in Appendix~\ref{app:criticism}.
Another family that is based on assuming that $p_h$ and $r_h$ belong to some given functional space is the Strongly Smooth MDP~\cite{maran2024no}. 
This family assumes that $p_h(s'|\cdot),r_h(\cdot)\in \Cs^{\nu}(\Zs)$. A subtle difference is that in~\cite{maran2024no}, the smoothness index $\nu$ was restricted to be an integer, while in our case, it is a generic real $\nu>0$. Regret bounds for this family were shown of order $\widetilde {\Os}  (K^{\frac{\nu+2d}{2\nu+d}} )$, which is vacuous even for $d>\nu$. Since, for the same $\nu$, the Matèrn kernel RKHS is a subset of $\Cs^{\nu}(\Zs)$ (see Appendix \ref{app:relation}), Strongly Smooth MDPs are more general than the former family. Further increasing the generality, we have Lipschitz MDPs \cite{rachelson2010locality}. This is only a superset of the Strongly Smooth MDPs for $\nu=1$
, as Lipschitz MDPs do not admit higher levels of smoothness. A regret guarantee of order $\widetilde {\Os}  (K^{\frac{1+d}{2+d}} )$, which turns out to be optimal for this setting, was achieved by different algorithms \cite{sinclair2019adaptive,song2019efficient,sinclair2020adaptive,le2021metrics} in recent years. Unfortunately, it was recently shown that an \emph{exponential dependence on the horizon $H$} is unavoidable~\citep{maran2024no}. Therefore, the Lipschitz MDPs, as any of their generalizations, are intrinsically unfeasible.

Increasing the generality even further, we find the family of Weakly Smooth MDPs~\citep{maran2024no}, which imposes the Bellman optimality operator $\mathcal{T}_h$ to be bounded on $\mathcal C^{\nu}(\Zs) \to \mathcal C^{\nu}(\Zs)$. For fixed $\nu$, this assumption generalizes Strongly smooth MDPs (with the same $\nu$), and, for $\nu=1$, the Lipschitz MDPs. For this reason, also this family is affected by an $\text{exp}(H)$ regret lower bound, while in terms of $K$ its regret has been bounded as $\widetilde {\Os}  (K^{\frac{\nu+3d/2}{2\nu+d}} )$, the same as Kernelized MDPs. 
Introducing a different notion of dimension, we can also define the family of MDPs with bounded Bellman-Eluder dimension~\cite{jin2021bellman}. This family is, in a certain sense, even wider, but admits worse regret bounds \cite{maran2024no}.

\textbf{Key point: locality.}~~ 
Our approach leverages the concept of locality. We approximate the continuous problem by constructing a feature map that captures information from local neighborhoods. This is a novel approach in the context of Reinforcement Learning (RL), with only \cite{vakili2024kernelized} employing a remotely similar idea. Nonetheless, the effectiveness of this strategy has been well-established in the field of Continuous Armed Bandits, as demonstrated by \cite{janz2020bandit,liu2021smooth}.

\textbf{Comparison with our work.}~~As the name suggests, the family of Mildly Smooth MDPs occupies an intermediate position between Weakly and Strongly Smooth processes. In Appendix \ref{app:weakstrong}, we formally prove this relation, showing that both inclusions are strict. A graphical representation of this relationship is shown in Figure \ref{fig:eulervenn}. Note that, even if our family is not the largest that has been studied in the literature, it is indeed the largest known one for which RL is feasible.

We have summarized the comparison between the regret bound of \textsc{Cinderella} with state-of-the-art algorithms for the various MDP families in Table \ref{tab:sota}, where settings (columns) are listed from the most general (Weakly Smooth) to the least (Kernelized). All inclusions are intended to hold for a fixed parameter $\nu$, which is shared by all the MDP families apart from the Lipschitz one, which in some sense has fixed $\nu=1$. For every column, the best regret guarantee is colored in green, with \xmark meaning that the algorithm (row) is unsuitable for the setting (column). As we can see, \textsc{Cinderella} i) works in every setting where there is no $\exp(H)$ lower bound for the regret, achieving the best guarantees ii) is the only algorithm to exploit smoothness fully, i.e. achieving regret $\sqrt K$ in the limit $\nu\to \infty$ while being non-vacuous for all finite values of $d,\nu$. 

\section{Conclusion}

In this paper, we have significantly enlarged the set of MDPs for which no-regret guarantees are possible. After defining the representation class of Locally Linearizable MDPs (Section \ref{sec:loclin}), we have introduced a new algorithm called \textsc{Cinderella} (Algorithm \ref{alg:Cinderella}), which is able to achieve a strong regret guarantee on this setting, being a "local" generalization of the \textsc{Eleanor} algorithm. In the second part of the paper, we have introduced a family called Mildly Smooth MDPs (Section \ref{sec:msmdp}), which generalizes all the known continuous MDP families where no-regret learning is feasible. Through an argument based on local Taylor polynomial approximation, we have proved that this family is a special instance of the Locally Linearizable MDPs, with a specific partition and feature map. Therefore, we were able to achieve, in Theorem \ref{thm:main}, a regret bound for \textsc{Cinderella} in the family of Mildly Smooth MDPs, which constitutes the main result of this paper. Not only this bound surpasses the state-of-the-art in terms of generality for feasible settings, but is also able to achieve improved regret bounds for Strongly Smooth MDPs
and Kernelized MDPs. Crucially, our work proves the first regret bound that is non-vacuous in any regime for these two families. 

\section*{Acknowledgements}
Funded by the European Union – Next Generation EU within the project NRPP M4C2, Investment 1.,3 DD. 341 -  15 march 2022 – FAIR – Future Artificial Intelligence Research – Spoke 4 - PE00000013 - D53C22002380006.

\bibliography{references}
\bibliographystyle{apalike}

\newpage

\appendix
\section{Table of Notation}\label{app:not}

\bgroup
\def\arraystretch{1.5}
\begin{tabular}{p{1in}p{3.25in}}
$\Ss,\ \As$ & State/Action space\\
$p_h,r_h$ & transition/reward function at step $h$\\
$H$ & time horizon\\
$\Zs$ & $\Ss\times \As$\\
$Q^\star_h,\ V^\star_h$ & Optimal state-action/state value function at step $h$\\
$\Ts_h$ & Bellman optimality operator at step $h$\\
$\phi_h$ & feature map at step $h$\\
$d_h$ & dimension of $\phi_h$\\
$N_h$ & number of regions at step $h$\\
$L_\phi$ & bound over the two-norm of $\phi_h$ for every $h\in [H]$\\
$\Zs_{h,n}$ & Element of a partition of $\Zs$ at step $h$\\
$\rho_h$ & Mapping from $\Zs$ to index in $[N_h]$\\
$\Us_{h}$ & $\{\Zs_{h,n}:\ u=1,\dots N_h\}$\\
$Q_h[\bm \theta_h]$ & Q-function associated to given $\bm \theta_h$ parameter at step $h$.\\
$V_h[\bm \theta_h]$ & $\max_{a\in \As}Q_h[\bm \theta_h](\cdot,a)$\\
$\Bs_{h}$ & Set of candidate $\bm \theta_h$ at step $h$\\
$\Bs_{h,n}$ & Set of candidate $\theta_{h,n}$ at step $h$ restricted to the region $n$\\
$\Rs_{h,n}$ & Norm-2 radius of $\Bs_{h,n}$\\
$\Rs_{h,\max}$ & $\sup_{n\in [N_h]}\Rs_{h,n}$\\
$\Qs_h,\ \Vs_h$ & Space of candidate state-action/state value functions at step $h$\\
$\bm \theta_h^\star$ & See equation \eqref{eq:thetastar}\\
$s_h^k, a_h^k, r_h^k$ & state/action/reward relative to step $h$ of episode $k$\\
$z_h^k$ & $(s_h^k, a_h^k)$\\
$\phi_h^k$ & $\phi_h(z_h^k)$\\
$\Lambda_{h,n}^k$ & Ridge regression matrix for space $\Zs_{h,n}$ up to episode $k$\\
$\overline \xi_{h,n}, \overline \theta_{h,n}, \widehat \theta_{h,n}$ & Optimization variables of the problem in section \ref{sec:algorithm}\\
$\overline Q_h^k(z)$ & $Q_h[\overline {\bm \theta}_{h}^k](z)$\\
$\overline V_h^k(s)$ & $\max_{a\in \As}\overline Q_h^k(s,a)$\\
$\eta_h^k(V)$ & Bellman error at step $h$ of episode $k$ w.r.t. $V$\\
$\overset{\circ}{\theta}_{h,n}(Q_{h+1})$ & see equation \ref{eq:thetadot}\\
$\Delta_h(Q_{h+1})$ & Error done approximating $Q_{h+1}$ with $\overset{\circ}{\bm \theta}_{h}(Q_{h+1})$\\
$\rho_{h,n}^k$ & $\mathbf 1\{\rho_h(z_h^k)=n\}$\\
$\sqrt {\beta_{h,n}^k}$ & Quantity defined by corollary \ref{cor:beta}\\
$\Ns(\cdot)$ & Covering number in infinity norm\\
$E$ & Good event\\

\end{tabular}
\egroup

\clearpage

\bgroup
\def\arraystretch{1.5}
\begin{tabular}{p{1in}p{3.25in}}
$\Omega$ & Generic subset of $[-1,1]^d$\\
$d$ & dimension of a set (usually $\Zs=[-1,1]^d$)\\
$\nu$ & Index saying how many times a function is differentiable\\
$\nu_*$ & $\lceil \nu - 1\rceil$\\
$L_{\nu}(f)$ & Lipschitz constant of $f$ w.r.t. the index $\nu$\\
$\bm \alpha$ & multi-index\\
$D^{\bm \alpha}$ & multi-index derivative\\
$\mathcal C^{\nu}$ & space of functions that are $\nu-$times differentiable\\
$C_{\mathcal T}$ & See definition \ref{def:moderate1}\\
$\|\cdot\|_{L^\infty}$ & supremum norm of a function, $\|f\|_{L^\infty}=\sup |f|$\\
$\|\cdot\|_{\mathcal C^{\nu}}$ & norm over $\mathcal C^{\nu}$\\
$T_y^{\nu_*}[f]$ & Taylor polynomial of $f$ of order $\nu_*$ centered in $y$\\
$d_{\nu_*}$ & $\binom{\nu_*+d}{\nu_*}$\\
\end{tabular}
\egroup

\section{Locally Linearizable MDPs}\label{app:loclin}

As stated in the main paper, Locally Linearizable MDPs are a representation class of processes that can be efficiently approximated with linear function in some predefined regions of $\Zs$.

Precisely, the linearity stays in the iterations of the Bellman optimality operator, which we recall here. For a general MDP, we call $\Ss,\As$ the state and action space, respectively, $\Zs=\Ss \times \As$, and $\Ts_h$ the Bellman optimality operator at step $h$, which is given, for every function $f:\Zs\to [0,1]$, by

$$\Ts_hf(z):=\int_{\Ss}p_h(s'|z)\max_{a'\in \As}f(s',a')\ ds'.$$

Recalling the definition given in the main paper, in a Locally Linearizable MDP the state-action space $\Zs$ is partitioned into a step-dependent number $N_h$ of regions $\Zs_{h,n}$ that we collect into $\Us_{h}:=\{\Zs_{h,n}:\ u=1,\dots N_h\}.$ To map every element $z\in \Zs$ to its corresponding region we define the function function $\rho_h:\Zs \to [N_h]$.
The importance of these regions stays in the fact that there is a step-dependent feature map $\phi_h: \Zs \to \R^{d_h}$ such that the optimal state-action value function is approximately linear in $\phi_h$ with a parameter depending on the region. Coherently, we define our approximator functions $\Qs_h$ to be the set of functions on $\Zs$ that are linear on the regions, namely

$$Q_h(z)=\phi_h(z)^\top \theta_{h,\rho_h(z)}\qquad z\in \Zs,$$

coherently, we define $\Vs_h:=\{V(s)=\argmax_a Q(s,a):\ Q\in \Qs_h\}$. By convenience, we will sometimes stack all these parameters in the list $\bm \theta_h=\{\theta_{h,n}\}_{n=1}^{N_h}$ and call $Q_h[\bm \theta_h]$ the corresponding state-action value function. In order not to allow the function set $\Qs_h$ to contain functions with an arbitrary large slope, we impose the $\theta_{h,n}$ parameters to be constrained into bounded sets that we call $\Bs_{h,n}$.

We define the following bounds on the parameters of the process.
\begin{defin}
    We call
    \begin{itemize}
        \item $L_\phi:=\sup_{h\in [H], z\in \Zs}\|\phi_h(z)\|_2.$
        \item $\Rs_{h,n}:=\mathrm{diam}(\Bs_{h,n})$.
    \end{itemize}
\end{defin}

Now, calling
$\Bs_h \subset \R^{d_h\times N_h}$ the sets given by

$$\Bs_h = \bigtimes_{n=1}^{N_h} \Bs_{h,n},$$

The performance of our algorithm will depend on the maximum approximation error that can be done by projecting the result of the Bellman operator on our functions $\Qs$. In formula,

\begin{equation}\sup_{\bm \theta_{h+1}\in \Bs_{h+1}}\inf_{\bm \theta_{h}\in \Bs_{h}} \|Q[\bm \theta_h](\cdot)-\Ts_hQ[\bm \theta_{h+1}](\cdot)\|_{L^\infty}\le \Is.\label{eq:lowinherent}\end{equation}

Note that the low inherent Bellman error MDPs are a subclass of this problem: it is sufficient to take $N_h=1$ at every step.

\subsection{Definition of the quasi-optimal solution}

In this subsection, we create what we will call a quasi-optimal solution for the MDP. Define $\bm \theta_h^\star$ in the following recursive way

\begin{equation}\theta_{h,n}^\star:=\argmin_{\theta_n\in \Bs_{h,n}}\max_{z\in \Zs_{h,n}} |\phi_h(z)^\top\theta_n-\Ts_hQ_{h+1}[\bm \theta_{h+1}^\star](z)|,\label{eq:thetastar}\end{equation}

So that $\bm \theta_h^\star$ is constructed by stacking these $\theta_{h,n}^\star$ terms. We have the following result.

\begin{thm}\label{thm:approx} The approximately optimal $Q-$function satisfies
    $$\forall h\qquad \|Q[\bm \theta_h^\star](\cdot)-Q_h^\star(\cdot)\|_{L^\infty}\le (H-h)\Is$$
\end{thm}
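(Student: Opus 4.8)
The statement is a standard ``backward induction with error accumulation'' argument, of exactly the type used for MDPs with low inherent Bellman error in \cite{zanette2020learning}. The plan is to induct on $h$ going downward from $H$ to $1$. At the final step $h=H$, there is no transition, so $Q_H^\star = r_H$ and (assuming the feature map realizes constants, or more precisely that the inherent Bellman error at step $H$ is controlled by $\Is$) the base case gives $\|Q[\bm\theta_H^\star](\cdot) - Q_H^\star(\cdot)\|_{L^\infty} \le \Is$, which matches $(H-H+1)\cdot\Is$ up to the indexing; one should check the off-by-one carefully since the claimed bound at $h=H$ is $0$ — this presumably reflects a convention that $\Ts_H$ acts on the zero function, or that the reward at step $H$ is itself exactly linear on each region. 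I would state the base case precisely in whichever of these conventions the paper uses.

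\textbf{Inductive step.} Assume $\|Q[\bm\theta_{h+1}^\star](\cdot) - Q_{h+1}^\star(\cdot)\|_{L^\infty} \le (H-h-1)\Is$. I would decompose, for any $z \in \Zs$ with $n = \rho_h(z)$,
\begin{align*}
|\phi_h(z)^\top\theta_{h,n}^\star - Q_h^\star(z)| &\le |\phi_h(z)^\top\theta_{h,n}^\star - \Ts_h Q_{h+1}[\bm\theta_{h+1}^\star](z)| \\
&\quad + |\Ts_h Q_{h+1}[\bm\theta_{h+1}^\star](z) - \Ts_h Q_{h+1}^\star(z)|,
\end{align*}
using $Q_h^\star = \Ts_h Q_{h+1}^\star$. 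The first term is bounded by $\Is$ because $\bm\theta_{h+1}^\star \in \Bs_{h+1}$ and $\theta_{h,n}^\star$ is, by its definition in \eqref{eq:thetastar}, the minimizer over $\Bs_{h,n}$ of the worst-case error on $\Zs_{h,n}$; combined across regions this minimized worst-case error is at most the inherent Bellman error $\Is(\phi,\mathcal U) \le \Is$ (this is precisely the content of the $\inf$-over-$\bm\theta_h$ in Definition~\ref{defin:low_bell}, and I would spell out that the region-wise infima can be taken independently because $\Bs_h$ is a Cartesian product and the $L^\infty$ norm decomposes as a max over regions). The second term is bounded by the contraction/non-expansiveness of $\Ts_h$ in $L^\infty$: since $\Ts_h f$ only depends on $f$ through $\int p_h(s'|z)\max_{a'} f(s',a')\,ds'$ and $p_h(\cdot|z)$ is a probability measure, $\|\Ts_h f - \Ts_h g\|_{L^\infty} \le \|f - g\|_{L^\infty}$ (the $\max_{a'}$ is $1$-Lipschitz in sup-norm, and averaging against a probability kernel is a contraction). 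Applying the inductive hypothesis bounds it by $(H-h-1)\Is$. Summing, $|\phi_h(z)^\top\theta_{h,n}^\star - Q_h^\star(z)| \le \Is + (H-h-1)\Is = (H-h)\Is$, and taking the sup over $z$ closes the induction.

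\textbf{Main obstacle.} There is no serious analytic obstacle here — the argument is routine once the pieces are lined up. The one genuinely delicate point is verifying that the region-wise minimization \eqref{eq:thetastar} really achieves the global inherent-Bellman-error bound $\Is$ rather than something larger: one must check that for the \emph{fixed} argument $Q_{h+1}[\bm\theta_{h+1}^\star]$, choosing each $\theta_{h,n}^\star$ to minimize the error on $\Zs_{h,n}$ separately is equivalent to choosing $\bm\theta_h$ to minimize $\|Q_h[\bm\theta_h] - \Ts_h Q_{h+1}[\bm\theta_{h+1}^\star]\|_{L^\infty}$ over all of $\Bs_h$, because $Q_h[\bm\theta_h]$ on $\Zs_{h,n}$ depends only on $\theta_{h,n}$ and the sup-norm over $\Zs = \bigcup_n \Zs_{h,n}$ is the max of the sup-norms over the pieces. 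This is exactly where the Cartesian-product structure $\Bs_h = \bigtimes_n \Bs_{h,n}$ and the partition structure are used, and it is worth a sentence. The secondary care-point is the base-case indexing discussed above; everything else (non-expansiveness of $\Ts_h$, the triangle inequality, the induction bookkeeping) is standard.
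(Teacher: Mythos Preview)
Your proposal is correct and follows essentially the same route as the paper: backward induction, the decomposition via $\Ts_h Q_{h+1}[\bm\theta_{h+1}^\star]$, the bound $\Is$ on the first term from the definition of $\bm\theta_h^\star$ and \eqref{eq:lowinherent}, and non-expansiveness of $\Ts_h$ plus the inductive hypothesis for the second. Your worry about the base case is resolved by the paper's convention of starting the induction at $h=H+1$, where both $Q_{H+1}^\star$ and $Q[\bm\theta_{H+1}^\star]$ are identically zero, so the bound $(H-(H+1))\Is\le 0$ holds trivially; your extra remark on the Cartesian-product structure of $\Bs_h$ is a helpful clarification that the paper leaves implicit.
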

\begin{proof}
    We perform the proof by induction, with the case $h=H+1$ being trivial.
    \begin{align*}
        \|Q[\bm \theta_h^\star](\cdot)-Q_h^\star(\cdot)\|_{L^\infty}& = \left \|Q[\bm \theta_h^\star](\cdot)-\Ts_hQ_{h+1}[\bm \theta_{h+1}^\star](\cdot)+\Ts_hQ_{h+1}[\bm \theta_{h+1}^\star](\cdot)-Q_h^\star(\cdot)\right\|_{L^\infty}\\
        & = \left \|Q[\bm \theta_h^\star](\cdot)-\Ts_hQ_{h+1}[\bm \theta_{h+1}^\star](\cdot)+\Ts_hQ_{h+1}[\bm \theta_{h+1}^\star](\cdot)-\Ts_hQ_{h+1}^\star(\cdot)\right\|_{L^\infty}\\
        & \le \left\|Q[\bm \theta_h^\star](\cdot)-\Ts_hQ_{h+1}[\bm \theta_{h+1}^\star](\cdot)\right\|_{L^\infty}\\
        &\qquad + \left\|\Ts_hQ_{h+1}[\bm \theta_{h+1}^\star](\cdot)-\Ts_hQ_{h+1}^\star(\cdot)\right\|_{L^\infty}\\
        &\le \left\|Q[\bm \theta_h^\star](\cdot)-\Ts_hQ_{h+1}[\bm \theta_{h+1}^\star](\cdot)\right\|_{L^\infty}+\left\|Q_{h+1}[\bm \theta_{h+1}^\star](\cdot)-Q_{h+1}^\star(\cdot)\right\|_{L^\infty},
    \end{align*}

    where in the last passage we have used the non-expansivity of the Bellman operator. At this point, the second part is bounded as

    $$\|Q_{h+1}[\bm \theta_{h+1}^\star](\cdot)-Q_{h+1}^\star(\cdot)\|_{L^\infty}\le (H-h-1)\Is,$$

    by inductive hypothesis, while the first one satisfies, by equations 
    \eqref{eq:lowinherent} and \eqref{eq:thetastar}

    $$\|Q[\bm \theta_h^\star](\cdot)-\Ts_hQ_{h+1}[\bm \theta_{h+1}^\star](\cdot)\|_{L^\infty}\le \Is.$$

    Combining the two results gives the thesis.
\end{proof}

\subsection{Algorithm}\label{sec:algorithm}

We call $s_h^k, a_h^k, r_h^k$ the state, action, and reward relative to step $h$ of episode $k$. Moreover, we denote $z_h^k:=(s_h^k, a_h^k)$ and $\phi_h^k=\phi_h(z_h^k)$. We define, for any region $n$ step $h$ and episode $k$, the ridge regression matrix as

$$\Lambda_{h,n}^k:= \sum_{\tau=1}^k \mathbf 1\{\rho_h(z_h^\tau)=n\}\phi_h^\tau {\phi_h^\tau}^\top + \lambda I.$$

Our algorithm, \textsc{Cinderella} (Algorithm \ref{alg:Cinderella}), is built on top of an optimization problem (Eq. \ref{eq:optim1}) which, at the start of each episode $k$, provides an optimistic version of the state-action value function. We recall it here:

\begin{align}
    \max_{\overline \xi_{h,n}, \overline \theta_{h,n}, \widehat \theta_{h,n}} & \max_{a\in \As}\phi_1(s_1^k,a)^\top \overline \theta_{1,\rho(s_1^k,a)}\label{eq:optim}\\
    \text{s.t.}\qquad & \widehat \theta_{h,n} = {\Lambda_{h,n}^k}^{-1}\sum_{\tau=1}^{k-1}\mathbf 1\{\rho_h(z_h^\tau)=n\}\phi_{h}^\tau(r_h^\tau+\overline V_{h+1}(s_{h+1}^\tau))\nonumber\\
    & \overline V_{h+1}(\cdot):= \max_{a\in \As} \phi_{h+1}(\cdot,a)^\top \overline \theta_{{h+1},\rho(\cdot,a)}\nonumber\\
    & \overline \theta_{h,n} = \widehat \theta_{h,n} + \overline \xi_{h,n}\nonumber\\
    & \|\overline \xi_{h,n}\|_{{\Lambda_{h,n}^k}}\le \sqrt{\alpha_{h,n}^k}\nonumber
\end{align}

Note that, with respect to the main paper, we have given a name to the quantity $\overline V_{h+1}(\cdot):= \max_{a\in \As} \phi_{h+1}(\cdot,a)^\top \overline \theta_{{h+1},\rho(\cdot,a)}$ to make the variables more interpretable. The constant $\alpha_{h,n}^k$ will be defined in the next sections of this appendix.

\subsection{Algorithm analysis}

We will now continue our work by proving that \textsc{Cinderella} is able to achieve a regret guarantee for this setting. First, we set up some additional notation.

\paragraph{Additional notation}
Collecting the solution of the optimization algorithm in the variables $\overline {\bm \theta}_{h}^k$, $\widehat {\bm \theta}_{h}^k$ and $\overline {\bm \xi}_{h}^k$, we define

$$\overline Q_h^k(z):=Q_h[\overline {\bm \theta}_{h}^k](z),\qquad \overline V_h^k(s):=\max_{a\in \As}\overline Q_h^k(s,a).$$

Moreover, in the rest of the section, once fixed a function $V:\Ss\to [0,1]$, we define the Bellman error at step $h$ of episode $k$ as 

$$\eta_h^k(V):=r_h^k-r_h(s_h^k,a_h^k)+V(s_{h+1}^k)-\E_{s'\sim p_h(\cdot|s_h^k,a_h^k)}[V(s')].$$

Furthermore, given a function $Q_{h+1}\in \mathcal Q_{h+1}$, we define

\begin{equation}\overset{\circ}{\theta}_{h,n}(Q_{h+1}):=\argmin_{\theta_n\in \Bs_{h,n}}\max_{z\in \Zs_{h,n}} |\phi_h(z)^\top\theta_n-\Ts[Q_{h+1}](z)|,\label{eq:thetadot}\end{equation}

and, as before, we collect all these vectors in $\overset{\circ}{\bm \theta}_{h}(Q_{h+1})$. Finally, we define

$$\Delta_h(Q_{h+1})(z):=\Ts_h Q_{h+1}(z)-Q[\overset{\circ}{\bm \theta}_{h}(Q_{h+1})](z).$$

\subsection{Decomposition of the estimated solution}

We start by proving a proposition which establishes a relation between the variables of the optimization problem \eqref{eq:optim}

\begin{prop}\label{prop:thetadef}
    Let $\{\xi_{h,n}^k,\overline \theta_{h,n}^k,\widehat \theta_{h,n}^k\}_{h,n}$ be in the feasible region of Problem~\eqref{eq:optim} at episode $k$ of the process. We have
    \begin{align}
        \overline \theta_{h,n}^k= &\overline \xi_{h,n}^k+\overset{\circ}{\theta}_{h,n}(\overline Q_{h+1}^k)+\underbrace{{\Lambda_{h,n}^k}^{-1}\sum_{\tau=1}^{k-1}\mathbf 1\{\rho_h(z_h^\tau)=n\}\phi_{h}^\tau  \Delta(\overline Q_{h+1}^k)(z_h^\tau)}_{T1_{h,n}^k}\\
        &-\underbrace{{\Lambda_{h,n}^k}^{-1}\overset{\circ}{ \theta}_{h,n}(\overline Q_{h+1}^k)}_{T2_{h,n}^k}+\underbrace{{\Lambda_{h,n}^k}^{-1}\sum_{t=1}^{k-1}\mathbf 1\{\rho_h(z_h^\tau)=n\}\phi_{h}^\tau\eta_h^\tau(\overline V_{h+1})}_{T3_{h,n}^k}
    \end{align}
\end{prop}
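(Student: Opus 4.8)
The plan is to start from the first constraint of Problem~\eqref{eq:optim}, which defines $\widehat\theta_{h,n}^k$ as a ridge regression estimate with design matrix $\Lambda_{h,n}^k$ and targets $r_h^\tau + \overline V_{h+1}^k(s_{h+1}^\tau)$ over the episodes $\tau<k$ that fall in region $n$ at step $h$. The idea is simply to rewrite the noisy target $r_h^\tau + \overline V_{h+1}^k(s_{h+1}^\tau)$ as a ``clean'' regression target plus a noise term, and then to recognize the clean part as what the Bellman operator produces. Concretely, I would add and subtract $r_h(z_h^\tau) + \E_{s'\sim p_h(\cdot|z_h^\tau)}[\overline V_{h+1}^k(s')]$ inside the sum; by the definition of $\eta_h^\tau(\overline V_{h+1}^k)$ given in the ``Additional notation'' paragraph, the difference between the observed target and this conditional-mean target is exactly $\eta_h^\tau(\overline V_{h+1}^k)$, which after multiplication by ${\Lambda_{h,n}^k}^{-1}\sum_\tau \mathbf 1\{\rho_h(z_h^\tau)=n\}\phi_h^\tau(\cdot)$ yields the term $T3_{h,n}^k$.

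Next, I would observe that $r_h(z_h^\tau) + \E_{s'\sim p_h(\cdot|z_h^\tau)}[\overline V_{h+1}^k(s')] = \Ts_h \overline Q_{h+1}^k(z_h^\tau)$, since $\overline V_{h+1}^k(s') = \max_{a'}\overline Q_{h+1}^k(s',a')$ and this is precisely the definition of the Bellman optimality operator applied to $\overline Q_{h+1}^k$. Then I decompose this further using the definition of $\Delta_h$: we have $\Ts_h\overline Q_{h+1}^k(z_h^\tau) = \phi_h(z_h^\tau)^\top \overset{\circ}{\theta}_{h,n}(\overline Q_{h+1}^k) + \Delta_h(\overline Q_{h+1}^k)(z_h^\tau) = {\phi_h^\tau}^\top \overset{\circ}{\theta}_{h,\rho_h(z_h^\tau)}(\overline Q_{h+1}^k) + \Delta_h(\overline Q_{h+1}^k)(z_h^\tau)$, valid because the indicator restricts $\tau$ to $\rho_h(z_h^\tau) = n$. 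The $\Delta$-part, passed through ${\Lambda_{h,n}^k}^{-1}\sum_\tau \mathbf 1\{\rho_h(z_h^\tau)=n\}\phi_h^\tau(\cdot)$, gives exactly $T1_{h,n}^k$. For the remaining ``main'' part, ${\Lambda_{h,n}^k}^{-1}\sum_\tau \mathbf 1\{\rho_h(z_h^\tau)=n\}\phi_h^\tau {\phi_h^\tau}^\top \overset{\circ}{\theta}_{h,n}(\overline Q_{h+1}^k)$, I would use the algebraic identity $\sum_\tau \mathbf 1\{\rho_h(z_h^\tau)=n\}\phi_h^\tau{\phi_h^\tau}^\top = \Lambda_{h,n}^k - \lambda I$, so that this term equals $\overset{\circ}{\theta}_{h,n}(\overline Q_{h+1}^k) - \lambda{\Lambda_{h,n}^k}^{-1}\overset{\circ}{\theta}_{h,n}(\overline Q_{h+1}^k)$; the second summand is $T2_{h,n}^k$ (with $\lambda$ absorbed, or, if $\lambda=1$ is intended here, literally $-T2_{h,n}^k$). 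Collecting, $\widehat\theta_{h,n}^k = \overset{\circ}{\theta}_{h,n}(\overline Q_{h+1}^k) + T1_{h,n}^k - T2_{h,n}^k + T3_{h,n}^k$, and adding the constraint $\overline\theta_{h,n}^k = \widehat\theta_{h,n}^k + \overline\xi_{h,n}^k$ gives the claimed identity.

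One subtlety I would flag: the sum inside $\widehat\theta_{h,n}^k$ runs to $k-1$ whereas $\Lambda_{h,n}^k$ is defined with the sum running to $k$, so the identity $\sum_{\tau\le k-1}\mathbf 1\{\rho_h(z_h^\tau)=n\}\phi_h^\tau{\phi_h^\tau}^\top = \Lambda_{h,n}^k - \lambda I$ is not literally correct — there is an off-by-one involving $\phi_h^k{\phi_h^k}^\top$. I would check the paper's conventions: either $\Lambda_{h,n}^k$ should be read with the sum to $k-1$ (consistent with all other occurrences in the optimization problem), in which case the identity is exact, or the extra term must be carried along. Assuming the former (which matches the usage in the first constraint of \eqref{eq:optim}), the argument goes through verbatim. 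The main ``obstacle'' here is really just bookkeeping — there is no genuine difficulty, only the need to be careful that each of the three named error terms $T1,T2,T3$ is matched to the right piece of the decomposition and that the $\lambda$ and the index ranges are handled consistently. The whole proof is a direct computation once the add-and-subtract structure is set up.
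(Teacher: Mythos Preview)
Your proposal is correct and follows essentially the same argument as the paper's own proof: add and subtract the conditional mean to isolate $T3$, recognize the mean as $\Ts_h\overline Q_{h+1}^k$, split that via $\overset{\circ}{\theta}_{h,n}$ and $\Delta_h$ to get $T1$, and use $\sum_\tau \mathbf 1\{\cdot\}\phi_h^\tau{\phi_h^\tau}^\top = \Lambda_{h,n}^k - \lambda I$ to extract $T2$. The two subtleties you flag (the off-by-one between the sum to $k-1$ in the constraint and the sum to $k$ in the definition of $\Lambda_{h,n}^k$, and the missing $\lambda$ in front of $T2_{h,n}^k$) are genuine minor inconsistencies in the paper itself; the paper implicitly treats $\Lambda_{h,n}^k$ as built from $\tau\le k-1$ in this proof and later sets $\lambda=1$, which is exactly the reading you suggest.
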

\begin{proof}
    For simplicity, let us abbreviate $\rho_{h,n}^k:=\mathbf 1\{\rho_h(z_h^k)=n\}$. By construction,
    
    \begin{align*}\widehat \theta_{h,n}^k &= {\Lambda_{h,n}^k}^{-1}\sum_{t=1}^{k-1}\rho_{h,n}^\tau \phi_{h}^\tau(r_h^\tau+\overline V_{h+1}^k(s_{h+1}^\tau))\\
    &=
    {\Lambda_{h,n}^k}^{-1}\sum_{\tau=1}^{k-1}\rho_{h,n}^\tau \phi_{h}^\tau\left (r_h(z_h^\tau)+\E_{s'\sim p_h(\cdot|z_h^\tau)}[\overline V_{h+1}^k(s')]\right)+\underbrace{{\Lambda_{h,i}^k}^{-1}\sum_{t=1}^{k-1}\rho_h^\tau\phi_{h}^\tau \eta_h^k(\overline V_{h+1}^k(s_{h+1}^\tau))}_{T3_{h,n}^k}\\
    &=
    {\Lambda_{h,n}^k}^{-1}\sum_{\tau=1}^{k-1}\rho_{h,n}^\tau \phi_{h}^\tau \Ts_h[\overline Q_{h+1}^k](z_h^\tau)+T3_{h,n}^k,\end{align*}

    where we have used the definition of Bellman's optimality operator. By definition, we have

    $$\Ts_h[\overline Q_{h+1}^k](z_h^\tau)=Q[\overset{\circ}{\bm \theta}_{h}(\overline Q_{h+1}^k)](z_h^\tau) + \Delta_h(\overline Q_{h+1}^k)(z_h^\tau),$$

    which, when multiplied by $\rho_{h,n}^\tau$ lets only the term corresponding to the region $n$ remain. Then, by Equation~\eqref{eq:thetadot}, we have

    $$\rho_{h,n}^\tau \Ts_h[\overline Q_{h+1}^k](z_h^\tau)=\rho_{h,n}^\tau \phi_h(z_h^\tau)^\top \overset{\circ}{\theta}_{h,n}(\overline Q_{h+1}^k) + \rho_{h,n}^\tau\Delta_h(\overline Q_{h+1}^k)(z_h^\tau).$$

    Using this fact, we have

    \begin{align*}
        {\Lambda_{h,n}^k}^{-1}\sum_{\tau=1}^{k-1}\rho_{h,n}^\tau \phi_{h}^\tau \Ts_h[\overline Q_{h+1}^k](z_h^\tau) &={\Lambda_{h,n}^k}^{-1}\sum_{\tau=1}^{k-1}\rho_{h,n}^\tau \phi_{h}^\tau \left ({\phi_h^\tau}^\top \overset{\circ}{\theta}_{h,n}(\overline Q_{h+1}^k) + \Delta_h(\overline Q_{h+1}^k)(z_h^\tau)\right)\\
        &={\Lambda_{h,n}^k}^{-1}\sum_{\tau=1}^{k-1}\rho_{h,n}^\tau \phi_{h}^\tau {\phi_h^\tau}^\top \overset{\circ}{\theta}_{h,n}(\overline Q_{h+1}^k) + T1_{h,n}^k.
    \end{align*}

    For the remaining term, note that

    \begin{align*}
        {\Lambda_{h,n}^k}^{-1}\sum_{\tau=1}^{k-1}\rho_{h,n}^\tau \phi_{h}^\tau {\phi_h^\tau}^\top \overset{\circ}{\theta}_{h,n}(\overline Q_{h+1}^k) &={\Lambda_{h,n}^k}^{-1}\left (-\lambda \overset{\circ}{\theta}_{h,n}(\overline Q_{h+1}^k)+\lambda \overset{\circ}{\theta}_{h,n}(\overline Q_{h+1}^k)+\sum_{\tau=1}^{k-1}\rho_{h,n}^\tau \phi_{h}^\tau {\phi_h^\tau}^\top \overset{\circ}{\theta}_{h,n}(\overline Q_{h+1}^k)\right)\\
        &={\Lambda_{h,n}^k}^{-1}\left (-\lambda \overset{\circ}{\theta}_{h,n}(\overline Q_{h+1}^k)+\Lambda_{h,n}^k\overset{\circ}{\theta}_{h,n}(\overline Q_{h+1}^k)\right)\\
        & =\overset{\circ}{\theta}_{h,n}(\overline Q_{h+1}^k)-T2_{h,n}^k
    \end{align*}
\end{proof}

The objective of the next lemmas is to show that the terms arising from the previous proposition are small with high probability.

\begin{lem}\label{lem:boundT1}
    For any $z\in \Zs$, any time-step $h$ and any $n$, we have
    $$|\phi_h(z)^\top T1_{h,n}^k|\le \|\phi_h(z)\|_{{\Lambda_{h,n}^k}^{-1}}\sqrt {p_{h,n}^k}\Is,$$
    where
    $$p_{h,n}^k:=\sum_{\tau=1}^{k-1}\mathbf 1\{\rho_h(z_h^\tau)=n\},$$
    and in particular
    $$\|T1_{h,n}^k\|_{{\Lambda_{h,n}^k}}\le \sqrt {p_{h,n}^k}\Is.$$
\end{lem}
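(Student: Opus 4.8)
The plan is to reduce both estimates to a single quantity, the $({\Lambda_{h,n}^k})^{-1}$-norm of the weighted feature sum
$w:=\sum_{\tau=1}^{k-1}\mathbf 1\{\rho_h(z_h^\tau)=n\}\,\phi_h^\tau\,\Delta_h(\overline Q_{h+1}^k)(z_h^\tau)$, noting that by Proposition~\ref{prop:thetadef} we have $T1_{h,n}^k=({\Lambda_{h,n}^k})^{-1}w$. First I would record the key pointwise fact that the residual $\Delta_h(\overline Q_{h+1}^k)$ is uniformly bounded by $\Is$. Indeed, since $\overline Q_{h+1}^k=Q[\overline{\bm\theta}_{h+1}^k]$ with $\overline{\bm\theta}_{h+1}^k\in\Bs_{h+1}$ (a property of the algorithm's iterates), and since $\overset{\circ}{\bm\theta}_h(\overline Q_{h+1}^k)$ minimizes, region by region, the sup-distance between $\phi_h(\cdot)^\top\theta_n$ and $\Ts_h\overline Q_{h+1}^k(\cdot)$ (Equation~\eqref{eq:thetadot}), the definition of $\Delta_h$ gives $\|\Delta_h(\overline Q_{h+1}^k)\|_{L^\infty}=\inf_{\bm\theta_h\in\Bs_h}\|Q[\bm\theta_h]-\Ts_h\overline Q_{h+1}^k\|_{L^\infty}\le\Is$ by the inherent Bellman error bound~\eqref{eq:lowinherent}. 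In particular $|\Delta_h(\overline Q_{h+1}^k)(z_h^\tau)|\le\Is$ for every $\tau\le k-1$.

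Next I would get the first displayed inequality by Cauchy--Schwarz in the geometry induced by $({\Lambda_{h,n}^k})^{-1}$: writing $\phi_h(z)^\top T1_{h,n}^k=\langle ({\Lambda_{h,n}^k})^{-1/2}\phi_h(z),\,({\Lambda_{h,n}^k})^{-1/2}w\rangle$ yields $|\phi_h(z)^\top T1_{h,n}^k|\le\|\phi_h(z)\|_{({\Lambda_{h,n}^k})^{-1}}\,\|w\|_{({\Lambda_{h,n}^k})^{-1}}$. The final claim will then be immediate once $\|w\|_{({\Lambda_{h,n}^k})^{-1}}$ is controlled, because $\|T1_{h,n}^k\|_{\Lambda_{h,n}^k}=\|({\Lambda_{h,n}^k})^{-1}w\|_{\Lambda_{h,n}^k}=\|w\|_{({\Lambda_{h,n}^k})^{-1}}$. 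So everything boils down to proving $\|w\|_{({\Lambda_{h,n}^k})^{-1}}\le\sqrt{p_{h,n}^k}\,\Is$.

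For this last bound I would use a standard spectral argument. Let $x_\tau:=({\Lambda_{h,n}^k})^{-1/2}\phi_h^\tau$ ranging over the $p_{h,n}^k$ indices $\tau\le k-1$ with $\rho_h(z_h^\tau)=n$. Since $\Lambda_{h,n}^k=\sum_{\tau=1}^{k}\mathbf 1\{\rho_h(z_h^\tau)=n\}\phi_h^\tau{\phi_h^\tau}^\top+\lambda I\succeq\sum_{\tau=1}^{k-1}\mathbf 1\{\rho_h(z_h^\tau)=n\}\phi_h^\tau{\phi_h^\tau}^\top$, conjugating by $({\Lambda_{h,n}^k})^{-1/2}$ gives $\sum_\tau x_\tau x_\tau^\top\preceq I$, hence their Gram matrix $G=(x_\tau^\top x_\sigma)_{\tau,\sigma}$ also satisfies $G\preceq I$ (it shares the nonzero spectrum of $\sum_\tau x_\tau x_\tau^\top$). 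Writing $\Delta=(\Delta_h(\overline Q_{h+1}^k)(z_h^\tau))_\tau$, this gives $\|w\|_{({\Lambda_{h,n}^k})^{-1}}^2=\big\|\sum_\tau\Delta_h(\overline Q_{h+1}^k)(z_h^\tau)\,x_\tau\big\|_2^2=\Delta^\top G\,\Delta\le\|\Delta\|_2^2=\sum_\tau|\Delta_h(\overline Q_{h+1}^k)(z_h^\tau)|^2\le p_{h,n}^k\,\Is^2$, using the pointwise bound from the first step. I expect the only (mild) obstacle to be the bookkeeping around the matrix inequality: one must remember that the design matrix $\Lambda_{h,n}^k$ in the algorithm carries both the regularizer $\lambda I$ and the extra $\tau=k$ outer product, so it dominates the $\tau\le k-1$ sum that defines $w$, which is precisely what makes $\sum_\tau x_\tau x_\tau^\top\preceq I$ and hence $G\preceq I$ valid; the remaining manipulations are routine.
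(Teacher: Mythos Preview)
Your proof is correct and follows the same approach as the paper: reduce both claims to bounding $\|w\|_{({\Lambda_{h,n}^k})^{-1}}$ via Cauchy--Schwarz, use the inherent Bellman error to get $|\Delta_h(\overline Q_{h+1}^k)|\le\Is$ pointwise, and then control the weighted feature sum. The only difference is cosmetic: the paper invokes Lemma~8 of \cite{zanette2020learning} for the last step, whereas you unpack that lemma with a self-contained spectral argument (conjugating by $({\Lambda_{h,n}^k})^{-1/2}$ so that $\sum_\tau x_\tau x_\tau^\top\preceq I$, hence the Gram matrix $G\preceq I$, hence $\Delta^\top G\Delta\le\|\Delta\|_2^2$). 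This is exactly the content of the cited lemma, so the two proofs are essentially identical.
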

\begin{proof}
    By definition,
    \begin{align*}
        |\phi_h(z)^\top T1_{h,n}^k|&=\left |\phi_h(z)^\top{\Lambda_{h,n}^k}^{-1}\sum_{\tau=1}^{k-1}\mathbf 1\{\rho_h(z_h^\tau)=n\}\phi_{h}^\tau  \Delta(\overline Q_{h+1}^k)(z_h^\tau)\right |\\
        &\le \|{\Lambda_{h,n}^k}^{-1}\phi_h(s,a)\|_{{\Lambda_{h,n}^k}} \left\|\sum_{\tau=1}^{k-1}\mathbf 1\{\rho_h(z_h^\tau)=n\}\phi_{h}^\tau  \Delta(\overline Q_{h+1}^k)(z_h^\tau)\right \|_{{\Lambda_{h,n}^k}^{-1}}\\
        &= \|\phi_h(s,a)\|_{{\Lambda_{h,n}^k}^{-1}} \left\|\sum_{\tau=1}^{k-1}\mathbf 1\{\rho_h(z_h^\tau)=n\}\phi_{h}^\tau  \Delta(\overline Q_{h+1}^k)(z_h^\tau)\right \|_{{\Lambda_{h,n}^k}^{-1}}.
    \end{align*}
    At this point, leave the first term and, in the second we rewrite the sum for the indices where the indicator function is not zero, getting
    $$\left\|\sum_{\tau'=1}^{p_{h,n}^k}\phi_{h}^{\tau'}  \Delta(\overline Q_{h+1}^k)(z_h^{\tau'})\right \|_{{\Lambda_{h,n}^k}^{-1}}.$$
    We can then use Lemma 8 from \cite{zanette2020learning}, taking $a_i=\phi_{h}^{\tau'}$ and $b_i=\Delta(\overline Q_{h+1}^k)(z_h^{\tau'})$. As $|b_i|\le \Is$, by the low inherent Bellman error assumption (Eq.~\ref{eq:lowinherent}), this gives
    $$\left\|\sum_{\tau'=1}^{p_{h,n}^k}\phi_{h}^{\tau'}  \Delta(\overline Q_{h+1}^k)(z_h^{\tau'})\right \|_{{\Lambda_{h,n}^k}^{-1}}\le \sqrt {p_{h,n}^k}\Is,$$

    which ends the proof of the first part. The second one comes from
    \begin{align*}
    \|T1_{h,n}^k\|_{{\Lambda_{h,n}^k}} &=\left\|{{\Lambda_{h,n}^k}}^{-1}\sum_{\tau'=1}^{p_{h,n}^k}\phi_{h}^{\tau'}  \Delta(\overline Q_{h+1}^k)(z_h^{\tau'})\right \|_{{\Lambda_{h,n}^k}}\\    &=\left\|\sum_{\tau'=1}^{p_{h,n}^k}\phi_{h}^{\tau'}  \Delta(\overline Q_{h+1}^k)(z_h^{\tau'})\right \|_{{\Lambda_{h,n}^k}^{-1}}\\
    &\le \sqrt {p_{h,n}^k}\Is.
    \end{align*}
\end{proof}

We proceed bounding the second part of the sum.

\begin{lem}\label{lem:boundT2}
    For any $z\in \Zs$, any time-step $h$ and any $n$, we have
    $$|\phi_h(z)^\top T2_{h,n}^k|\le \|\phi_h(z)\|_{{\Lambda_{h,n}^k}^{-1}}\lambda^{-1} \Rs_{h,n},$$
    and, in particular
    $$\|T2_{h,n}^k\|_{{\Lambda_{h,n}^k}}\le \lambda^{-1} \Rs_{h,n}.$$
\end{lem}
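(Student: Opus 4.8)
The plan is to bound $T2_{h,n}^k = {\Lambda_{h,n}^k}^{-1}\overset{\circ}{\theta}_{h,n}(\overline Q_{h+1}^k)$ by exploiting that $\Lambda_{h,n}^k \succeq \lambda I$, so its inverse has operator norm at most $\lambda^{-1}$, together with the fact that $\overset{\circ}{\theta}_{h,n}(\overline Q_{h+1}^k)\in\Bs_{h,n}$, which has diameter $\Rs_{h,n}$. The only subtlety is that a diameter bound does not immediately give a norm bound on an element of $\Bs_{h,n}$ unless the set contains (or is near) the origin; I would handle this exactly as \cite{zanette2020learning} do — either by the standing convention that $\Bs_{h,n}$ is centered so that $\|\theta\|_2 \le \Rs_{h,n}$ for all $\theta\in\Bs_{h,n}$, or by noting that one may subtract a fixed offset from every $\theta_{h,n}$ and absorb it, so w.l.o.g. $\|\overset{\circ}{\theta}_{h,n}(\overline Q_{h+1}^k)\|_2 \le \Rs_{h,n}$.

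First I would prove the pointwise bound. Starting from
\begin{align*}
|\phi_h(z)^\top T2_{h,n}^k| &= \left|\phi_h(z)^\top {\Lambda_{h,n}^k}^{-1}\overset{\circ}{\theta}_{h,n}(\overline Q_{h+1}^k)\right|\\
&\le \|\phi_h(z)\|_{{\Lambda_{h,n}^k}^{-1}}\,\left\|\overset{\circ}{\theta}_{h,n}(\overline Q_{h+1}^k)\right\|_{{\Lambda_{h,n}^k}^{-1}},
\end{align*}
by Cauchy–Schwarz in the inner product induced by ${\Lambda_{h,n}^k}^{-1}$ (using that $a^\top M^{-1} b = \langle a, b\rangle_{M^{-1}}$ and $|\langle a,b\rangle_{M^{-1}}| \le \|a\|_{M^{-1}}\|b\|_{M^{-1}}$). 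Then since $\Lambda_{h,n}^k \succeq \lambda I$ we have ${\Lambda_{h,n}^k}^{-1}\preceq \lambda^{-1}I$, hence $\|\overset{\circ}{\theta}_{h,n}(\overline Q_{h+1}^k)\|_{{\Lambda_{h,n}^k}^{-1}} \le \lambda^{-1/2}\|\overset{\circ}{\theta}_{h,n}(\overline Q_{h+1}^k)\|_2 \le \lambda^{-1/2}\Rs_{h,n}$. Combining, $|\phi_h(z)^\top T2_{h,n}^k| \le \|\phi_h(z)\|_{{\Lambda_{h,n}^k}^{-1}}\lambda^{-1/2}\Rs_{h,n}$; writing $\lambda^{-1}\Rs_{h,n}$ in place of $\lambda^{-1/2}\Rs_{h,n}$ is harmless when $\lambda \le 1$ (the regime used, $\lambda=1$, makes them equal), and matches the statement.

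For the second claim, I would use $\|T2_{h,n}^k\|_{\Lambda_{h,n}^k}^2 = (T2_{h,n}^k)^\top \Lambda_{h,n}^k T2_{h,n}^k = \overset{\circ}{\theta}_{h,n}(\overline Q_{h+1}^k)^\top {\Lambda_{h,n}^k}^{-1}\overset{\circ}{\theta}_{h,n}(\overline Q_{h+1}^k) = \|\overset{\circ}{\theta}_{h,n}(\overline Q_{h+1}^k)\|_{{\Lambda_{h,n}^k}^{-1}}^2 \le \lambda^{-1}\Rs_{h,n}^2$, so $\|T2_{h,n}^k\|_{\Lambda_{h,n}^k} \le \lambda^{-1/2}\Rs_{h,n} \le \lambda^{-1}\Rs_{h,n}$. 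I do not anticipate any real obstacle here — the lemma is a routine consequence of $\lambda$-regularization; the only point demanding care is the convention on $\Bs_{h,n}$ that turns a diameter into a norm bound, and I would state that convention explicitly (as in \cite{zanette2020learning}) rather than re-deriving it.
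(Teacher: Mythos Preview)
Your argument is correct and matches the paper's: Cauchy--Schwarz splits off $\|\phi_h(z)\|_{{\Lambda_{h,n}^k}^{-1}}$, then $\Lambda_{h,n}^k\succeq\lambda I$ together with $\overset{\circ}{\theta}_{h,n}(\overline Q_{h+1}^k)\in\Bs_{h,n}$ finishes both bounds. You are in fact a touch more careful than the paper on two points---the exponent of $\lambda$ (you obtain $\lambda^{-1/2}$ and then relax to $\lambda^{-1}$ for $\lambda\le 1$, whereas the paper writes $\lambda^{-1}$ directly) and the diameter-versus-norm convention on $\Bs_{h,n}$, which the paper invokes without comment.
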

\begin{proof}
    By definition,
    \begin{align*}|\phi_h(z)^\top T2_{h,n}^k|&=\left |\phi_h(z)^\top{\Lambda_{h,n}^k}^{-1}\overset{\circ}{\theta}_{h,n}(\overline Q_{h+1}^k)\right |\\
    &\le \|{\Lambda_{h,n}^k}^{-1}\phi_h(z)\|_{{\Lambda_{h,n}^k}}\|\overset{\circ}{\theta}_{h,n}(\overline Q_{h+1}^k)\|_{{\Lambda_{h,n}^k}^{-1}}\\
    &=\|\phi_h(z)\|_{{\Lambda_{h,n}^k}^{-1}}\|\overset{\circ}{\theta}_{h,n}(\overline Q_{h+1}^k)\|_{{\Lambda_{h,n}^k}^{-1}}\\
    &\le \|\phi_h(z)\|_{{\Lambda_{h,n}^k}^{-1}}\lambda^{-1}\|\overset{\circ}{\theta}_{h,n}(\overline Q_{h+1}^k)\|_{2}\\
    &\le \|\phi_h(z)\|_{{\Lambda_{h,n}^k}^{-1}}\lambda^{-1} \Rs_{h,n},\end{align*}
    where the second inequality comes from the fact that $\Lambda_{h,n}^k$ is the sum of $\lambda I$ and a positive semi-definite matrix, and the last one from the fact that $\overset{\circ}{\theta}_{h,n}(\overline Q_{h+1}^k)\in \Bs_{h,n}$. The second part comes from
    $$\|T2_{h,n}^k\|_{{\Lambda_{h,n}^k}}=\|{\Lambda_{h,n}^k}^{-1}\overset{\circ}{\theta}_{h,n}(\overline Q_{h+1}^k)\|_{{\Lambda_{h,n}^k}}= \|\overset{\circ}{\theta}_{h,n}(\overline Q_{h+1}^k)\|_{{\Lambda_{h,n}^k}^{-1}}\le \lambda^{-1} \Rs_{h,n}.$$
\end{proof}

The last part of the sum is more complex and, in order to bound it, we need to define a failure event.

\subsection{Failure event}

For every $h,k,n$ we define the failure event in the following way:

$$F_{h,n}^k:=\left\{ \exists V\in \Vs_h:\  \left \|\sum_{t=1}^{k-1}\mathbf 1\{\rho_h(z_h^\tau)=n\}\phi_{h}^\tau\eta_h^\tau(V)\right\|_{{\Lambda_{h,n}^k}^{-1}} > \sqrt{\beta_{h,n}^k} \right \},$$

for a threshold $\sqrt{\beta_{h,n}^k}$ to be defined. The first step to bound the probability of this event is to compute the covering number of the function space $\Vs_h$

\begin{prop}\label{prop:coveringsize}
    The $\varepsilon-$covering number of $\Vs_h$ in infinity norm satisfies
    $$\log \Ns(\varepsilon, \Vs_h)\le  \Os\left(Nd\log\left(\Rs_{h,\max}/\varepsilon\right)\right),$$

    where $\Rs_{h,\max}:=\sup_{n\in [N_h]}\Rs_{h,n}$.
\end{prop}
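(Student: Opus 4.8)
The goal is to bound the $\varepsilon$-covering number (in $L^\infty$) of the function class $\Vs_h = \{V(s) = \max_{a \in \As} \phi_h(s,a)^\top \theta_{h,\rho_h(s,a)} : \theta_{h,n} \in \Bs_{h,n}\}$. The natural approach is to reduce covering $\Vs_h$ to covering the parameter space $\Bs_h = \bigtimes_{n=1}^{N_h} \Bs_{h,n} \subset \R^{d_h \times N_h}$, since each element of $\Vs_h$ is indexed by the list $\bm\theta_h = \{\theta_{h,n}\}_{n=1}^{N_h}$. The plan is: (i) show the map $\bm\theta_h \mapsto V[\bm\theta_h]$ is Lipschitz from $(\Bs_h, \|\cdot\|)$ to $(\Vs_h, \|\cdot\|_{L^\infty})$ with a Lipschitz constant controlled by $L_\phi$; (ii) invoke the standard bound on the covering number of a bounded subset of Euclidean space; (iii) combine.

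For step (i), fix two parameter lists $\bm\theta_h, \bm\theta_h'$ and a state $s$. Choosing $a^\star \in \argmax_a \phi_h(s,a)^\top \theta_{h,\rho_h(s,a)}$, the standard $|\max f - \max g| \le \max|f-g|$ argument gives
\[
\bigl|V[\bm\theta_h](s) - V[\bm\theta_h'](s)\bigr| \le \max_{a\in\As} \bigl|\phi_h(s,a)^\top(\theta_{h,\rho_h(s,a)} - \theta'_{h,\rho_h(s,a)})\bigr| \le L_\phi \max_{n \in [N_h]} \|\theta_{h,n} - \theta'_{h,n}\|_2,
\]
using Cauchy--Schwarz and $\|\phi_h(s,a)\|_2 \le L_\phi$. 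So if the $\theta_{h,n}$ and $\theta'_{h,n}$ are componentwise within $\varepsilon/L_\phi$ in $\|\cdot\|_2$, then $V[\bm\theta_h]$ and $V[\bm\theta_h']$ are within $\varepsilon$ in $L^\infty$. For step (ii), each $\Bs_{h,n}$ is a compact subset of $\R^{d_h}$ of diameter $\Rs_{h,n} \le \Rs_{h,\max}$, so it admits a $(\varepsilon/L_\phi)$-net of size at most $(3 L_\phi \Rs_{h,\max}/\varepsilon)^{d_h}$ (the standard volumetric bound, $(1 + 2r/\delta)^d \le (3r/\delta)^d$ for $\delta \le r$). Taking the product over the $N_h$ regions yields a net for $\Bs_h$ of size at most $(3L_\phi \Rs_{h,\max}/\varepsilon)^{N_h d_h}$, and by step (i) this pushes forward to an $\varepsilon$-net of $\Vs_h$. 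Taking logarithms and absorbing $L_\phi = \Os(1)$ and the constant $3$ into the $\Os(\cdot)$ gives $\log \Ns(\varepsilon,\Vs_h) \le \Os(N_h d_h \log(\Rs_{h,\max}/\varepsilon))$, which matches the claimed bound with $N = N_h$, $d = d_h$.

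I do not expect any serious obstacle here; this is a routine covering-number argument. The only mild subtlety is that the region-indicator structure $\rho_h(s,a)$ could in principle make the map $\bm\theta_h \mapsto V[\bm\theta_h]$ discontinuous as a function of $s$ — but that is irrelevant, since we are covering in the \emph{parameter}, not in $s$, and for a \emph{fixed} $\bm\theta_h$ perturbation the bound above holds uniformly over $s$ regardless of which region each $(s,a)$ falls in. One should also double-check the edge case $\varepsilon > L_\phi \Rs_{h,\max}$ (where a single point covers everything and the log is negative/zero), but since covering numbers are only interesting for small $\varepsilon$ this is harmless and the $\Os(\cdot)$ statement is understood for $\varepsilon$ below the diameter scale.
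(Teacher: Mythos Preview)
Your proposal is correct and follows essentially the same route as the paper's proof: both reduce to covering the product parameter space $\bigtimes_{n=1}^{N_h}\Bs_{h,n}\subset\R^{N_hd_h}$ via a Lipschitz/non-expansiveness argument (the paper inserts the intermediate step $\Ns(\varepsilon,\Vs_h)\le\Ns(\varepsilon,\Qs_h)$ before covering region-by-region, while you combine the $\max$ and the linear part into a single Lipschitz map, which is a harmless streamlining). The resulting bound and the constants involved are identical.
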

\begin{proof}
    Note that the covering number of $\Vs_h$ is not higher than the one of $\Qs_h$. Indeed, let $\Qs_h^\varepsilon$ be a $\varepsilon-$cover for $\Qs_h$ and define
    $$\Vs_h^\varepsilon:= \{V(s)=\max_{a\in \As}Q(s,a): Q\in \Qs_h^\varepsilon\}.$$
    Then, taking any $V\in \Vs_h$ there is $Q\in \Qs_h$ such that $V(s)=\max_{a\in \As}Q(s,a)$. At this point, taking
    $$\widehat V(s)=\max_{a\in \As}\widehat Q(s,a),$$
    where $\widehat Q \in \Qs_h^\varepsilon$ such that $\|Q-\widehat Q\|_\infty\le \varepsilon$, we have that $\widehat V\in \Vs_h^\varepsilon$ by definition and
    $$\|Q-\widehat Q\|_\infty\le \|V-\widehat V\|_\infty\le \varepsilon.$$

    Now, the question is reduced to covering $\Qs_h$. by definition, every $Q_h\in \Qs_h$ takes the form
    $$Q_h(z)=\phi_h(z)^\top \theta_{h,\rho_h(z)},$$

    where $\rho_h: \Zs \to [N_h]$ and every vector $\theta_{h,n}$ is $d_h-$dimensional with the norm bounded by $\Rs_{h,n}$. As the domain $\Zs$ is partitioned into regions $\Zs_{h,n}$, if we get a family of coverings $\Qs_{h,n}^{\varepsilon}$ that cover each function in $\Qs_h$ restricted to the set $\Zs_{h,n}$, we can obtain a covering of $\Qs_h$ by defining it in this way

    $$\Qs_{h}^{\varepsilon}:= \left \{Q:Q|_{\Zs_{h,n}}=Q_n\qquad  Q_n\in \Qs_{h,n}^{\varepsilon}\right\},$$

    i.e., we cover every function $Q$ by taking the nearest cover function on each region. As a result, the total number of elements in $\Qs_{h}^{\varepsilon}$ corresponds to the product $\prod_{n=1}^{N_h}|\Qs_{h,n}^{\varepsilon}|$ of the elements in the smaller sets, so that we have only to estimate $|\Qs_{h,n}^{\varepsilon}|$ for every $n$. To this aim, note that the functions in this set are all linear, as we are restricting to $\Zs_{h,n}$, so that, applying a standard bound for the covering number of spaces of linear functions we can find $\Qs_{h,n}^{\varepsilon}$ such that

    $$|\Qs_{h,n}^{\varepsilon}|=\Os\left((\Rs_{h,n}/\varepsilon)^{d_h}\right).$$

    Therefore, the total covering size amounts to

    $$|\Qs_{h}^{\varepsilon}|=\Os\left((\sup_{n\in [N_h]}\Rs_{h,n}/\varepsilon)^{Nd}\right),$$

    which completes the proof.
\end{proof}

\begin{thm}
    For a choice of 
    $$\sqrt {\beta_{h,n}^k}=\widetilde \Os\left (\sqrt{d_h+\log(\Ns(1/\sqrt k, \Vs_h))+\log(1/\delta)}\right),$$ 
    we have $\Prob(F_{h,n}^k)\le \delta$.  
\end{thm}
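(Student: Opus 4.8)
The plan is to bound the probability of the failure event $F_{h,n}^k$ by combining a self-normalized concentration inequality (of the type of the Abbasi-Yadkori et al.\ martingale bound, as used in~\cite{jin2020provably,zanette2020learning}) with a covering argument over the function class $\Vs_h$, so that the threshold $\sqrt{\beta_{h,n}^k}$ absorbs both the dimension $d_h$ of the regression and the metric entropy $\log\Ns(1/\sqrt k,\Vs_h)$ of the class from which $V$ is drawn. First I would fix $h$, $n$, $k$ and observe that for a \emph{fixed} deterministic function $V:\Ss\to[0,1]$, the summands $\mathbf 1\{\rho_h(z_h^\tau)=n\}\phi_h^\tau \eta_h^\tau(V)$ form a martingale difference sequence adapted to the natural filtration generated by the history up to step $h$ of episode $\tau$: indeed $\E[\eta_h^\tau(V)\mid \Fs_{\tau,h}]=0$ by the definition of the Bellman error $\eta_h^\tau(V)=r_h^\tau-r_h(z_h^\tau)+V(s_{h+1}^\tau)-\E_{s'\sim p_h(\cdot|z_h^\tau)}[V(s')]$, and the reward noise $r_h^\tau-r_h(z_h^\tau)$ is $1$-subgaussian by Assumption~\ref{ass:normalized} while $V(s_{h+1}^\tau)-\E[V(s_{h+1}^\tau)]$ is bounded in $[-1,1]$, hence subgaussian with a constant of order $1$. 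The indicator $\mathbf 1\{\rho_h(z_h^\tau)=n\}$ is $\Fs_{\tau,h}$-measurable and only has the effect of zeroing some terms, so it does not break adaptedness. Applying the self-normalized bound with the (random, but predictable in the appropriate sense) regularizer $\Lambda_{h,n}^k$ gives, with probability at least $1-\delta'$,
\begin{equation*}
\left\|\sum_{\tau=1}^{k-1}\mathbf 1\{\rho_h(z_h^\tau)=n\}\phi_h^\tau\eta_h^\tau(V)\right\|_{{\Lambda_{h,n}^k}^{-1}}^2 \le \Os\!\left(d_h\log\!\big(1+kL_\phi^2/(\lambda d_h)\big)+\log(1/\delta')\right).
\end{equation*}

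The second step is to upgrade this from a single fixed $V$ to all $V\in\Vs_h$ simultaneously. For this I would take a minimal $\varepsilon$-cover $\Vs_h^\varepsilon$ of $\Vs_h$ in $\|\cdot\|_{L^\infty}$ with $\varepsilon = 1/\sqrt k$, whose log-cardinality is controlled by Proposition~\ref{prop:coveringsize} (i.e.\ $\log|\Vs_h^\varepsilon| = \Os(Nd\log(\Rs_{h,\max}\sqrt k))$), apply the single-function bound with $\delta' = \delta/|\Vs_h^\varepsilon|$ and a union bound over the cover, and then discretize: for a general $V\in\Vs_h$ pick $\widehat V\in\Vs_h^\varepsilon$ with $\|V-\widehat V\|_{L^\infty}\le\varepsilon$. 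Writing $\eta_h^\tau(V)-\eta_h^\tau(\widehat V) = (V-\widehat V)(s_{h+1}^\tau)-\E_{s'\sim p_h(\cdot|z_h^\tau)}[(V-\widehat V)(s')]$, which is bounded by $2\varepsilon$ in absolute value, the discretization error in the relevant norm is at most
\begin{equation*}
\left\|\sum_{\tau=1}^{k-1}\mathbf 1\{\rho_h(z_h^\tau)=n\}\phi_h^\tau\big(\eta_h^\tau(V)-\eta_h^\tau(\widehat V)\big)\right\|_{{\Lambda_{h,n}^k}^{-1}} \le 2\varepsilon\sqrt{p_{h,n}^k}\,\|\cdot\| \le \Os(\sqrt k\,\varepsilon\,L_\phi/\sqrt\lambda) = \Os(L_\phi/\sqrt\lambda),
\end{equation*}
using the same Cauchy-Schwarz / Lemma-8-type manipulation employed in the proof of Lemma~\ref{lem:boundT1} (bounding $\|\sum a_i b_i\|_{\Lambda^{-1}}\le \max_i|b_i|\sqrt{p}$ with $\lambda\ge 1$, $L_\phi=\Os(1)$). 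Collecting the cover bound and the discretization term and folding the resulting $\log$-factors into the $\widetilde\Os(\cdot)$ notation yields precisely
\begin{equation*}
\sqrt{\beta_{h,n}^k} = \widetilde\Os\!\left(\sqrt{d_h+\log\Ns(1/\sqrt k,\Vs_h)+\log(1/\delta)}\right),
\end{equation*}
and then $\Prob(F_{h,n}^k)\le\delta$ as claimed, up to renaming $\delta$ by a constant factor absorbed in $\widetilde\Os$.

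I expect the main obstacle to be a measurability/adaptedness subtlety rather than any hard calculation: the cover $\Vs_h^\varepsilon$ must be chosen \emph{in advance} (it depends only on the fixed geometric data $\phi_h,\rho_h,\Bs_{h,n}$ via Proposition~\ref{prop:coveringsize}, not on the run of the algorithm), so that the union bound over it is legitimate, whereas the function $V=\overline V_{h+1}^k$ that the bound is ultimately applied to is data-dependent; the discretization step is exactly what bridges this gap, and one has to make sure the $\varepsilon$-net is fine enough ($\varepsilon\asymp 1/\sqrt k$ suffices precisely because the accumulated discretization error is $O(\sqrt k\cdot\varepsilon)$, which is then $O(1)$ and negligible next to $\sqrt{\beta_{h,n}^k}$). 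A secondary point requiring a little care is that $\Lambda_{h,n}^k$ includes the indicator-weighted outer products through episode $k$ (not $k-1$), so the $\phi_h^k$ term appearing in $\Lambda_{h,n}^k$ but not in the sum only helps (it makes ${\Lambda_{h,n}^k}^{-1}$ smaller in the Loewner order), and the self-normalized bound still applies with the standard argument since one may equally run it with $\Lambda_{h,n}^{k}$ in place of the ``correct'' predictable normalizer $\lambda I + \sum_{\tau<k}$; alternatively one invokes the version of the inequality that already allows this, as in~\cite{zanette2020learning}.
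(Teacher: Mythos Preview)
Your proposal is correct and follows essentially the same approach as the paper: fix a deterministic $V$ and apply the self-normalized martingale bound of Abbasi-Yadkori et al., then union-bound over an $\varepsilon$-cover of $\Vs_h$ with $\varepsilon=1/\sqrt k$, and control the discretization error via the same Lemma~\ref{lem:boundT1}/Lemma~8 argument to get an $\Os(1)$ additive term. Your discussion of the adaptedness subtlety and of the $\Lambda_{h,n}^k$ versus $\Lambda_{h,n}^{k-1}$ issue is even more careful than the paper's own proof, which takes these points for granted.
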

\begin{proof}
    Let $\Vs_h^\varepsilon$ be a $\varepsilon$ cover of $\Vs_h$ in infinity norm, so that $|\Vs_h^\varepsilon|=\Ns(\varepsilon, \Vs_h)$. Now, for every $V\in \Vs_h$ we call
    $$\widetilde V\in \Vs_h^\varepsilon:\ \|\widetilde V-V\|_{L^\infty}\le \varepsilon.$$

    In this way, we have
    \begin{align*}
        \left \|\sum_{t=1}^{k-1}\rho_{h,n}^\tau\phi_{h}^\tau\eta_h^\tau(V)\right\|_{{\Lambda_{h,n}^k}^{-1}} &=\left \|\sum_{t=1}^{k-1}\rho_{h,n}^\tau\phi_{h}^\tau\left(r_h^k-r_h(s_h^k,a_h^k)+V(s_{h+1}^k)-\E_{s'\sim p_h(\cdot|s_h^k,a_h^k)}[V(s')]\right)\right\|_{{\Lambda_{h,n}^k}^{-1}}\\
        & \le \left \|\sum_{t=1}^{k-1}\rho_{h,n}^\tau\phi_{h}^\tau\left(r_h^k-r_h(s_h^k,a_h^k)+\widetilde V(s_{h+1}^k)-\E_{s'\sim p_h(\cdot|s_h^k,a_h^k)}[\widetilde V(s')]\right)\right\|_{{\Lambda_{h,n}^k}^{-1}}\\
        & \qquad + \left \|\sum_{t=1}^{k-1}\rho_{h,n}^\tau\phi_{h}^\tau\left(V(s_{h+1}^k)-\widetilde V(s_{h+1}^k)\right)\right\|_{{\Lambda_{h,n}^k}^{-1}}\\
        & \qquad + \left \|\sum_{t=1}^{k-1}\rho_{h,n}^\tau\phi_{h}^\tau\left(\E_{s'\sim p_h(\cdot|s_h^k,a_h^k)}[\widetilde V(s')- V(s')]\right)\right\|_{{\Lambda_{h,n}^k}^{-1}}.
    \end{align*}

As we have
$$|V(s_{h+1}^k)-\widetilde V(s_{h+1}^k)|\le \varepsilon,\qquad |\E_{s'\sim p_h(\cdot|s_h^k,a_h^k)}[\widetilde V(s')- V(s')]|\le \varepsilon,$$
the same argument used in the proof Lemma \ref{lem:boundT1}, allows us to bound the last two terms with $2\sqrt {p_{h,n}^k}\varepsilon.$ The remaining term corresponds, for fixed $\widetilde V$ and indicating with $\Fs_{h}^k$ the filtration generated by all the events of the process up to step $h$ of episode $k$, to

$$\left \|\sum_{t=1}^{k-1}\underbrace{\rho_{h,n}^\tau\phi_{h}^\tau}_{\Fs_{h}^k-meas.}\underbrace{\left(r_h^k-r_h(s_h^k,a_h^k)+\widetilde V(s_{h+1}^k)-\E_{s'\sim p_h(\cdot|s_h^k,a_h^k)}[\widetilde V(s')]\right)}_{\E[\cdot|\Fs_{h}^k]=0}\right\|_{{\Lambda_{h,n}^k}^{-1}},$$

where the first term is an $\R^{d_h}-$valued stochastic process which is entirely determined by the current state and current action, while the second is zero-mean conditioned on the rest of the process, and $1-$subgaussian thanks to assumption \ref{ass:normalized}. This last fact allows to apply Theorem 1 from \cite{abbasi2011improved}, obtaining with probability at least $1-\delta$, for fixed $\widetilde V$,

\begin{align*}\left \|\sum_{t=1}^{k-1}\rho_{h,n}^\tau\phi_{h}^\tau\left(r_h^k-r_h(s_h^k,a_h^k)+\widetilde V(s_{h+1}^k)-\E_{s'\sim p_h(\cdot|s_h^k,a_h^k)}[\widetilde V(s')]\right)\right\|_{{\Lambda_{h,n}^k}^{-1}}^2\end{align*}
$$\le 2\log \left(\frac{\det \lambda I^{-1/2}\det {\Lambda_{h,n}^k}^{1/2}}{\delta}\right) \le d_h\log(\lambda^{-1})+d_h\log((1+kL_\phi))+\log(\delta^{-1}).$$

Applying a union bound, it follows that the same result holds for every function in $\Vs_h^\varepsilon$ if we replace $\log(\delta^{-1})$ with $\log(\Ns(\varepsilon, \Vs_h)/\delta)$. Merging this result with what we have found before, we have that, with probability at least $1-\delta$, for all $V\in \Vs_h$, we have

\begin{align*}
        \left \|\sum_{t=1}^{k-1}\rho_{h,n}^\tau\phi_{h}^\tau\eta_h^\tau(V)\right\|_{{\Lambda_{h,n}^k}^{-1}} &\le  \sqrt{d_h\log(\lambda^{-1})+d_h\log((1+kL_\phi))+\log(\Ns(\varepsilon, \Vs_h))+\log(1/\delta)}\\
        & \qquad + 2\sqrt {p_{h,n}^k}\varepsilon.
\end{align*}
    If we just bound $\sqrt {p_{h,n}^k}\le \sqrt k$, we can take $\varepsilon=1/\sqrt k$ to have that the previous is bounded by
    $$\sqrt {\beta_{h,n}^k}:=\sqrt{d_h\log(\lambda^{-1})+d_h\log((1+kL_\phi))+\log(\Ns(1/\sqrt k, \Vs_h))+\log(1/\delta)}+ 2,$$
    which completes the proof.
\end{proof}

From this theorem, a simple corollary follows by just taking a union bound over $h\in [H],n\in [N_h],k\in [K]$:
\begin{cor}\label{cor:beta}
    For a choice of 
    $$\sqrt {\beta_{h,n}^k}=\widetilde \Os\left (\sqrt{d_h+\log(\Ns(1/\sqrt k, \Vs_h)}+\log(N_hHK/\delta))\right),$$ 
    we have 
    $$\Prob\left(\bigcup_{h,n,k} F_{h,n}^k\right)\le \delta.$$
\end{cor}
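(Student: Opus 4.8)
The statement to prove is Corollary~\ref{cor:beta}, which upgrades the previous theorem (a high-probability bound on a single failure event $F_{h,n}^k$) to a simultaneous bound over all choices of $h \in [H]$, $n \in [N_h]$, and $k \in [K]$.

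\textbf{Proof plan.} The plan is to apply a straightforward union bound over the index set $\{(h,n,k) : h \in [H],\, n \in [N_h],\, k \in [K]\}$. First, I would recall that the preceding theorem gives, for each fixed triple $(h,n,k)$ and each failure probability parameter $\delta' > 0$, a threshold
$$\sqrt{\beta_{h,n}^{k}(\delta')} = \widetilde{\Os}\!\left(\sqrt{d_h + \log\big(\Ns(1/\sqrt{k}, \Vs_h)\big) + \log(1/\delta')}\right)$$
such that $\Prob(F_{h,n}^{k}) \le \delta'$. The number of triples is at most $\sum_{h=1}^{H} N_h K \le N_{\max} H K$, where $N_{\max} = \max_h N_h$; more crudely one can bound it by $N_h H K$ inside the logarithm after noting the dependence on $h$ is benign. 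Then I would set $\delta' = \delta / (N_{\max} H K)$ (equivalently, absorb the factor into the $\log$) and invoke the union bound:
$$\Prob\!\left(\bigcup_{h,n,k} F_{h,n}^{k}\right) \le \sum_{h,n,k} \Prob(F_{h,n}^{k}) \le \sum_{h,n,k} \delta' \le \delta.$$

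\textbf{Effect on the threshold.} Substituting $\delta' = \delta/(N_{\max}HK)$ into the expression for $\sqrt{\beta_{h,n}^k}$ replaces $\log(1/\delta')$ by $\log(N_{\max}HK/\delta) = \log(1/\delta) + \log(N_{\max}) + \log(H) + \log(K)$. The terms $\log H$ and $\log K$ are swallowed by the $\widetilde{\Os}(\cdot)$ notation (which hides polylogarithmic factors in $H, K$), leaving the stated form $\sqrt{\beta_{h,n}^k} = \widetilde{\Os}\big(\sqrt{d_h + \log(\Ns(1/\sqrt{k}, \Vs_h)) + \log(N_h H K/\delta)}\big)$. One cosmetic subtlety: the statement writes $N_h$ rather than $N_{\max}$ inside the log; since we only need an upper bound and $\log N_h \le \log N_{\max}$, one can either use $N_{\max}$ throughout and note it is $\le \max_h N_h$, or observe that taking $\delta' = \delta/(N_h H K \cdot H)$ (an extra $H$ factor to pay for the sum over $h$ of the per-$h$ union bounds over $n \in [N_h]$ and $k \in [K]$) still lands inside $\widetilde{\Os}$.

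\textbf{Main obstacle.} There is essentially no obstacle here — this is a routine union-bound argument, and the only thing to be careful about is bookkeeping: making sure the number of events is correctly counted (the subtlety being that $N_h$ varies with $h$, so the total is $\sum_h N_h K$, not $N H K$ with a single $N$), and confirming that all extra logarithmic factors are indeed polylogarithmic in the quantities hidden by $\widetilde{\Os}$. Strictly speaking, the theorem statement already carries an implicit dependence on $k$ through $\Ns(1/\sqrt{k}, \Vs_h)$; since we want the bound to hold for all $k$ simultaneously, it is cleanest to keep the threshold $k$-dependent (as written) rather than bounding $\Ns(1/\sqrt{k},\Vs_h) \le \Ns(1/\sqrt{K},\Vs_h)$, which would only loosen things. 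With these remarks the corollary follows immediately.
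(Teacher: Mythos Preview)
Your proposal is correct and matches the paper's own proof, which consists of a single sentence stating that the corollary follows from the preceding theorem ``by just taking a union bound over $h\in [H],\ n\in [N_h],\ k\in [K]$.'' Your additional bookkeeping remarks about $N_h$ versus $N_{\max}$ and the absorption of $\log H$, $\log K$ into the $\widetilde{\Os}$ are more careful than the paper itself, but the underlying argument is identical.
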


From now on, we are going to indicate the good event $E$ as the opposite of the event defined in the previous corollary

$$E:=\bigcap_{h,n,k} (F_{h,n}^k)^c,$$

moreover, $\sqrt {\beta_{h,n}^k}$ will always be the quantity defined by Corollary \ref{cor:beta}.
We conclude the section proving the following

\begin{lem}\label{lem:boundT3}
    Under the event $E$, for any $z\in \Zs$, any time-step $h$ of episode $k$ and any $n$, we have
    $$|\phi_h(z)^\top T3_{h,n}^k|\le \|\phi_h(z)\|_{{\Lambda_{h,n}^k}^{-1}}\sqrt{\beta_{h,n}^k},$$
    and, in particular
    $$\|T3_{h,n}^k\|_{{\Lambda_{h,n}^k}^{-1}}\le \sqrt{\beta_{h,n}^k}.$$
\end{lem}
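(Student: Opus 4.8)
The plan is to prove Lemma \ref{lem:boundT3} by the same Cauchy--Schwarz-in-the-$\Lambda$-norm trick already used for $T1$ and $T2$, combined with the definition of the good event $E$. Recall that
$$T3_{h,n}^k = {\Lambda_{h,n}^k}^{-1}\sum_{\tau=1}^{k-1}\mathbf 1\{\rho_h(z_h^\tau)=n\}\phi_h^\tau \eta_h^\tau(\overline V_{h+1}^k).$$
First I would write, for any $z\in\Zs$,
\begin{align*}
|\phi_h(z)^\top T3_{h,n}^k| &= \left|\phi_h(z)^\top {\Lambda_{h,n}^k}^{-1}\sum_{\tau=1}^{k-1}\mathbf 1\{\rho_h(z_h^\tau)=n\}\phi_h^\tau \eta_h^\tau(\overline V_{h+1}^k)\right|\\
&\le \|{\Lambda_{h,n}^k}^{-1}\phi_h(z)\|_{\Lambda_{h,n}^k}\left\|\sum_{\tau=1}^{k-1}\mathbf 1\{\rho_h(z_h^\tau)=n\}\phi_h^\tau \eta_h^\tau(\overline V_{h+1}^k)\right\|_{{\Lambda_{h,n}^k}^{-1}}\\
&= \|\phi_h(z)\|_{{\Lambda_{h,n}^k}^{-1}}\left\|\sum_{\tau=1}^{k-1}\mathbf 1\{\rho_h(z_h^\tau)=n\}\phi_h^\tau \eta_h^\tau(\overline V_{h+1}^k)\right\|_{{\Lambda_{h,n}^k}^{-1}},
\end{align*}
using the general identity $\|{\Lambda}^{-1}x\|_{\Lambda} = \|x\|_{{\Lambda}^{-1}}$ for a positive definite matrix $\Lambda$ (which holds since ${\Lambda}^{-1}x{}^\top\Lambda{\Lambda}^{-1}x = x^\top {\Lambda}^{-1}x$), exactly as in the proofs of Lemmas \ref{lem:boundT1} and \ref{lem:boundT2}.

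The second step is the crucial one: I need to bound the $\Lambda_{h,n}^k$-inverse norm of $\sum_\tau \mathbf 1\{\rho_h(z_h^\tau)=n\}\phi_h^\tau \eta_h^\tau(\overline V_{h+1}^k)$ by $\sqrt{\beta_{h,n}^k}$. This is precisely what the complement of the failure event $F_{h,n}^k$ gives us, \emph{provided} that $\overline V_{h+1}^k \in \Vs_{h+1}$ — since $F_{h,n}^k$ quantifies over \emph{all} $V\in\Vs_{h}$ (here with the $h+1$ indexing, matching $\eta_h^\tau(V)$ which uses $V(s_{h+1}^\tau)$). But $\overline V_{h+1}^k$ is, by construction of the optimization program \eqref{eq:optim}, of the form $\max_{a}\phi_{h+1}(\cdot,a)^\top\overline\theta^k_{h+1,\rho(\cdot,a)}$ with $\overline\theta^k_{h+1,n}\in\Bs_{h+1,n}$ (this membership follows from the feasibility constraints: $\overline\theta = \widehat\theta+\overline\xi$ lies in $\Bs_{h+1,n}$ — or one restricts the feasible set to enforce it, as in \textsc{Eleanor}), so $\overline V_{h+1}^k\in\Vs_{h+1}$. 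Hence under $E = \bigcap_{h,n,k}(F_{h,n}^k)^c$ the bound applies with $V=\overline V_{h+1}^k$, giving
$$\left\|\sum_{\tau=1}^{k-1}\mathbf 1\{\rho_h(z_h^\tau)=n\}\phi_h^\tau \eta_h^\tau(\overline V_{h+1}^k)\right\|_{{\Lambda_{h,n}^k}^{-1}}\le \sqrt{\beta_{h,n}^k},$$
which yields the first displayed inequality of the lemma. The second inequality $\|T3_{h,n}^k\|_{{\Lambda_{h,n}^k}^{-1}}\le\sqrt{\beta_{h,n}^k}$ — note it is the \emph{inverse} norm here, unlike $T1,T2$ where it was the $\Lambda$-norm — is even more direct: $\|T3_{h,n}^k\|_{\Lambda_{h,n}^k}$ would equal the sum's $\Lambda^{-1}$-norm, hence $\le\sqrt{\beta_{h,n}^k}$, and since $\Lambda_{h,n}^k\succeq\lambda I\succeq I$ (for $\lambda\ge 1$, or after the normalization), the $\Lambda^{-1}$-norm is dominated by the $\Lambda$-norm, so $\|T3_{h,n}^k\|_{{\Lambda_{h,n}^k}^{-1}}\le\|T3_{h,n}^k\|_{\Lambda_{h,n}^k}\le\sqrt{\beta_{h,n}^k}$; alternatively one simply plugs $z$ appropriately or uses that the $\Lambda^{-1}$-norm of $\Lambda^{-1}y$ is bounded by the $\Lambda^{-1}$-norm of $y$ when $\Lambda\succeq I$.

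The main obstacle I anticipate is the measurability/adaptivity bookkeeping hidden in the second step, and the subtle point that $\overline V_{h+1}^k$ depends on the solution of the optimization program, which in turn is computed from all the data — so one cannot directly apply a martingale concentration bound to $\eta_h^\tau(\overline V_{h+1}^k)$ for this data-dependent $\overline V_{h+1}^k$. The resolution, already baked into the definition of $F_{h,n}^k$ and the proof of the preceding theorem, is the \emph{uniform} covering argument: concentration is established simultaneously for every $V$ in a net of $\Vs_{h+1}$ (and extended to all of $\Vs_{h+1}$ by the $\varepsilon$-net discretization error term $2\sqrt{p_{h,n}^k}\varepsilon$), so the a-posteriori choice $V=\overline V_{h+1}^k$ is covered. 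Thus the content of Lemma \ref{lem:boundT3} is genuinely just ``Cauchy--Schwarz plus the definition of $E$ plus $\overline V_{h+1}^k\in\Vs_{h+1}$'', with all the probabilistic heavy lifting already discharged; the only care needed is to state explicitly why $\overline V_{h+1}^k\in\Vs_{h+1}$ and to keep the $\Lambda$ versus $\Lambda^{-1}$ norms straight.
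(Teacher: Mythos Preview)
Your proof is correct and follows exactly the same Cauchy--Schwarz-plus-good-event argument as the paper. You are also right to flag the anomalous $\Lambda^{-1}$-norm in the second displayed inequality: the paper's own proof in fact establishes the stronger bound $\|T3_{h,n}^k\|_{\Lambda_{h,n}^k}\le\sqrt{\beta_{h,n}^k}$ (matching Lemmas \ref{lem:boundT1} and \ref{lem:boundT2}), so the ${}^{-1}$ in the statement is almost certainly a typo, and your extra step deducing the weaker $\Lambda^{-1}$-norm bound from $\Lambda_{h,n}^k\succeq I$ is a clean way to reconcile the statement as written.
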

\begin{proof}
    By definition,
    \begin{align*}
        |\phi_h(z)^\top T3_{h,n}^k|&= \left |\phi_h(z)^\top{\Lambda_{h,n}^k}^{-1}\sum_{t=1}^{k-1}\mathbf 1\{\rho_h(z_h^\tau)=n\}\phi_{h}^\tau\eta_h^\tau(\overline V_{h+1})\right|\\
        &\le \|{\Lambda_{h,n}^k}^{-1}\phi_h(z)\|_{{\Lambda_{h,n}^k}}\left\|\sum_{t=1}^{k-1}\mathbf 1\{\rho_h(z_h^\tau)=n\}\phi_{h}^\tau\eta_h^\tau(\overline V_{h+1})\right\|_{{\Lambda_{h,n}^k}^{-1}}\\
        &= \|\phi_h(z)\|_{{\Lambda_{h,n}^k}^{-1}}\left\|\sum_{t=1}^{k-1}\mathbf 1\{\rho_h(z_h^\tau)=n\}\phi_{h}^\tau\eta_h^\tau(\overline V_{h+1})\right\|_{{\Lambda_{h,n}^k}^{-1}}.
    \end{align*}
    Under the good event  $E$, we have that the second term is bounded by $\sqrt{\beta_{h,n}^k}$, for the choice defined in Corollary \ref{cor:beta}. The second part comes from the fact that
    \begin{align*}\|T3_{h,n}^k\|_{{\Lambda_{h,n}^k}}&=\left \|{\Lambda_{h,n}^k}^{-1}\sum_{t=1}^{k-1}\mathbf 1\{\rho_h(z_h^\tau)=n\}\phi_{h}^\tau\eta_h^\tau(\overline V_{h+1})\right\|_{{\Lambda_{h,n}^k}}\\
    &= \left \|\sum_{t=1}^{k-1}\mathbf 1\{\rho_h(z_h^\tau)=n\}\phi_{h}^\tau\eta_h^\tau(\overline V_{h+1})\right\|_{{\Lambda_{h,n}^k}^{-1}}\\
    &\le \sqrt{\beta_{h,n}^k}.\end{align*}
\end{proof}

\subsection{Quasi-optimal solution and good event}\label{app:regetbound}

Thanks to the previous result, we are able to show that the quasi-optimal solution is feasible with high probability.

\begin{thm}
    Consider the optimization problem \eqref{eq:optim}, where $\sqrt{\alpha_{h,n}^k}$ is set to $\sqrt{\beta_{h,n}^k}+\sqrt {p_{h,n}^k}+\lambda^{-1} \Rs_{h,n}$. Then, under the good event $E$, the quasi-optimal solution $\{\bm \theta_{h}^\star\}_{h=1}^H$ (see \eqref{eq:thetastar}) is feasible for \eqref{eq:optim} at any episode $k$.
\end{thm}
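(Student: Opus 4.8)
The plan is to fix an episode $k$ and the good event $E$, and exhibit a feasible point of the optimization problem~\eqref{eq:optim} built from the quasi-optimal parameters $\{\bm\theta_h^\star\}_{h=1}^H$ of Equation~\eqref{eq:thetastar}. The natural candidate is to set, for each $h$ and $n$, the variable $\overline\theta_{h,n} := \theta_{h,n}^\star$, then let $\widehat\theta_{h,n}$ be whatever the first constraint forces it to be given these $\overline\theta$'s (namely the ridge-regression vector with target $r_h^\tau + \overline V_{h+1}(s_{h+1}^\tau)$, where $\overline V_{h+1}$ is computed from $\bm\theta_{h+1}^\star$), and finally set $\overline\xi_{h,n} := \overline\theta_{h,n} - \widehat\theta_{h,n} = \theta_{h,n}^\star - \widehat\theta_{h,n}$. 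With these choices the first two constraints hold by construction, so the only thing to verify is the third: $\|\overline\xi_{h,n}\|_{\Lambda_{h,n}^k} \le \sqrt{\alpha_{h,n}^k}$.

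The key step is to decompose $\overline\xi_{h,n} = \theta_{h,n}^\star - \widehat\theta_{h,n}$ using Proposition~\ref{prop:thetadef}, which was stated precisely for a feasible triple. Applying that proposition with $\overline Q_{h+1}^k = Q_{h+1}[\bm\theta_{h+1}^\star]$ (so that $\overset{\circ}{\theta}_{h,n}(\overline Q_{h+1}^k) = \theta_{h,n}^\star$ by comparing Equations~\eqref{eq:thetadot} and~\eqref{eq:thetastar}) collapses the identity to $\overline\theta_{h,n} = \overline\xi_{h,n} + \theta_{h,n}^\star + T1_{h,n}^k - T2_{h,n}^k + T3_{h,n}^k$, i.e. $\overline\xi_{h,n} = -T1_{h,n}^k + T2_{h,n}^k - T3_{h,n}^k + (\overline\theta_{h,n} - \theta_{h,n}^\star)$. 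Since we chose $\overline\theta_{h,n} = \theta_{h,n}^\star$, the last bracket vanishes and we are left with $\overline\xi_{h,n} = -T1_{h,n}^k + T2_{h,n}^k - T3_{h,n}^k$. By the triangle inequality in the $\Lambda_{h,n}^k$-norm together with Lemmas~\ref{lem:boundT1}, \ref{lem:boundT2}, and~\ref{lem:boundT3} (the last one valid under $E$), we get
\begin{align*}
\|\overline\xi_{h,n}\|_{\Lambda_{h,n}^k} \le \|T1_{h,n}^k\|_{\Lambda_{h,n}^k} + \|T2_{h,n}^k\|_{\Lambda_{h,n}^k} + \|T3_{h,n}^k\|_{\Lambda_{h,n}^k} \le \sqrt{p_{h,n}^k}\,\Is + \lambda^{-1}\Rs_{h,n} + \sqrt{\beta_{h,n}^k},
\end{align*}
which is exactly the prescribed value of $\sqrt{\alpha_{h,n}^k}$. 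One small bookkeeping point: Lemma~\ref{lem:boundT3} is phrased with $\overline V_{h+1}$, and one must check that the version of $T3$ entering here uses precisely the $V$-function generated by $\bm\theta_{h+1}^\star$, so that $\overline V_{h+1} \in \Vs_{h+1}$ and the good event covers it — this is immediate since $Q_{h+1}[\bm\theta_{h+1}^\star] \in \Qs_{h+1}$ and hence its max-over-actions lies in $\Vs_{h+1}$.

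I expect the main obstacle to be the careful matching of quantifiers in the three lemmas with the object at hand, rather than any genuine analytic difficulty: Lemma~\ref{lem:boundT1} uses the inherent-Bellman-error bound on $\Delta_h(\overline Q_{h+1}^k)$, which requires $\overline Q_{h+1}^k = Q_{h+1}[\bm\theta_{h+1}^\star]$ to have its parameters lie in $\Bs_{h+1}$ (true by definition of the argmin in~\eqref{eq:thetastar}), and Lemma~\ref{lem:boundT3} requires the good event $E$ to be uniform over all $V \in \Vs_{h+1}$, which it is by Corollary~\ref{cor:beta}. The proof is then essentially a bookkeeping argument assembling these pieces, and one should also note that feasibility is a per-$h$ statement that does not require induction — each region's constraint is checked independently given the already-fixed $\bm\theta_{h+1}^\star$.
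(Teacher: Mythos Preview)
Your proposal is correct and follows essentially the same route as the paper: set $\overline\theta_{h,n}=\theta_{h,n}^\star$, invoke Proposition~\ref{prop:thetadef} together with the identification $\overset{\circ}{\theta}_{h,n}(Q_{h+1}[\bm\theta_{h+1}^\star])=\theta_{h,n}^\star$, and bound $\|\overline\xi_{h,n}\|_{\Lambda_{h,n}^k}$ via Lemmas~\ref{lem:boundT1}, \ref{lem:boundT2}, \ref{lem:boundT3}. The only presentational difference is that the paper organizes the argument as a backward induction on $h$ (base case $h=H$), whereas you observe, correctly, that once the entire family $\{\bm\theta_h^\star\}_h$ is fixed a priori by~\eqref{eq:thetastar}, each step's constraint can be verified independently without an inductive hypothesis; the paper's induction is therefore structural rather than logically required.
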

\begin{proof}
    We perform the proof by induction on the time-step $h$, starting from $h=H$.
    Assuming that for every $n\in [N_h]$ the choice $\theta_{h+1,n}^\star$ is feasible we have that the choice
    $$\overline Q_{h+1}(z):=Q_{h+1}[\bm \theta_{h+1}^\star](z)$$
    is also feasible. With this choice, proposition \ref{prop:thetadef} ensures that, for every $n$,
    \begin{align}
        \overline \theta_{h,n}=&\overline \xi_{h,n}+\overset{\circ}{\theta}_{h,n}(Q_{h+1}[\bm \theta_{h+1}^\star])+T1_{h,n}^k-T2_{h,n}^k+T3_{h,n}^k.
    \end{align}
    Now, to show that also $\bm \theta_{h}^\star$ is feasible, we have to prove that, substituting each $\overline \theta_{h,n}=\theta_{h,n}^\star$ in the previous equation, we get a value for $\overline \xi_{h,n}$ such that $\|\overline \xi_{h,n}\|_{{\Lambda_{h,n}^k}}\le \sqrt{\alpha_{h,n}^k}$.

    First note that, by definition, $\theta_{h,n}^\star=\overset{\circ}{\theta}_{h,n}(Q_{h+1}[\bm \theta_{h+1}^\star])$, so that the previous equation simplifies to

    \begin{align}
        -\overline \xi_{h,n}=&T1_{h,n}^k-T2_{h,n}^k+T3_{h,n}^k.
    \end{align}
    At this point we have by triangular inequality
    \begin{align}
        \|\overline \xi_{h,n}\|_{{\Lambda_{h,n}^k}} & \le \|T1_{h,n}^k\|_{{\Lambda_{h,n}^k}}+\|T2_{h,n}^k\|_{{\Lambda_{h,n}^k}}+\|T3_{h,n}^k\|_{{\Lambda_{h,n}^k}}\\
        & \le \sqrt {p_{h,n}^k}\Is+\lambda^{-1} \Rs_{h,n}+\sqrt{\beta_{h,n}^k}\\
        &=\sqrt{\alpha_{h,n}^k}.
    \end{align}
    where the second comes from Lemmas \ref{lem:boundT1}, \ref{lem:boundT2} and \ref{lem:boundT3}. This completes the proves that each $\theta_{h,n}^\star$ is feasible, meaning that also $\bm \theta_{h}^\star$ is, and completes the inductive step.
\end{proof}

From the feasibility of $\bm \theta_{h}^\star$ a simple corollary follows

\begin{cor}\label{cor:optimism}
    Under the good event $E$, at each episode $k$, the $V-$function $\overline V_1^k(\cdot)$ estimated from $\overline {\bm \theta_{1}}$ as solution of solution of the optimization problem \eqref{eq:optim} satisfies
    $$\forall s\in \Ss\qquad V_1^\star(s)-\overline V_1^k(s)\le (H-1)\Is.$$
\end{cor}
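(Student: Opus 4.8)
The plan is to read off the corollary from the feasibility of the quasi-optimal list $\{\bm\theta_h^\star\}_h$ (just established in the preceding theorem under the good event $E$) together with the approximation guarantee of Theorem~\ref{thm:approx}, using the crucial observation that the objective of the optimization program~\eqref{eq:optim} \emph{is} exactly $\overline V_1^k$ evaluated at the initial state.

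First I would note that, by definition of $\Vs_1$, for any feasible point of~\eqref{eq:optim} the objective $\max_{a\in\As}\phi_1(s_1^k,a)^\top\overline\theta_{1,\rho_1(s_1^k,a)}$ coincides with $V_1[\overline{\bm\theta}_1](s_1^k)=\max_{a\in\As}\overline Q_1^k(s_1^k,a)=\overline V_1^k(s_1^k)$. Since, under $E$, the previous theorem guarantees that $\{\bm\theta_h^\star\}_h$ is feasible for~\eqref{eq:optim} at episode $k$, and since \textsc{Cinderella} returns a \emph{maximizer} $\overline{\bm\theta}^k$ of the program, evaluating the objective at the feasible point $\{\bm\theta_h^\star\}_h$ gives
$$\overline V_1^k(s_1^k)=V_1[\overline{\bm\theta}_1^k](s_1^k)\ \ge\ V_1[\bm\theta_1^\star](s_1^k)=\max_{a\in\As}Q_1[\bm\theta_1^\star](s_1^k,a).$$

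Next I would apply Theorem~\ref{thm:approx} with $h=1$, which yields $\|Q_1[\bm\theta_1^\star](\cdot)-Q_1^\star(\cdot)\|_{L^\infty}\le (H-1)\Is$. Because the pointwise supremum over $a\in\As$ is $1$-Lipschitz with respect to the $L^\infty$ norm, this gives $\max_{a\in\As}Q_1[\bm\theta_1^\star](s,a)\ge \sup_{a\in\As}Q_1^\star(s,a)-(H-1)\Is=V_1^\star(s)-(H-1)\Is$ for every $s\in\Ss$, in particular at $s=s_1^k$. Chaining this with the displayed inequality yields $\overline V_1^k(s_1^k)\ge V_1^\star(s_1^k)-(H-1)\Is$, i.e.\ the claimed bound at the (arbitrary, environment-chosen) initial state; since $s_1^k$ may be any state of $\Ss$, the statement holds as written.

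I do not expect a genuine obstacle here once the preceding feasibility theorem is available: the only subtle point is that optimism is obtained \emph{only} at the state fed as objective to the program and does \emph{not} propagate pointwise to all of $\Zs$ — along non-optimal trajectories the uncertainty contributions $\overline\xi_{h,n}$ and $T1^k_{h,n},T2^k_{h,n},T3^k_{h,n}$ of Proposition~\ref{prop:thetadef} may enter with the "wrong" sign (they are only controlled in absolute value by Lemmas~\ref{lem:boundT1}--\ref{lem:boundT3}). This is, however, exactly the form of optimism that the regret analysis requires, since the regret decomposes as a sum over the per-episode initial states $s_1^k$.
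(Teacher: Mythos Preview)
Your proof is correct and follows essentially the same approach as the paper's: feasibility of $\{\bm\theta_h^\star\}_h$ under $E$ gives $\overline V_1^k(s_1^k)\ge V_1[\bm\theta_1^\star](s_1^k)$, and then Theorem~\ref{thm:approx} at $h=1$ closes the gap to $V_1^\star$. Your remark that optimism is, strictly speaking, only guaranteed at the objective state $s_1^k$ (and that this is all the regret decomposition needs) is in fact more careful than the paper's own write-up, which asserts $\overline V_1^k(s)\ge V[\bm\theta_1^\star](s)$ as if for all $s$.
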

\begin{proof}
    By design of the optimization problem, and the fact $\{\bm \theta_{h}^\star\}_{h=1}^H$ is feasible under $E$, we have
    $$\overline V_1^k(s)\ge V[\bm \theta_1^\star](s).$$
    Now, we have
    \begin{align*}
        V_1^\star(s)-\overline V_1^k(s) &\le V_1^\star(s)-V[\bm \theta_1^\star](s)\\
        &=  \max_{a\in \As} Q_1^\star(s,a)- \max_{a\in \As} Q[\bm \theta_1^\star](s,a)\\
        & \le \max_{a\in \As} Q_1^\star(s,a)- Q[\bm \theta_1^\star](s,a)\\
        & \le \|Q_1^\star(\cdot)-Q[\bm \theta_1^\star](\cdot)\|_{L^\infty}\le (H-1)\Is,
    \end{align*}
    where the last passage is possible by theorem \ref{thm:approx}.
\end{proof}

Moreover, we are also able to prove that the solution found by the optimization algorithm \eqref{eq:optim} is an "almost fixed" point of the Bellman optimiality operator.

\begin{prop}\label{prop:lemma1}
    Under the good event $E$, the $Q-$function computed with the optimization algorithm at each step $k$ satisfies
    Outside of the failure event, we have
    $$\forall z\in \Zs\qquad \left |\overline Q_h^k(z)-\Ts_h[\overline Q_{h+1}^k](z)\right|\le \Is+2\sqrt{\alpha_{h,\rho_h(z)}^k}\|\phi_h(z)\|_{{\Lambda_{h,\rho_h(z)}^k}^{-1}}.$$
\end{prop}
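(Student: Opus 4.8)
Fix an episode $k$, a step $h$, and a point $z\in\Zs$, and abbreviate $n:=\rho_h(z)$. Under the good event $E$ the quasi-optimal solution is feasible for \eqref{eq:optim}, so the solution returned by the optimizer is a genuine feasible point and Proposition \ref{prop:thetadef} applies to it, giving
\[
\overline\theta_{h,n}^k=\overline\xi_{h,n}^k+\overset{\circ}{\theta}_{h,n}(\overline Q_{h+1}^k)+T1_{h,n}^k-T2_{h,n}^k+T3_{h,n}^k.
\]
Contracting both sides with $\phi_h(z)^\top$ and using $\overline Q_h^k(z)=\phi_h(z)^\top\overline\theta_{h,n}^k$ reduces the claim to controlling four scalar terms, which I would bound one at a time and then combine with the triangle inequality.

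The central term is the one producing the additive $\Is$. By the definition of $\Delta_h$ following \eqref{eq:thetadot} we have $\phi_h(z)^\top\overset{\circ}{\theta}_{h,n}(\overline Q_{h+1}^k)=\Ts_h[\overline Q_{h+1}^k](z)-\Delta_h(\overline Q_{h+1}^k)(z)$, and since $z\in\Zs_{h,n}$ while $\overset{\circ}{\theta}_{h,n}$ is precisely the minimizer over $\Bs_{h,n}$ of the worst-case approximation error on $\Zs_{h,n}$, Definition \ref{defin:low_bell} yields $|\Delta_h(\overline Q_{h+1}^k)(z)|\le\Is$. The one subtlety here is that this uses admissibility of $\overline Q_{h+1}^k$, i.e.\ that it has the form $Q_{h+1}[\overline{\bm\theta}_{h+1}^k]$ with $\overline{\bm\theta}_{h+1}^k\in\Bs_{h+1}$ so that it appears in the supremum defining $\Is(\phi,\mathcal U)\le\Is$; this is exactly the same fact already invoked in Lemma \ref{lem:boundT1}, so no new work is needed. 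For the remaining three terms I would use Cauchy--Schwarz in the $\Lambda_{h,n}^k$-geometry: the $\overline\xi$ term is bounded by $\|\phi_h(z)\|_{(\Lambda_{h,n}^k)^{-1}}\|\overline\xi_{h,n}^k\|_{\Lambda_{h,n}^k}\le\sqrt{\alpha_{h,n}^k}\,\|\phi_h(z)\|_{(\Lambda_{h,n}^k)^{-1}}$ directly from the feasibility constraint, while $T1_{h,n}^k$, $T2_{h,n}^k$, $T3_{h,n}^k$ are controlled respectively by Lemmas \ref{lem:boundT1}, \ref{lem:boundT2} and \ref{lem:boundT3} (the last one using $E$), each contributing $\|\phi_h(z)\|_{(\Lambda_{h,n}^k)^{-1}}$ times $\sqrt{p_{h,n}^k}\Is$, $\lambda^{-1}\Rs_{h,n}$, $\sqrt{\beta_{h,n}^k}$.

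Finally, summing the $T$-terms by the triangle inequality and recalling $\sqrt{\alpha_{h,n}^k}=\sqrt{\beta_{h,n}^k}+\sqrt{p_{h,n}^k}+\lambda^{-1}\Rs_{h,n}$ (together with $\Is\le1$, so $\sqrt{p_{h,n}^k}\Is\le\sqrt{p_{h,n}^k}$) bounds $|\phi_h(z)^\top(T1_{h,n}^k-T2_{h,n}^k+T3_{h,n}^k)|$ by $\sqrt{\alpha_{h,n}^k}\,\|\phi_h(z)\|_{(\Lambda_{h,n}^k)^{-1}}$. Adding the three estimates — $\sqrt{\alpha_{h,n}^k}\|\phi_h(z)\|_{(\Lambda_{h,n}^k)^{-1}}$ from $\overline\xi$, $\Is$ from $\Delta_h$, and $\sqrt{\alpha_{h,n}^k}\|\phi_h(z)\|_{(\Lambda_{h,n}^k)^{-1}}$ from the $T$-terms — gives $|\overline Q_h^k(z)-\Ts_h[\overline Q_{h+1}^k](z)|\le\Is+2\sqrt{\alpha_{h,n}^k}\|\phi_h(z)\|_{(\Lambda_{h,n}^k)^{-1}}$, which is the statement once we recall $n=\rho_h(z)$. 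I do not expect any genuinely hard step: the only thing to get right is the bookkeeping of matching each summand of Proposition \ref{prop:thetadef} to its bounding lemma and keeping the $\|\phi_h(z)\|_{(\Lambda_{h,n}^k)^{-1}}$ factors consistent.
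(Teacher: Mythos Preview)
Your proposal is correct and follows essentially the same approach as the paper's proof: apply Proposition~\ref{prop:thetadef} to the optimizer's output, isolate the $\Is$ contribution via the $\Delta_h$ term, bound $\overline\xi$ by the feasibility constraint, and bound $T1,T2,T3$ via Lemmas~\ref{lem:boundT1}--\ref{lem:boundT3}, combining them into a single $\sqrt{\alpha_{h,n}^k}$ using its definition. Your explicit remark that $\Is\le 1$ is needed to absorb $\sqrt{p_{h,n}^k}\Is$ into $\sqrt{p_{h,n}^k}$ is a detail the paper leaves implicit.
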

\begin{proof}
    By Proposition \ref{prop:thetadef}, we have, taking $n:=\rho_h(z)$,
    \begin{align*}
        |\overline Q_h^k(z)-\Ts_h[\overline Q_{h+1}^k](z)|&\le |\phi_h(z)^\top \overline \theta_{h,n}-\Ts_h[\overline Q_{h+1}](z)|\\
        &\ \ +\left|\phi_h(z)^\top (\overline \xi_{h,n}+T1_{h,n}^k-T2_{h,n}^k+T3_{h,n}^k)\right|\\
        &\le \Is + \left|\phi_h(z)^\top \overline \xi_{h,n}\right|+\left|\phi_h(z)^\top T1_{h,n}^k\right|\\
        &\ \ +\left|\phi_h(z)^\top T2_{h,n}^k\right|+\left|\phi_h(z)^\top T3_{h,n}^k\right|.\\
    \end{align*}

    The first term satisfies, by the constrain in the program,
    $$\left|\phi_h(s,a)^\top \overline \xi_{h,n}\right|\le \|\overline \xi_{h,n}\|_{{\Lambda_{h,n}^k}}\|\phi_h(s,a)\|_{{\Lambda_{h,n}^k}^{-1}}\le \sqrt{\alpha_{h,n}^k}\|\phi_h(s,a)\|_{{\Lambda_{h,n}^k}^{-1}}.$$

    The other terms satisfy, thanks to lemmas \ref{lem:boundT1},\ref{lem:boundT2},\ref{lem:boundT3}, satisfies

    $$\left|\phi_h(z)^\top T1_{h,n}^k\right|+\left|\phi_h(z)^\top T2_{h,n}^k\right|+\left|\phi_h(z)^\top T3_{h,n}^k\right|\le \sqrt{\alpha_{h,n}^k}\|\phi_h(s,a)\|_{{\Lambda_{h,n}^k}^{-1}}.$$
\end{proof}

\section{Regret bound}\label{app:regret}

Using all the results that we have proved, we can finally bound the regret
\begin{thm}\label{thm:regretbound}
    Under event $E$, \textsc{Cinderella}, for $\lambda=1$ achieves a regret bound of order
    $$R_K\le \widetilde {\Os}\left (\sum_{h=1}^H \sqrt{KN_h}\left((L_\phi+\sqrt{d_h})\sup_{n}\Rs_{h,n}+d_h+\sqrt{d_h\log(\Ns(1/\sqrt K, \Vs_h))}\right) + KH\Is \sqrt d_h\right ).$$
\end{thm}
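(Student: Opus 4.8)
I would follow the optimistic value-iteration template, adapted to the per-region structure of \textsc{Cinderella}. First I would decompose the regret using optimism. Since the policy $\pi_k$ is greedy with respect to $\overline Q_h^k$, for every $k$ and $h$ we have $\overline V_h^k(s_h^k)=\overline Q_h^k(z_h^k)$ and $V_h^{\pi_k}(s_h^k)=Q_h^{\pi_k}(z_h^k)$, so, writing $\delta_h^k\coloneqq \overline V_h^k(s_h^k)-V_h^{\pi_k}(s_h^k)$, Corollary~\ref{cor:optimism} gives
$$R_K=\sum_{k=1}^K\bigl(V_1^\star(s_1^k)-V_1^{\pi_k}(s_1^k)\bigr)\le (H-1)\Is K+\sum_{k=1}^K\delta_1^k.$$
Then I would peel off one step at a time: expanding $Q_h^{\pi_k}(z_h^k)=r_h(z_h^k)+\E_{s'\sim p_h(\cdot|z_h^k)}[V_{h+1}^{\pi_k}(s')]$ and replacing $\overline Q_h^k(z_h^k)$ by $\Ts_h[\overline Q_{h+1}^k](z_h^k)=r_h(z_h^k)+\E_{s'\sim p_h(\cdot|z_h^k)}[\overline V_{h+1}^k(s')]$ at the cost of the error bound of Proposition~\ref{prop:lemma1}, the reward terms cancel and we obtain
$$\delta_h^k\le \Is+2\sqrt{\alpha_{h,\rho_h(z_h^k)}^k}\,\|\phi_h^k\|_{{\Lambda_{h,\rho_h(z_h^k)}^k}^{-1}}+\delta_{h+1}^k+\xi_h^k,$$
where $\xi_h^k\coloneqq\bigl(\overline V_{h+1}^k-V_{h+1}^{\pi_k}\bigr)(s_{h+1}^k)-\E_{s'\sim p_h(\cdot|z_h^k)}\bigl[\bigl(\overline V_{h+1}^k-V_{h+1}^{\pi_k}\bigr)(s')\bigr]$ is a martingale difference (both functions are $\Fs$-measurable at the start of episode $k$, and $s_{h+1}^k$ is freshly drawn from $p_h(\cdot|z_h^k)$). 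The base case $h=H$ is handled directly by Proposition~\ref{prop:lemma1} with $\overline Q_{H+1}^k\equiv 0$.

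Unrolling this recursion over $h=1,\dots,H$ and summing over $k$ yields
$$R_K\le 2H\Is K+2\sum_{k=1}^K\sum_{h=1}^H \sqrt{\alpha_{h,\rho_h(z_h^k)}^k}\,\|\phi_h^k\|_{{\Lambda_{h,\rho_h(z_h^k)}^k}^{-1}}+\sum_{k=1}^K\sum_{h=1}^{H-1}\xi_h^k.$$
The martingale term I would control by Azuma–Hoeffding: its increments are $\Os(1)$ (value functions and their differences being bounded, after the usual clipping to $[0,1]$), so it is $\widetilde\Os(\sqrt{HK})$ on a high-probability event, which is dominated by the leading term below; formally this adds one more event to $E$, or equivalently inflates the failure probability by a constant factor. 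It then remains to bound the ``bonus'' double sum, for which I would substitute $\sqrt{\alpha_{h,n}^k}=\sqrt{\beta_{h,n}^k}+\sqrt{p_{h,n}^k}\,\Is+\lambda^{-1}\Rs_{h,n}$ (recall $\lambda=1$) and treat the $\Is$-free part and the $\Is$-part separately.

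For the $\Is$-free part, I would fix $h$, group the sum over episodes according to the region $n=\rho_h(z_h^k)$, and re-index the visits to region $n$ as a contiguous subsequence $\tau'=1,\dots,m_{h,n}$, where $m_{h,n}$ is the total number of such visits and $\sum_n m_{h,n}\le K$. On each region the partial design matrices are exactly the ridge matrices of that subsequence, so the elliptical potential lemma applies and gives $\sum_{k:\rho_h(z_h^k)=n}\|\phi_h^k\|_{{\Lambda_{h,n}^k}^{-1}}\le \sqrt{m_{h,n}\cdot 2 d_h\log(1+m_{h,n}L_\phi^2)}$. Bounding $\sqrt{\beta_{h,n}^k}+\Rs_{h,n}$ by its value at $k=K$, namely $\widetilde\Os\bigl(\sqrt{d_h+\log\Ns(1/\sqrt K,\Vs_h)}+(L_\phi+\sqrt{d_h})\Rs_{h,\max}\bigr)$ (the $L_\phi$ coming from the log-determinant and feature-norm estimates), and then applying Cauchy–Schwarz over the $N_h$ regions to turn $\sum_n\sqrt{m_{h,n}}$ into $\sqrt{N_hK}$, one obtains the announced $\sum_h\sqrt{KN_h}\bigl((L_\phi+\sqrt{d_h})\Rs_{h,\max}+d_h+\sqrt{d_h\log\Ns(1/\sqrt K,\Vs_h)}\bigr)$. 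For the $\Is$-part, on region $n$ one bounds $\sqrt{p_{h,n}^k}\le\sqrt{m_{h,n}}$ and again uses the elliptical potential lemma to get $\Is\sqrt{m_{h,n}}\cdot\sqrt{m_{h,n}\cdot 2d_h\log(\cdot)}=\widetilde\Os(\Is\, m_{h,n}\sqrt{d_h})$; summing over $n$ (using $\sum_n m_{h,n}\le K$) and over $h$ gives $\widetilde\Os\bigl(\Is K\sum_h\sqrt{d_h}\bigr)$, i.e. the $KH\Is\sqrt{d_h}$ term. Collecting all contributions proves the claim.

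The main obstacle, relative to the single-representation analysis of \textsc{Eleanor}, is precisely this per-region accounting: the design matrix $\Lambda_{h,n}^k$ only accumulates the visits to region $n$, so one cannot apply the elliptical potential lemma to the whole episode stream at once — one must split by region, re-index, apply the lemma region-wise, and recombine with a Cauchy–Schwarz step, which is what replaces the naive $\sqrt{N_h}$ blow-up of Equation~\eqref{eq:naive_regret} by a benign one. Keeping the $\sqrt{p_{h,n}^k}\,\Is$ contribution inside $\alpha_{h,n}^k$ under control through that same region-wise potential argument — so that it accumulates to $\Is K\sqrt{d_h}$ rather than $\Is K\sqrt{N_h d_h}$ — is the other delicate point, and is exactly the source of \textsc{Cinderella}'s improvement.
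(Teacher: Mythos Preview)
Your plan is essentially the paper's proof: optimism via Corollary~\ref{cor:optimism}, the per-step peeling through Proposition~\ref{prop:lemma1}, Azuma--Hoeffding on the resulting martingale, and then the key per-region elliptical potential accounting in which the $\sqrt{p_{h,n}^k}\,\Is$ piece of $\sqrt{\alpha_{h,n}^k}$ sums over regions via $\sum_n m_{h,n}\le K$ to yield $\Is K\sqrt{d_h}$ rather than $\Is K\sqrt{N_h d_h}$.

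Two small points. First, the $L_\phi\sup_n\Rs_{h,n}$ factor in the statement does \emph{not} come from the bonus term (inside $\beta_{h,n}^k$ the dependence on $L_\phi$ is only logarithmic); it is the bound on the martingale increments, since $\overline V_{h+1}^k$ is not clipped in the algorithm and can only be controlled by $|\phi_{h+1}(\cdot)^\top\overline\theta_{h+1,\cdot}|\le L_\phi\Rs_{h+1,\max}$. So your ``increments are $\Os(1)$ after clipping'' is not quite right here; use $C=L_\phi\sup_n\Rs_{h,n}$ in Azuma--Hoeffding and this is exactly where that term enters. Second, the paper organizes the bonus sum slightly differently: it applies Cauchy--Schwarz over $k$ first to get $\sqrt K\,\sqrt{\sum_k \alpha_{h,\rho_h(z_h^k)}^k\|\phi_h^k\|_{\Lambda^{-1}}^2}$ and only then splits by region, whereas you split by region first and Cauchy--Schwarz over $n$ at the end. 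Both routes are valid and lead to the same bound.
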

\begin{proof}
    By Corollary \ref{cor:optimism},
    the regret is bounded by
    $$R_K=\sum_{k=1}^K V_1^*(s_1^k)-V_1^{\pi_k}(s_1^k)\le KH\Is+\sum_{k=1}^K \overline V_1^k(s_1^k)-V_1^{\pi_k}(s_1^k).$$

    Note that, under the good event,
    \begin{align*}
        \overline V_h^k(s_h^k)-V_h^{\pi_k}(s_h^k) &= \overline Q_h^k(z_h^k)-V_h^{\pi_k}(s_h^k)\\
        &\overset{\text{Prop. \ref{prop:lemma1}}}{\le} \Is+2\sqrt{\alpha_{h,\rho_h(z_h^k)}^k}\|\phi_h(z_h^k)\|_{{\Lambda_{h,\rho_h(z_h^k)}^k}^{-1}}+\Ts_h [\overline Q_{h+1}^k](z_h^k)-V_h^{\pi_k}(s_h^k)\\
        & = \Is+2\sqrt{\alpha_{h,\rho_h(z_h^k)}^k}\|\phi_h(z_h^k)\|_{{\Lambda_{h,\rho_h(z_h^k)}^k}^{-1}}+\E_{s'\sim p_h(\cdot|z_h^k)}[\overline V_{h+1}^k(s')-V_{h+1}^{\pi_k}(s')].
    \end{align*}

    Now, if we define the quantity
    $$\zeta_h^k:=\E_{s'\sim p_h(\cdot|s_h^k,a_h^k)}[\overline V_{h+1}^k(s')-V_{h+1}^{\pi_k}(s')]-\overline V_{h+1}^k(s_{h+1}^k)+V_{h+1}^{\pi_k}(s_{h+1}^k),$$

    we can rewrite the previous relation as a telescopic sum:

    \begin{align*}
        \overline V_1^k(s_1^k)-V_1^{\pi_k}(s_1^k)&\le 
        \Is+2\sqrt{\alpha_{1,\rho_h(z_1^k)}^k}\|\phi_1(z_1^k)\|_{{\Lambda_{1,\rho_1(z_1^k)}^k}^{-1}}+\E_{s'\sim p_1(\cdot|z_1^k)}[\overline V_{2}^k(s')-V_{2}^{\pi_k}(s')]\\
        &=\Is+2\sqrt{\alpha_{1,\rho_h(z_1^k)}^k}\|\phi_1(z_1^k)\|_{{\Lambda_{1,\rho_1(z_1^k)}^k}^{-1}}+\zeta_1^k+\underbrace{\overline V_2^k(s_2^k)-V_2^{\pi_k}(s_2^k)}_{\text{same quantity at next time step}}\\
        &\le H\Is+\sum_{h=1}^H 2\sqrt{\alpha_{h,\rho_h(z_h^k)}^k}\|\phi_h(z_h^k)\|_{{\Lambda_{h,\rho_h(z_h^k)}^k}^{-1}} + \zeta_h^k.
    \end{align*}

    This equation allows us to bound the regret, under the good event, as

    $$R_K\le KH\Is+\sum_{k=1}^K\sum_{h=1}^H 2\sqrt{\alpha_{h,\rho_h(z_h^k)}^k}\|\phi_h(z_h^k)\|_{{\Lambda_{h,\rho_h(z_h^k)}^k}^{-1}} + \zeta_h^k.$$

    We start from the last term, which we rewrite, by exchanging the index of the sums as

    $$\sum_{k=1}^K\sum_{h=1}^H \zeta_h^k = \sum_{h=1}^H \sum_{k=1}^K \zeta_h^k.$$

    For fixed $h$ the term $\zeta_h^k$ is a martingale difference and, if it is bounded almost surely in $[-C,C]$ for some constant $C>0$, we can apply Hoeffding's inequality to prove that w.p. at least $1-\delta$,
    $$-2C\sqrt{K\log(1/\delta)} \le \sum_{k=1}^K \zeta_h^k\le 2C\sqrt{K\log(1/\delta)}.$$

    In our case, due to our problem definition we have that $C$ may be chosen as $L_\phi\sup_{n}\Rs_{h,n}$. Imposing that the previous inequality works at every step at the same time, and summing over $h$, we get, w.p. $1-\delta$,

    \begin{equation}
    \sum_{h=1}^H \sum_{k=1}^K \zeta_h^k\le 2\sqrt{K\log(H/\delta)}\sum_{h=1}^H L_\phi\sup_{n}\Rs_{h,n}.    
        \label{eq:martingalona}
    \end{equation}

    Now, we go on bounding the other term

    \begin{align}
        \sum_{k=1}^K\sum_{h=1}^H 2\sqrt{\alpha_{h,\rho_h(z_h^k)}^k}\|\phi_h(z_h^k)\|_{{\Lambda_{h,\rho_h(z_h^k)}^k}^{-1}}  &= \sum_{h=1}^H \sum_{k=1}^K 2\sqrt{\alpha_{h,\rho_h(z_h^k)}^k}\|\phi_h(z_h^k)\|_{{\Lambda_{h,\rho_h(z_h^k)}^k}^{-1}} \nonumber\\
        &\le \sum_{h=1}^H \sqrt K \sqrt{\sum_{k=1}^K 4 \alpha_{h,\rho_h(z_h^k)}^k\|\phi_h(z_h^k)\|_{{\Lambda_{h,\rho_h(z_h^k)}^k}^{-1}}^2} \label{eq:passo1}
    \end{align}

    Where the last passage is due to the Cauchy-Schwartz inequality. The last sum can be rewritten by summing over the regions instead of the episodes:

    $$\sum_{k=1}^K 4 \alpha_{h,\rho_h(z_h^k)}^k\|\phi_h(z_h^k)\|_{{\Lambda_{h,\rho_h(z_h^k)}^k}^{-1}}^2=\sum_{n=1}^{N_h} \sum_{i_n=1}^{p_{h,n}^K} 4 \alpha_{h,n}^{i_n}\|\phi_h(z_h^{i_n})\|_{{\Lambda_{h,n}^{i_n}}^{-1}}^2,$$

    where the index $i_n$ enumerates the episodes $k$ where $z_h^k\in \Zs_{h,n}$. At this point, recall that we have set $\sqrt{\alpha_{h,n}^k}=\sqrt{\beta_{h,n}^k}+\sqrt {p_{h,n}^k}+\lambda^{-1} \Rs_{h,n},$ so that, in particular

    \begin{align*}
        \alpha_{h,n}^k & \le 3\beta_{h,n}^k+3p_{h,n}^k+3\lambda^{-1} \Rs_{h,n}\\
        & \le 3p_{h,n}^k\Is^2+3d+3\log(\Ns(1/\sqrt k, \Vs_h))+3\log(1/\delta)+3\lambda^{-1} \Rs_{h,n}.
    \end{align*}

    This way, the previous sum $\sum_{n=1}^{N_h} \sum_{i_n=1}^{p_{h,n}^K} 4 \alpha_{h,n}^{i_n}\|\phi_h(z_h^{i_n})\|_{{\Lambda_{h,n}^{i_n}}^{-1}}^2$ can be again decomposed in few terms, according to the decomposition of $\alpha_{h,n}^k$.

    \begin{align*}
        12\sum_{n=1}^{N_h} \sum_{i_n=1}^{p_{h,n}^K}  p_{h,n}^K\Is^2\|\phi_h(z_h^{i_n})\|_{{\Lambda_{h,n}^{i_n}}^{-1}}^2 &= 12\sum_{n=1}^{N_h} p_{h,n}^K\Is^2 \sum_{i_n=1}^{p_{h,n}^K}  \|\phi_h(z_h^{i_n})\|_{{\Lambda_{h,n}^{i_n}}^{-1}}^2.
    \end{align*}

    Note that, by construction of the matrix $\Lambda_{h,n}^{i_n}$, we can use Lemma 11 from \cite{abbasi2011improved} to have
    
    $$\sum_{i_n=1}^{p_{h,n}^K}  \|\phi_h(z_h^{i_n})\|_{{\Lambda_{h,n}^{i_n}}^{-1}}^2\le 2d_h\log((\lambda +p_{h,n}^KL_\phi^2/d_h))-\log(\det(\lambda I))),$$

    where by fixing $\lambda=1$ we have

    $$\sum_{i_n=1}^{p_{h,n}^K}  \|\phi_h(z_h^{i_n})\|_{{\Lambda_{h,n}^{i_n}}^{-1}}^2\le 2d_h\log((1 +p_{h,n}^KL_\phi^2/d_h))\le 2d_h\log(KL_\phi^2+1).$$

    Therefore, we can bound the whole term as

    \begin{align}
        \nonumber
        12\sum_{n=1}^{N_h} \sum_{i_n=1}^{p_{h,n}^K}  p_{h,n}^K\Is^2\|\phi_h(z_h^{i_n})\|_{{\Lambda_{h,n}^{i_n}}^{-1}}^2 &\le 12\sum_{n=1}^{N_h} p_{h,n}^K\Is^2 2d_h\log(KL_\phi^2+1)\\
        &=24K\Is^2 d_h\log(KL_\phi^2+1)\label{eq:boundpart1}.
    \end{align}

    The remaining terms can be bounded in a simpler way: by just calling $\Psi:=d_h+\log(\Ns(1/\sqrt k, \Vs_h))+\log(1/\delta)+\Rs_{h,n}$, we have

    \begin{align*}
        \sum_{n=1}^{N_h} \sum_{i_n=1}^{p_{h,n}^K} 12\Psi\|\phi_h(z_h^{i_n})\|_{{\Lambda_{h,n}^{i_n}}^{-1}}^2 & =12\Psi\sum_{n=1}^{N_h} \sum_{i_n=1}^{p_{h,n}^K} \|\phi_h(z_h^{i_n})\|_{{\Lambda_{h,n}^{i_n}}^{-1}}^2\\
        &\le 12\Psi\sum_{n=1}^{N_h}2d_h\log(KL_\phi^2+1)\\
        &\le 24\Psi N_hd_h\log(KL_\phi^2+1),
    \end{align*}

    where we have used Lemma 11 by \cite{abbasi2011improved} as before. This result, together with Equation \eqref{eq:boundpart1} can be inserted into the previous Equation \eqref{eq:passo1} to get

    \begin{align*}
        \sum_{k=1}^K\sum_{h=1}^H 2\sqrt{\alpha_{h,\rho_h(z_h^k)}^k}\|\phi_h(z_h^k)\|_{{\Lambda_{h,\rho_h(z_h^k)}^k}^{-1}}  &\le \sum_{h=1}^H \sqrt K \sqrt{\sum_{k=1}^K 4 \alpha_{h,\rho_h(z_h^k)}^k\|\phi_h(z_h^k)\|_{{\Lambda_{h,\rho_h(z_h^k)}^k}^{-1}}^2} \\
        & \le \sqrt{24}\sum_{h=1}^H \sqrt K \sqrt{(K\Is^2 +\Psi N_h)d_h\log(KL_\phi^2+1)}\\
        & \le \sqrt{24}\sum_{h=1}^H (K \Is +\sqrt{\Psi N_hK})\sqrt{d_h\log(KL_\phi^2+1)}.
    \end{align*}

    Therefore, the full bound on the reget can be written, also thanks to Equation \eqref{eq:martingalona} as

    $$R_K\le 2\sqrt{K\log(H/\delta)}\sum_{h=1}^H L_\phi\sup_{n}\Rs_{h,n} + \sqrt{24}\sum_{h=1}^H (K \Is +\sqrt{\Psi N_hK})\sqrt{d_h\log(KL_\phi^2+1)}.$$

    Ignoring terms that are logarithmic in $K,H,L_\phi,1/\delta$ and passing to the $\widetilde {\Os}$ notation, we get

    $$R_K\le \widetilde {\Os}\left (\sqrt{K}\sum_{h=1}^H (L_\phi\sup_{n}\Rs_{h,n}+\sqrt{d_h\Psi N_h}) + KH\Is \sqrt d_h\right ).$$

    Now, note that by definition $\Psi:=d_h+\log(\Ns(1/\sqrt k, \Vs_h))+\log(1/\delta)+\Rs_{h,n}$, so the previous can be rewritten as

    $$R_K\le \widetilde {\Os}\left (\sqrt{KN}\sum_{h=1}^H \left((L_\phi+\sqrt{d_h})\sup_{n}\Rs_{h,n}+d_h+\sqrt{d_h\log(\Ns(1/\sqrt K, \Vs_h))}\right) + KH\Is \sqrt d_h\right ).$$
\end{proof}

Merging the previous result with Proposition \ref{prop:coveringsize}, which bounds the size of the covering, and the fact that the good event is designed to have probability at least $1-\delta$ (Corollary \ref{cor:beta}), we get

\begin{cor}\label{cor:fineprimaparte}
    Assume that $L_\phi=\Os(1)$ and $\sup_{h\in[H],n\in[N_h]}\Rs_{h,n}=\Os(\sqrt d_h)$. Then, with probability at least $1-\delta$, \textsc{Cinderella} (Algorithm \ref{alg:Cinderella}), with $\lambda=1$ achieves a regret bound of order
    $$R_K= \widetilde {\Os}\left (\sum_{h=1}^H N_hd_h\sqrt{K} + \sum_{h=1}^H \sqrt d_h \Is K\right ).$$
\end{cor}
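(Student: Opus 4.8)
The plan is to derive this corollary as a direct specialization of Theorem~\ref{thm:regretbound}: I would substitute the normalization hypotheses $L_\phi=\Os(1)$ and $\sup_{h,n}\Rs_{h,n}=\Os(\sqrt{d_h})$, plug in the covering-number estimate of Proposition~\ref{prop:coveringsize}, and upgrade the ``under event $E$'' conclusion to a high-probability one via Corollary~\ref{cor:beta}.

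First I would settle the probability. Theorem~\ref{thm:regretbound} holds on the good event $E$, and its proof additionally uses a Hoeffding bound (Equation~\eqref{eq:martingalona}) valid with probability $1-\delta$; by Corollary~\ref{cor:beta} we also have $\Prob(E)\ge 1-\delta$. Running both arguments with failure probability $\delta/2$ and taking a union bound (the extra constant is swallowed by $\widetilde\Os$), the bound of Theorem~\ref{thm:regretbound} holds with probability at least $1-\delta$.

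Next I would simplify the bracket $\big(L_\phi+\sqrt{d_h}\big)\sup_n\Rs_{h,n}+d_h+\sqrt{d_h\log(\Ns(1/\sqrt K,\Vs_h))}$ appearing in Theorem~\ref{thm:regretbound}. Since $d_h\ge 1$ and $L_\phi=\Os(1)$, we have $L_\phi+\sqrt{d_h}=\Os(\sqrt{d_h})$, and with $\sup_n\Rs_{h,n}=\Os(\sqrt{d_h})$ this makes the first summand $\Os(d_h)$; the same hypothesis gives $\Rs_{h,\max}=\Os(\sqrt{d_h})$. Applying Proposition~\ref{prop:coveringsize} with $\varepsilon=1/\sqrt K$ then yields $\log\Ns(1/\sqrt K,\Vs_h)=\Os\!\big(N_hd_h\log(d_hK)\big)=\widetilde\Os(N_hd_h)$, so that $\sqrt{d_h\log\Ns(1/\sqrt K,\Vs_h)}=\widetilde\Os(d_h\sqrt{N_h})$. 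Because $N_h\ge 1$, the whole bracket is therefore $\widetilde\Os(d_h\sqrt{N_h})$.

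Finally I would reassemble. Multiplying the bracket by the prefactor $\sqrt{KN_h}$ gives $\widetilde\Os(N_hd_h\sqrt K)$ for each $h$, and summing over $h\in[H]$ produces the first term $\widetilde\Os\!\big(\sum_{h=1}^H N_hd_h\sqrt K\big)$. For the linear-in-$K$ part, the term $KH\Is\sqrt{d_h}$ of Theorem~\ref{thm:regretbound} --- which already dominates the optimism contribution $KH\Is$ since $d_h\ge1$ --- is exactly of the form $\sum_{h=1}^H\sqrt{d_h}\,\Is K$ up to logarithmic factors. Adding the two contributions gives the claimed bound. No step here is a genuine obstacle; the only point requiring a little care is verifying that everything hidden inside $\log\Ns(1/\sqrt K,\Vs_h)$ and inside $\beta_{h,n}^k$ (Corollary~\ref{cor:beta}) is at most polylogarithmic in $K$, $H$, $d_h$, $N_h$ and $1/\delta$, so that it is legitimately absorbed by the $\widetilde\Os$ notation.
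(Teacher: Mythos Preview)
Your proposal is correct and follows exactly the approach the paper indicates: the paper's own ``proof'' is a single sentence stating that the corollary follows by merging Theorem~\ref{thm:regretbound} with the covering-number bound of Proposition~\ref{prop:coveringsize} and the high-probability guarantee on $E$ from Corollary~\ref{cor:beta}. You simply fill in the straightforward algebra (simplifying the bracket under the normalization hypotheses and absorbing logarithmic factors), which the paper omits; your treatment of the union bound with the Hoeffding step is also slightly more careful than the paper's.
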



\section{Computational complexity}\label{app:complexity}

In this section, we are going to study the computational complexity of \textsc{Cinderella} (Algorithm \ref{alg:Cinderella}). The key computational bottleneck lies in solving the constrained continuous optimization problem presented in Equation~\ref{eq:optim1}. Two obstacles hinder efficient solutions to this problem.
\begin{enumerate}
    \item In the first constrain, there is one term containing $\max_{a\in \As} \phi_{h+1}(s_{h+1}^\tau,a)^\top \overline \theta_{{h+1},\rho(s_{h+1}^\tau,a)}$, which breaks linearity in $\overline \theta_{{h+1},n}$.
    \item In the third one, $\|\overline \xi_{h,n}\|_{{\Lambda_{h,n}^k}}\le \sqrt{\alpha_{h,n}^k}$ breaks the linearity also in $\overline \xi_{h,n}$.
\end{enumerate}
The confluence of these two challenges renders the problem computationally intractable. This is unsurprising,  as even the optimization problem in \textsc{Eleanor} \cite{zanette2020learning}, a simplified version of ours, is well-known for its computational difficulty. Fortunately, our setting allows for an inherent Bellman error, denoted by $\Is$. This enables us to forego an exact solution to problem~\ref{eq:optim1} and instead seek an approximate solution. The resulting approximation error, denoted by $\varepsilon$, can be incorporated into the Bellman error term. From theorem \ref{thm:Cinderellaregret} we deduce that if $\varepsilon=\Os( K^{-1/2})$, its dependence in the regret bound is negligible. Therefore, a viable approach might involve constructing an $\varepsilon$-grid, denoted by $\Gs$, over the variable space for $\varepsilon\approx K^{-1/2}$ and then to solve the problem with exhaustive search on $\Gs$. This idea is not completely satisfying, as the cardinality of $\Gs$ would be of order
$$|\Gs|=\Os \left((\sqrt K)^{3\sum_{h=1}^{H}N_hd_h}\right),$$

since we have three optimization variables $\widehat \theta_{h,n},\overline \xi_{h,n}, \overline \theta_{h,n}$ of dimension $d_h$ for every $n\in [N_h]$. Unfortunately, at this level of generality, no strategy with polynomial computational complexity is known. As a comparison, algorithms presented in table \ref{tab:sota} are not better. \textsc{Legendre-Eleanor} \cite{maran2024no} is based on \textsc{Eleanor}, so it is also intractable, \textsc{Golf} \cite{jin2021bellman} and \textsc{Net-Q-Learning} \cite{song2019efficient} require an optimization oracle. \textsc{Legendre-LSVI} \cite{maran2024no} and \textsc{KOVI} \cite{yang2020provably} can be implemented efficiently, but work only for simpler settings.

\section{Mildly Smooth MDPs}

In this section, we give our results for Mildly Smooth MDPs. First, we have to start from some notion from calculus.

\subsection{Smoothness of real functions}

Let $\Omega \subset [-1,1]^d$ and $f : \Omega \to \R$. We say that $f\in \mathcal C^{\nu}(\Omega)$, for $\nu\in (0,+\infty)$ if it is $\nu_*-$times continuously differentiable for $\nu_*:=\lceil \nu - 1\rceil$,
and there is a constant $L_{\nu}(f)$ such that

$$\forall \bm \alpha:\ |\bm \alpha|= \nu_*,\forall x,y\in \Omega,\qquad |D^{\bm \alpha} f(x)-D^{\bm \alpha} f(y)|\le L_{\nu}(f)\|x-y\|_\infty^{\nu-\nu_*},$$

where $\bm \alpha$ is a multi-index, i.e. a tuple of non-negative integers $(\alpha_1,\dots \alpha_d)$ and the multi-index derivative is defined as follows
$$D^{\bm \alpha}f:=\frac{\partial^{ \alpha_1+...+  \alpha_d}}{\partial x_1^{ \alpha_1}\dots \partial x_d^{ \alpha_d}}.$$
The previous set becomes a metric space when endowed with the following norm

$$\|f\|_{\mathcal C^{\nu}}:=\max \left \{\max_{|\alpha|\le \nu_*}\|D^{\bm \alpha}f\|_{L^\infty}, L_{\nu}(f)\right \}.$$

Note that, when $\nu\in \N$, the previous norm simplifies as

$$\|f\|_{\mathcal C^{\nu}}=\max_{|\alpha|\le \nu }\|D^{\bm \alpha}f\|_{L^\infty},$$

since the Lipschitz constant $L_{\nu}(f)$ of the derivatives up to order $\nu_*=\nu-1$ corresponds exactly to the upper bound of the derivatives of order $\nu$ (which exist as a Lipschitz function is differentiable almost everywhere). For these values of $\nu$, the spaces defined here are equivalent to the spaces $\mathcal C^{\nu-1,1}(\Omega)$ defined in \cite{maran2024no}.

The following approximation results hold true for this function space.
\begin{thm}\label{thm:tayfon}
    Let us consider the Taylor polynomial $T_y^{\nu_*}[f](\cdot)$ centered in $y\in \Omega$ of order $\nu_*$. This can be written as
    $$T_y^{\nu_*}[f](x)=\sum_{\|\bm \alpha\|_1\le \nu_*}\frac{D^{\bm \alpha} f(y)}{\bm \alpha !}(x-y)^{\bm \alpha}.$$
    Then,
    $$\forall x\in \Omega,\ |f(x)-T_y^{\nu_*}[f](x)|\le L_\nu(f)\|x-y\|_\infty^\nu.$$
\end{thm}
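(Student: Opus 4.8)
The plan is to reduce the multivariate estimate to the classical one–variable Taylor theorem by restricting $f$ to the segment from $y$ to $x$. Fix $x,y\in\Omega$, set $v:=x-y$ and $g(t):=f(y+tv)$ for $t\in[0,1]$; since in all the intended applications $\Omega=[-1,1]^d$ is convex (and for a general $\Omega$ one can first extend $f$ to a $\mathcal{C}^{\nu}$ function on $[-1,1]^d$ with comparable norm), the segment $y+tv$ stays in the domain of $f$, so $g\in C^{\nu_*}([0,1])$. By the chain rule, for every $k\le\nu_*$ one has $g^{(k)}(t)=\sum_{\|\bm\alpha\|_1=k}\binom{k}{\bm\alpha}v^{\bm\alpha}D^{\bm\alpha}f(y+tv)$ with $\binom{k}{\bm\alpha}=k!/\bm\alpha!$; in particular $\sum_{k=0}^{\nu_*}g^{(k)}(0)/k!=\sum_{\|\bm\alpha\|_1\le\nu_*}\frac{D^{\bm\alpha}f(y)}{\bm\alpha!}v^{\bm\alpha}=T_y^{\nu_*}[f](x)$, which pins down the target polynomial. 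When $\nu_*=0$ there is nothing to prove: $T_y^{0}[f](x)=f(y)$ and the claim is exactly the Hölder condition on $f$ itself (the $\bm\alpha=(0,\dots,0)$ case of the defining inequality of $\mathcal{C}^{\nu}$), so from now on I assume $\nu_*\ge 1$.

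Next I would write Taylor's formula for $g$ about $0$ to order $\nu_*-1$ with integral remainder, $g(1)=\sum_{k=0}^{\nu_*-1}\frac{g^{(k)}(0)}{k!}+\frac{1}{(\nu_*-1)!}\int_0^1(1-t)^{\nu_*-1}g^{(\nu_*)}(t)\,dt$, which is legitimate since $g^{(\nu_*)}$ is continuous. Subtracting $\frac{g^{(\nu_*)}(0)}{\nu_*!}=\frac{1}{(\nu_*-1)!}\int_0^1(1-t)^{\nu_*-1}g^{(\nu_*)}(0)\,dt$ from both sides produces the key identity
\[
f(x)-T_y^{\nu_*}[f](x)=\frac{1}{(\nu_*-1)!}\int_0^1(1-t)^{\nu_*-1}\bigl(g^{(\nu_*)}(t)-g^{(\nu_*)}(0)\bigr)\,dt.
\]
Expanding $g^{(\nu_*)}(t)-g^{(\nu_*)}(0)=\sum_{\|\bm\alpha\|_1=\nu_*}\binom{\nu_*}{\bm\alpha}v^{\bm\alpha}\bigl(D^{\bm\alpha}f(y+tv)-D^{\bm\alpha}f(y)\bigr)$ and invoking the defining Hölder bound $|D^{\bm\alpha}f(y+tv)-D^{\bm\alpha}f(y)|\le L_{\nu}(f)\|tv\|_\infty^{\nu-\nu_*}=L_{\nu}(f)\,t^{\nu-\nu_*}\|v\|_\infty^{\nu-\nu_*}$ together with $|v^{\bm\alpha}|\le\|v\|_\infty^{\nu_*}$, one obtains, after summing the multinomial coefficients, $|g^{(\nu_*)}(t)-g^{(\nu_*)}(0)|\le L_{\nu}(f)\,t^{\nu-\nu_*}\|v\|_\infty^{\nu}$. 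Substituting this into the identity and using the Beta integral $\frac{1}{(\nu_*-1)!}\int_0^1(1-t)^{\nu_*-1}t^{\nu-\nu_*}\,dt=\frac{\Gamma(\nu-\nu_*+1)}{\Gamma(\nu+1)}\le 1$ (valid since $0<\nu-\nu_*\le 1$) yields $|f(x)-T_y^{\nu_*}[f](x)|\le L_{\nu}(f)\|x-y\|_\infty^{\nu}$.

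The steps that are purely mechanical are the multivariate chain-rule formula for $g^{(k)}$ and the Beta-function evaluation; the one point that deserves care is the phrase \emph{after summing the multinomial coefficients}: $\sum_{\|\bm\alpha\|_1=\nu_*}\binom{\nu_*}{\bm\alpha}=d^{\nu_*}$ by the multinomial theorem, so keeping strict track of constants gives the estimate up to this dimensional factor, which equals $1$ exactly when $d=1$ or $\nu_*=0$ and can otherwise be absorbed into $L_{\nu}(f)$ (equivalently, into $C_{\mathcal T}$ downstream) without changing the $\varepsilon^{\nu}$ approximation rate the later results rely on. Thus the main obstacle is essentially organizational: setting up the reduction to one variable correctly — in particular ensuring the segment $[y,x]$ lies in the domain, by convexity of $[-1,1]^d$ or by a standard $\mathcal{C}^{\nu}$ extension — and matching $\sum_{k}g^{(k)}(0)/k!$ to $T_y^{\nu_*}[f](x)$ exactly via the chain rule and the choice of remainder order.
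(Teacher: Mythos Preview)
The paper does not supply a proof of this statement; it is quoted as a standard analytic fact. Your approach---restricting $f$ to the segment $[y,x]$, applying the one-variable Taylor formula with integral remainder to $g(t)=f(y+tv)$, and bounding the increment of $g^{(\nu_*)}$ via the defining inequality on the top-order partials---is exactly the standard route and is correct in structure.

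Your caveat about the multinomial sum $\sum_{\|\bm\alpha\|_1=\nu_*}\binom{\nu_*}{\bm\alpha}=d^{\nu_*}$ is more than merely organizational: the statement with the bare constant $L_\nu(f)$ is in fact false in general. Take $\nu=2$, $\nu_*=1$, and $f(x)=\tfrac12\bigl(\sum_{i=1}^d x_i\bigr)^2$; then each $\partial_j f(x)=\sum_i x_i$ has Lipschitz constant $d$ with respect to $\|\cdot\|_\infty$, so $L_2(f)=d$, while at $x=(1,\dots,1)$, $y=0$ one has $|f(x)-T_0^1[f](x)|=d^2/2>d=L_2(f)\|x\|_\infty^2$ for every $d\ge 3$. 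So a dimension-dependent factor is genuinely required in the remainder bound. As you correctly observe, this is immaterial for the paper's downstream use of the theorem, which only needs the $\varepsilon^\nu$ rate and absorbs all such constants into $C_{\mathcal T}$.
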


As this formulation has the form of a scalar product over $\|\bm \alpha\|_1\le \nu_*$ of one vector with components $\frac{D^{\bm \alpha} f(y)}{\bm \alpha !}$ and another of components $(x-y)^{\bm \alpha}$, we rewrite it in the following form

\begin{equation}T_y^{\nu_*}[f](x)=\sum_{\|\bm \alpha\|_1\le \nu_*}\frac{D^{\bm \alpha} f(y)}{\bm \alpha !}(x-y)^{\bm \alpha}=:\bm w^\top \psi_y^{\nu_*}(x).\label{eq:featdef}\end{equation}

Here, note that the length of the two vectors corresponds to $|\{\bm \alpha \in \N^d:\ \|\bm \alpha\|_1\le \nu_*\}|=\binom{\nu_*+d}{d}\le \nu_*^d$. Moreover, the first only depends on $f$, while the second depends on $x$.

With all this notation settled, we can define what we call a Mildly Smooth MDP.

\subsection{Between Weak and Strong}\label{app:weakstrong}

We call Mildly Smooth MDP a process where the Bellman optimality operator outputs functions that are smooth.

\begin{defin}\textit{(Mildly Smooth MDP)}\label{def:moderate}. An MDPs is \emph{Mildly Smooth} of order $\nu$ if, for every $h\in [H]$, the Bellman optimality operator $\mathcal{T}_h$ is bounded on $L^\infty(\Zs) \to \mathcal C^{\nu}(\Zs)$.
\end{defin}

Boundedness over $L^\infty(\Zs) \to \mathcal C^{\nu}(\Zs)$ means that the operator transforms functions that are bounded $L^\infty(\Zs)$ into functions that are $\nu$-times differentiable. Moreover, there exists a constant $C_{\mathcal T} < +\infty$ such that $\|\mathcal T_h f\|_{\mathcal C^{\nu}} \le C_{\mathcal T} (\|f\|_{L^\infty}+1)$ for every $h\in [H]$ and every function $f\in \mathcal C^{\nu}(\Zs)$.

To determine how strong our assumptions are when compared to the literature, we prove the following considerations.

\subsection{Relation between the settings}\label{app:relation}

\paragraph{$\nu-$Kernelized $\subset$ $\nu-$Mildly Smooth.}

Given the definitions of the two MDP families, this point reduces to proving that the RKHS generated by the Matérn kernel of parameter $\nu>0$ is a subspace of $\Cs^\nu(\Omega)$.

Theorem 8, part (3) from \cite{da2023sample} ensures that, for isotropic kernels, in order to have sample path $\in \Cs^\nu(\Omega)$, a sufficient condition is for the kernel to be in $\Cs^{2\nu}(\Omega^2)$. The same theorem also shows (proof of Proposition 10) that the Matérn kernel can be written as

$$M_\nu(x,y) \propto (f_1(\|x-y\|_2^2)+\|x-y\|_2^{2\nu}f_2(\|x-y\|_2^2)),$$

where $f_1,f_2\in \Cs^\infty(\R)$. Note that,
$$\|x-y\|_2^2=\sum_{i=1}^d (x_i-y_i)^2\in \Cs^\infty(\Omega^2),$$

so that $f_1(\|\cdot-\cdot\|_2^2), f_2(\|\cdot-\cdot\|_2^2)\in \Cs^\infty(\Omega^2)$. Therefore, these two terms do not affect the overall smoothness of the kernel.
This implies that the order of smoothness of $M_\nu(x,y)$ corresponds to the one of $\|x-y\|_2^{2\nu}$ which is (perhaps the most basic example of) $\Cs^{2\nu}(\Omega^2)$.

\paragraph{For $\nu \in \N$, $\nu-$Mildly Smooth $\subset$ $(\nu-1)-$Weakly Smooth.} This part is obvious; indeed, as for every function we have
$\|f\|_{L^\infty}\le \|f\|_{\mathcal C^{\nu-1,1}}$ (where the last norm is defined in \cite{maran2024no}), we have

$$\|\mathcal T_h f\|_{\mathcal C^{\nu-1,1}} = \|\mathcal T_h f\|_{\mathcal C^{\nu}} \le C_{\mathcal T} (\|f\|_{L^\infty}+1) \le C_{\mathcal T} (\|f\|_{\mathcal C^{\nu-1,1}}+1),$$

where in the first passage we have used the fact that for $\nu$ integer $\|\cdot\|_{\mathcal C^{\nu-1,1}}=\|\cdot\|_{\mathcal C^{\nu}}$ and in the second we have used the definition of $\nu-1$ Mildly-smooth MDP.

\paragraph{For $\nu \in \N$, $(\nu-1)-$Strongly Smooth $\subset$ $\nu-$Mildly Smooth.}
Let us assume an MDP is Strongly Smooth. Indeed, for every function $f: \Ss\times \As \to \R$ we have:

\begin{align*}
    \mathcal T_h f(s,a)&=r_h(s,a)+\E_{s'\sim p_h(\cdot|s,a)}[\max_{a'\in \As}f(s',a')] \\
    & = r_h(s,a)+\int_{\Ss} \max_{a'\in \As}f(s',a')p_h(s'|s,a)\ ds'.\\
\end{align*}
By triangular inequality, this entails:
\begin{align*}
    \|\mathcal T_h f\|_{\mathcal C^{\nu}} &\le \|r\|_{\mathcal C^{\nu}} + \left \|\int_{\Ss} \max_{a'\in \As}f(s',a')p_h(s'|\cdot)\ ds'\right \|_{\mathcal C^{\nu}},
\end{align*}
where the first term $\|r\|_{\mathcal C^{\nu}}=\|r\|_{\mathcal C^{\nu-1,1}}$ (remember that $\nu$ is integer) is bounded by assumption and so we can focus on the second one. 
As all the functions involved are bounded, we can apply the theorem of exchange between integral and derivative \cite{folland1999real} and we have, for every multi-index with $|\bm \alpha|\le \nu$:

\begin{equation}
    D^{\bm \alpha}\int_{\Ss} \max_{a'\in \As}f(s',a') p_h(s'|\cdot)\ ds' = \int_{\Ss} \max_{a'\in \As}f(s',a')D^{\bm \alpha} p_h(s'|\cdot)\ ds'.\label{eq:exchange}
\end{equation}

Using the abbreviation $\tilde f(s')=\max_{a'\in \As}f(s',a')$ we get:

\begin{align*}
    \left \|\int_{\Ss} \tilde f(s')p_h(s'|\cdot)\ ds'\right \|_{\mathcal C^{\nu}}&=\max_{|\alpha|\le \nu }\left \|D^{\bm \alpha}\int_{\Ss} \tilde f(s')p_h(s'|\cdot)\ ds'\right \|_{L^\infty}\\
    & =\max_{|\alpha|\le \nu-1}L_1\left (D^{\bm \alpha}\int_{\Ss} \tilde f(s')p_h(s'|\cdot)\ ds'\right) \\
    & = \max_{|\alpha|\le \nu-1}\sup_{z_1,z_2\in \Zs}\frac{D^{\bm \alpha}\int_{\Ss}  \tilde f(s')p_h(s'|z_1)\ ds'-D^{\bm \alpha}\int_{\Ss}  \tilde f(s')p_h(s'|z_2)\ ds'}{\|z_1-z_2\|_\infty}\\
    &= \max_{|\alpha|\le \nu-1}\sup_{z_1,z_2\in \Zs}\frac{\int_{\Ss} \tilde f(s')(D^{\bm \alpha} p_h(s'|z_1)-D^{\bm \alpha} p_h(s'|z_2))\ ds'}{\|z_1-z_2\|_\infty}\\
    & \le \sup_{z_1,z_2\in \Zs}\frac{\int_{\Ss} \tilde f(s')C_p\|z_1-z_2\|_\infty\ ds'}{\|z_1-z_2\|_\infty}\\
    &\le \text{Vol}(\Ss)C_p\|\tilde f\|_{L^\infty}
\end{align*}

Where the first step is the definition of 
$\|\cdot\|_{\mathcal C^{\nu}}$, the second comes from the fact that, as we pointed out before, if a function $f$ is Lipschitz the $\|\cdot \|_{L^\infty}$ of its derivative corresponds to its Lipschitz constant, the third from definition of Lipschitz constant, the fourth by linearity of the derivative, the fifth one by definition of strongly $(\nu-1)-$smooth process (the part about $p_h$) and the last one by just bounding the integral with the infinity norm times the measure of the set.

\paragraph{For $\nu \in \N$, there is an MDP that is $\nu-$Mildly Smooth but not $(\nu-1)-$Strongly Smooth.}

Define an MDP where
\begin{itemize}
    \item $\Ss=\As=[-1,1]$, so that $\Zs=[-1,1]^2$.
    \item $r_h(z)=0$ everywhere (any smooth function would have worked as well).
    \item $p_h(\cdot|s,a)=\text{Unif}(\beta s,\beta s+1-\beta)$ for some $\beta \in (0,1)$.
\end{itemize}

Note that this transition function is well defined, as both $\beta s,\beta s+1-\beta$ are in $[-1,1]$ but it is not even continuous, as its density corresponds to

$$p_h(s'|s,a)=\frac{\mathbf 1_{[\beta s,\beta s+1-\beta]}(s')}{1-\beta}.$$

Still, we can show that this process is Mildly smooth for $\nu=1$. Indeed, take every $f\in L^\infty$. We have

\begin{align*}
    \|\mathcal{T}^*_h f\|_{\mathcal C^{1}}&=\max \left \{\max_{|\alpha|\le 0}\|D^{\bm \alpha}\mathcal{T}^*_h f\|_{L^\infty}, L_{\nu}(\mathcal{T}^*_hf)\right \}\\
    &=\max \left \{\|\mathcal{T}^*_hf\|_{L^\infty}, L_{1}(\mathcal{T}^*_hf)\right \}.
\end{align*}

The first part $\|\mathcal{T}^*_hf\|_{L^\infty}$ is bounded by $\|f\|_{L^\infty}$ by the non-expansivity of Bellman operator. The second one corresponds to

$$L_{1}(\mathcal{T}^*_hf)=\sup_{z_1,z_2\in \Zs}\frac{\mathcal{T}^*_hf(z_1)-\mathcal{T}^*_hf(z_2)}{\|z_1-z_2\|_{\infty}}.$$

We have, calling $\tilde f(s')=\max_{a'\in \As}f(s',a')$ (recall that reward is constant),

\begin{align*}
    \sup_{z_1,z_2\in \Zs}\frac{\mathcal{T}^*_hf(z_1)-\mathcal{T}^*_hf(z_2)}{\|z_1-z_2\|_{\infty}} &= \sup_{z_1,z_2\in \Zs}\frac{\int_\Ss \tilde f(s')p_h(s'|z_1)\ ds'-\int_\Ss \tilde f(s')p_h(s'|z_2)\ ds'}{\|z_1-z_2\|_{\infty}}\\
    &= \sup_{z_1,z_2\in \Zs}\frac{\int_\Ss \tilde f(s')(p_h(s'|z_1)-p_h(s'|z_2))\ ds'}{\|z_1-z_2\|_{\infty}}.
\end{align*}

We can now evaluate the numerator explicitly:

\begin{align*}
    \int_\Ss \tilde f(s')(p_h(s'|s_1,a_1)-p_h(s'|s_2,a_2))\ ds' &= \int_\Ss \tilde f(s')\left (\frac{\mathbf 1_{[\beta s_1,\beta s_1+1-\beta]}(s')}{1-\beta}-\frac{\mathbf 1_{[\beta s_2,\beta s_2+1-\beta]}(s')}{1-\beta}\right ) ds'\\
    &= \int_\Ss \tilde f(s')\frac{\mathbf 1_{[\beta s_1,\beta s_1+1-\beta]}(s')}{1-\beta}\ ds'\\
    &\qquad - \int_\Ss \tilde f(s')\frac{\mathbf 1_{[\beta s_2,\beta s_2+1-\beta]}(s')}{1-\beta}\ ds'\\
    & = \frac{1}{1-\beta}\left (\int_{\beta s_1}^{\beta s_1+1-\beta} \tilde f(s')\ ds' - \int_{\beta s_2}^{\beta s_2+1-\beta} \tilde f(s')\ ds'\right).
\end{align*}

Now, note that the function $g(s):=\int_{\beta s}^{\beta s+1-\beta} \tilde f(s')\ ds'$ has a derivative bounded by $2\|\tilde f\|_{L^\infty}$ in absolute value: indeed, from the fundamental theorem of calculus,

$$|g'(s)|=|\tilde f(\beta s+1-\beta)-\tilde f(\beta s)|\le 2\|\tilde f\|_{L^\infty},$$

so that it is $2\|\tilde f\|_{L^\infty}$ Lipschitz continuous. Substituting in the previous equation we get

\begin{align*}
    \int_\Ss \tilde f(s')(p_h(s'|s_1,a_1)-p_h(s'|s_2,a_2))\ ds' &\le \frac{1}{1-\beta}\left (\int_{\beta s_1}^{\beta s_1+1-\beta} \tilde f(s')\ ds' - \int_{\beta s_2}^{\beta s_2+1-\beta} \tilde f(s')\ ds'\right)\\
    &= \frac{1}{1-\beta}\left (g(s_1)-g(s_2)\right)\le \frac{2\|\tilde f\|_{L^\infty}}{1-\beta}\|s_1-s_2\|_{L^\infty}.
\end{align*}

This proves that

\begin{align*}
    \sup_{z_1,z_2\in \Zs}\frac{\mathcal{T}^*_hf(z_1)-\mathcal{T}^*_hf(z_1)}{\|z_1-z_2\|_{\infty}} &= \sup_{z_1,z_2\in \Zs}\frac{\int_\Ss \tilde f(s')(p_h(s'|z_1)-p_h(s'|z_2))\ ds'}{\|z_1-z_2\|_{\infty}}\\
    &\le \sup_{z_1,z_2\in \Zs}\frac{\frac{2\|\tilde f\|_{L^\infty}}{1-\beta}\|s_1-s_2\|_{\infty}}{\|z_1-z_2\|_{\infty}}\\
    & =\frac{2\|\tilde f\|_{L^\infty}}{1-\beta} \le \frac{2}{1-\beta}\|f\|_{L^\infty}.
\end{align*}

Where the passage after the first inequality holds since $\|s_1-s_2\|_{\infty}\le \|z_1-z_2\|_{\infty}$ (equality holds taking the supremum over $z_1,z_2$ as it is sufficient to have $a_1=a_2$ to enforce exactly $\|s_1-s_2\|_{\infty}= \|z_1-z_2\|_{\infty}$) and the second is due to the fact that being $\tilde f(s')=\max_{a'\in \As}f(s',a')$ we have $\|\tilde f\|_{L^\infty}\le \|f\|_{L^\infty}$.
This proves that the process is $\nu-$Mildly for $\nu=1$, while we have seen that this is not $0-$strongly smooth. 

The relation between the different settings is depicted in Figure\ref{fig:eulervenn}.

\subsection{Regret bound for Mildly smooth MDPs}

As clarified in the main paper, we are going to see that for a proper choice of the partition $\Us_h$ and of the feature map $\phi_h$, any Mildly Smooth MDP belongs to the Locally Linearizable representation class. We start with the following consideration: as $\Zs\subset [-1,1]^d$ we can find, for all $\varepsilon$, a set $\Zs^\varepsilon$ which forms an $\varepsilon$-cover of $\Zs$ in infinity norm, such that
$|\Zs^\varepsilon|=:N\le (2/\varepsilon)^d$.

Now, we define the elements of the partition and feature map in the following way. Note that, even if we could make all the elements depend on $h$, we omit this dependence as it turns out not to be necessary.

\begin{enumerate}
    \item Now, for every $z^n\in \Zs^\varepsilon$, we define recursively $\Zs_n$ to be the set of points which are covered by $z^n$, formally:
    $$\Zs_1:=\{z\in \Zs: \|z-z^1\|_\infty \le \varepsilon\},\qquad \Zs_n:=\{z\in \Zs: \|z-z^n\|_\infty \le \varepsilon\}\setminus \cup_{\ell=1}^{n-1}\Zs_\ell.$$
    As this sets form a partition of $\Zs$, we can take $\Us=\{\Zs_n\}_{n=1}^N$.
    \item Let $\rho(\cdot)$ be the function mapping each point of $\Zs$ to the corresponding $\Zs_n$. We define our feature map starting from equation \eqref{eq:featdef} as

    \begin{equation}\phi(z):=\psi_{\rho(z)}^{\nu_*}(z),\label{eq:featdef2}\end{equation}

    so that its length corresponds exactly to $d_{\nu_*}:=\binom{\nu_*+d}{d}$.
\end{enumerate}

We can prove the following fundamental result.

\begin{thm}\label{thm:normapprox}
    For any $f\in \mathcal C^\nu(\Zs)$, there are $\theta_1,\dots \theta_n,\dots \theta_N$, all in $\R^{d_{\nu_*}}$, such that 
    $$\|f(\cdot)-\phi(\cdot)^\top \theta_{\rho(\cdot)}\|_{L^\infty}\le L_\nu(f)\varepsilon^\nu.$$
    Moreover, the components of each of the vectors $\theta_n$ satisfy
    $$
    |\theta_n[\bm \alpha]| \le 
    \begin{cases}
        \|f\|_{L^\infty} & \bm \alpha = 0\\
        \|f\|_{\mathcal C^{\nu}} & \bm \alpha \neq 0.
    \end{cases}
    $$
\end{thm}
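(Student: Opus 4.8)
The plan is to take, on each region $\Zs_n$, the parameter vector $\theta_n$ to be exactly the vector of (scaled) Taylor coefficients of $f$ at the center $z^n$, so that the linear predictor $\phi(\cdot)^\top\theta_n$ coincides on $\Zs_n$ with the degree-$\nu_*$ Taylor polynomial of $f$ centered at $z^n$; the error bound then follows immediately from Theorem~\ref{thm:tayfon}. Concretely, for every $n\in[N]$ and every multi-index $\bm\alpha$ with $\|\bm\alpha\|_1\le\nu_*$ I would set $\theta_n[\bm\alpha]:=D^{\bm\alpha}f(z^n)/\bm\alpha!$, which is well defined since $f\in\mathcal C^\nu(\Zs)$ is $\nu_*$-times continuously differentiable. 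Recalling that the feature map is $\phi(z)=\psi_{\rho(z)}^{\nu_*}(z)$ by \eqref{eq:featdef2}, and that $\psi_y^{\nu_*}(x)$ is precisely the vector of monomials $(x-y)^{\bm\alpha}$, $\|\bm\alpha\|_1\le\nu_*$, as in \eqref{eq:featdef}, one checks that for any $z\in\Zs$ with $n=\rho(z)$ we have $\phi(z)^\top\theta_n=\sum_{\|\bm\alpha\|_1\le\nu_*}\frac{D^{\bm\alpha}f(z^n)}{\bm\alpha!}(z-z^n)^{\bm\alpha}=T_{z^n}^{\nu_*}[f](z)$.

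Next I would establish the uniform bound region by region. Fix $z\in\Zs$ and let $n=\rho(z)$; by the recursive construction \eqref{eq:zset1} of the partition, $z\in\Zs_n$ implies $\|z-z^n\|_\infty\le\varepsilon$. Applying Theorem~\ref{thm:tayfon} with center $y=z^n$ (legitimate because $\Zs=[-1,1]^d$ is convex and the cover points $z^n$ lie in $\Zs$) gives $|f(z)-\phi(z)^\top\theta_{\rho(z)}|=|f(z)-T_{z^n}^{\nu_*}[f](z)|\le L_\nu(f)\|z-z^n\|_\infty^\nu\le L_\nu(f)\varepsilon^\nu$. Since $z$ was arbitrary and the sets $\{\Zs_n\}_{n=1}^N$ partition $\Zs$, taking the supremum over $z$ yields $\|f(\cdot)-\phi(\cdot)^\top\theta_{\rho(\cdot)}\|_{L^\infty}\le L_\nu(f)\varepsilon^\nu$, which is the first claim.

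For the coefficient bounds I would estimate $\theta_n[\bm\alpha]=D^{\bm\alpha}f(z^n)/\bm\alpha!$ directly, using $\bm\alpha!\ge 1$. If $\bm\alpha=\bm 0$ then $D^{\bm 0}f=f$ and $\bm\alpha!=1$, so $|\theta_n[\bm 0]|=|f(z^n)|\le\|f\|_{L^\infty}$. If $\bm\alpha\neq\bm 0$ then $|\theta_n[\bm\alpha]|\le|D^{\bm\alpha}f(z^n)|\le\|D^{\bm\alpha}f\|_{L^\infty}\le\max_{|\bm\beta|\le\nu_*}\|D^{\bm\beta}f\|_{L^\infty}\le\|f\|_{\mathcal C^\nu}$, by the definition of the $\mathcal C^\nu$ norm. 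This is exactly the casewise bound claimed.

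I do not expect a substantive obstacle: the argument is essentially the observation that the chosen feature map is the vector of Taylor monomials, combined with one invocation of the classical Taylor remainder estimate already recorded as Theorem~\ref{thm:tayfon}. The only points needing (minor) care are the bookkeeping that identifies feature components, Taylor coefficients, and multi-indices consistently, checking that each region's center is within $\ell^\infty$-distance $\varepsilon$ of all its points (which is built into \eqref{eq:zset1}), and noting that convexity of $\Zs$ makes the remainder bound applicable with an arbitrary center in the cover.
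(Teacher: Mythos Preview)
Your proposal is correct and follows essentially the same approach as the paper: both choose $\theta_n[\bm\alpha]=D^{\bm\alpha}f(z^n)/\bm\alpha!$, identify $\phi(z)^\top\theta_{\rho(z)}$ with the Taylor polynomial $T_{z^n}^{\nu_*}[f](z)$, apply Theorem~\ref{thm:tayfon} together with $\|z-z^n\|_\infty\le\varepsilon$, and bound the coefficients via $|\theta_n[\bm\alpha]|\le|D^{\bm\alpha}f(z^n)|$ and the definition of the $\mathcal C^\nu$ norm.
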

\begin{proof}
    Let $z\in \Zs$ and $n=\rho(z)$. Then, if we take $\theta_n$ to be the vector with components

    $$\theta_n[\bm \alpha]=\frac{D^{\bm \alpha} f(z^n)}{\bm \alpha !},$$

    we have, 

    \begin{align*}        
        |f(z)-\phi(z)^\top \theta_{\rho(z)}| &= |f(z)-\phi(z)^\top \theta_{n}|\\
        &= \left|f(z)-\sum_{\|\bm \alpha\|_1\le \nu_*}\frac{D^{\bm \alpha} f(z^n)}{\bm \alpha !}(x-z^n)^{\bm \alpha} \right|\\
        &= \left|f(z)-T_{z^n}^{\nu_*}[f](z)\right|,
    \end{align*}

    where we have used both the definitions of $\phi$ and $\theta_n$. Then, using Theorem \ref{thm:tayfon}, we have

    \begin{align*}        
        |f(z)-\phi(z)^\top \theta_{\rho(z)}| &= \left|f(z)-T_{z^n}^{\nu_*}[f](z)\right|\\
        &\le L_\nu(f)\|z-z^n\|_\infty^\nu.
    \end{align*}
    
    By definition of $\Zs^\varepsilon$, we have $\|z-z^n\|_\infty=\|z-\rho(z)\|_\infty \le \varepsilon$, which entails the first part of the thesis.
    For what concerns the second one, we bound the magnitude of the vectors $\theta_n$ component by component:
    \begin{align*}
        |\theta_n[\bm \alpha]| & =\left |\frac{D^{\bm \alpha} f(z^n)}{\bm \alpha !}\right | \le \left|D^{\bm \alpha} f(z^n)\right |\le 
        \begin{cases}
            \|f\|_{L^\infty} & \bm \alpha = 0\\
            \|f\|_{\mathcal C^{\nu}} & \bm \alpha \neq 0
        \end{cases},
    \end{align*}
    where the last comes from the definition of $\|f\|_{\mathcal C^{\nu}}$
\end{proof}

\begin{thm}
    Let $M$ be a Mildly Smooth MDP, and $\varepsilon\le 1/(2C_{\mathcal T}H)^{1/\nu}$. Then, setting
    \begin{enumerate}
        \item $\Zs_{h,n}$ as in Equation \eqref{eq:zset1},
        \item $\phi_h$ to be the feature map defined in Equation \eqref{eq:featdef2},
    \end{enumerate}
    the tuple $(M,\{\Zs_{h,n}\}_{n,h},\{\phi_h\}_h)$ is an Locally Linearizable MDP with
    \begin{itemize}
        \item $L_\phi=1+2\sqrt d_{\nu_*}.$
        \item $\Rs_{h,n}=2\sqrt d_{\nu_*}C_{\mathcal T}$.
        \item $\Is\le 2C_{\mathcal T} \varepsilon^\nu$.
    \end{itemize}
\end{thm}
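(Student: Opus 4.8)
The plan is to fix the compact sets $\Bs_{h,n}$ (the only object not already prescribed by the statement) and then read off the three displayed bounds. Writing $\mathcal T_h f=r_h+L_hf$ with $L_hf(z):=\int_{\Ss}p_h(s'|z)\sup_{a'}f(s',a')\,ds'$ linear and sup-norm non-expansive, and noting that Definition~\ref{def:moderate1} gives $\|\mathcal T_h f\|_{\mathcal C^\nu}\le C_{\mathcal T}(\|f\|_{L^\infty}+1)$ and hence (taking $f\equiv 0$) $\|r_h\|_{\mathcal C^\nu}\le C_{\mathcal T}$, I take $\Bs_{h,n}$ to be the closed $\ell_2$-ball of radius $\sqrt{d_{\nu_*}}\,C_{\mathcal T}$ about the coefficient vector $(D^{\bm\alpha}r_h(z^n)/\bm\alpha!)_{\|\bm\alpha\|_1\le\nu_*}$ of the degree-$\nu_*$ Taylor polynomial of $r_h$ at $z^n$ (assuming w.l.o.g.\ $C_{\mathcal T}\ge1$); this forces $\Rs_{h,n}=\mathrm{diam}(\Bs_{h,n})=2\sqrt{d_{\nu_*}}\,C_{\mathcal T}$. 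For $L_\phi$, within a region one has $\|z-z^{\rho(z)}\|_\infty\le\varepsilon\le1$, so each of the $d_{\nu_*}$ monomial features satisfies $|(z-z^{\rho(z)})^{\bm\alpha}|\le\varepsilon^{\|\bm\alpha\|_1}\le1$, whence $\|\phi_h(z)\|_2\le\sqrt{d_{\nu_*}}\le 1+2\sqrt{d_{\nu_*}}$.

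For the inherent Bellman error, fix $h$ and $\bm\theta_{h+1}\in\Bs_{h+1}$, and grant the invariant $\|Q_{h+1}[\bm\theta_{h+1}]\|_{L^\infty}\le1$ (treated below). Then $f:=\mathcal T_hQ_{h+1}[\bm\theta_{h+1}]\in\mathcal C^\nu(\Zs)$ with $\|f\|_{\mathcal C^\nu}\le C_{\mathcal T}(\|Q_{h+1}[\bm\theta_{h+1}]\|_{L^\infty}+1)\le 2C_{\mathcal T}$, so $L_\nu(f)\le 2C_{\mathcal T}$. Theorem~\ref{thm:normapprox} applied to $f$ yields $\theta_1,\dots,\theta_N\in\R^{d_{\nu_*}}$ with $\|f(\cdot)-\phi(\cdot)^\top\theta_{\rho(\cdot)}\|_{L^\infty}\le L_\nu(f)\varepsilon^\nu\le 2C_{\mathcal T}\varepsilon^\nu$; since $\theta_n$ is the Taylor-coefficient list of $f=r_h+L_hQ_{h+1}[\bm\theta_{h+1}]$ at $z^n$ and $\|L_hQ_{h+1}[\bm\theta_{h+1}]\|_{\mathcal C^\nu}\le C_{\mathcal T}$, the vector $\theta_n$ lies within $\ell_2$-distance $\sqrt{d_{\nu_*}}\,C_{\mathcal T}$ of the Taylor-coefficient list of $r_h$, \ie $\theta_n\in\Bs_{h,n}$. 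Hence $\bm\theta_h:=(\theta_1,\dots,\theta_N)\in\Bs_h$ achieves $\|Q_h[\bm\theta_h]-\mathcal T_hQ_{h+1}[\bm\theta_{h+1}]\|_{L^\infty}\le 2C_{\mathcal T}\varepsilon^\nu$, and taking $\inf$ over $\Bs_h$, $\sup$ over $\Bs_{h+1}$ and $\max$ over $h$ in Definition~\ref{defin:low_bell} gives $\Is(\phi,\mathcal U)\le 2C_{\mathcal T}\varepsilon^\nu$. This is at once the claimed value of $\Is$ and, by definition, the certificate that $(M,\{\Zs_{h,n}\}_{n,h},\{\phi_h\}_h)$ is a Locally Linearizable MDP.

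The remaining point — and the one I expect to be the real obstacle — is the invariant $\|Q_h[\bm\theta_h]\|_{L^\infty}\le1$ for all $\bm\theta_h\in\Bs_h$, which I would prove by backward induction on $h$. For $\bm\theta_h\in\Bs_{h,n}$, $Q_h[\bm\theta_h]$ restricted to $\Zs_{h,n}$ is the degree-$\nu_*$ Taylor polynomial at $z^n$ of a function $\mathcal T_h Q_{h+1}[\bm\theta_{h+1}]=r_h+L_hQ_{h+1}[\bm\theta_{h+1}]$ with (inductive hypothesis) $\|Q_{h+1}[\bm\theta_{h+1}]\|_{L^\infty}\le1$; thus $Q_h[\bm\theta_h]$ differs from $\mathcal T_h Q_{h+1}[\bm\theta_{h+1}]$ by a Taylor remainder of size $\le L_\nu(\mathcal T_h Q_{h+1}[\bm\theta_{h+1}])\varepsilon^\nu\le 2C_{\mathcal T}\varepsilon^\nu$, and $\|\mathcal T_h Q_{h+1}[\bm\theta_{h+1}]\|_{L^\infty}\le\|r_h\|_{L^\infty}+\|Q_{h+1}[\bm\theta_{h+1}]\|_{L^\infty}$ by non-expansivity of $L_h$. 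Controlling this last sum back down to $1$ — which is exactly what allows Definition~\ref{def:moderate1} to be re-invoked one step lower — is where the normalization $\|r_h\|_{L^\infty}\le1$, $\|Q_h^\star\|_{L^\infty}\le1$ and the hypothesis $\varepsilon\le(2C_{\mathcal T}H)^{-1/\nu}$ enter: the latter gives $H\cdot 2C_{\mathcal T}\varepsilon^\nu\le1$, \ie $H\Is\le1$, so the $\Os(C_{\mathcal T}\varepsilon^\nu)$ slack introduced at each step never accumulates to more than a constant over the $H$ steps. Calibrating $\Bs_{h,n}$ so that it is simultaneously wide enough to contain the Taylor coefficients of every $\mathcal T_h Q_{h+1}[\bm\theta_{h+1}]$ (needed for the $\Is$ bound) and narrow enough to keep the induced $Q$-functions bounded (needed to reapply Mildly-Smoothness) is the delicate bookkeeping; the three displayed estimates then follow from the computations above.
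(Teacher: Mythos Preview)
Your plan is close in spirit to the paper's, but the choice of $\Bs_{h,n}$ you commit to --- the $\ell_2$-ball of radius $\sqrt{d_{\nu_*}}\,C_{\mathcal T}$ about the Taylor coefficients of $r_h$ --- does not support the invariant you need, and your induction argument for that invariant is circular. You write ``For $\bm\theta_h\in\Bs_{h,n}$, $Q_h[\bm\theta_h]$ restricted to $\Zs_{h,n}$ is the degree-$\nu_*$ Taylor polynomial at $z^n$ of a function $\mathcal T_h Q_{h+1}[\bm\theta_{h+1}]$\ldots'', but that is false: an \emph{arbitrary} $\theta_{h,n}$ in your ball need not be the Taylor-coefficient vector of any $\mathcal T_h Q_{h+1}[\bm\theta_{h+1}]$ with $\bm\theta_{h+1}\in\Bs_{h+1}$. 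For a generic such $\theta_{h,n}$ you only get $|\phi(z)^\top\theta_{h,n}|\le\|\phi(z)\|_2\|\theta_{h,n}\|_2\le 2d_{\nu_*}C_{\mathcal T}$, which is not $\le 1$. So the loop does not close, and without the $\sup$-norm invariant you cannot re-invoke Mildly Smoothness at the next step. (A smaller point: $L_h$ is \emph{not} linear, because of the $\sup_{a'}$; the bound $\|L_h f\|_{\mathcal C^\nu}\le C_{\mathcal T}\|f\|_{L^\infty}$ can still be extracted from Definition~\ref{def:moderate1} via positive homogeneity $L_h(cf)=cL_hf$ for $c\ge 0$ and sending $c\to\infty$, but this needs to be said.)

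The paper resolves exactly the tension you identify by hard-wiring the $L^\infty$ constraint into $\Bs_{h,n}$: it sets
\[
\Bs_{h,n}:=\bigl\{\theta\in\R^{d_{\nu_*}}:\ \|\phi_h(\cdot)^\top\theta\|_{L^\infty(\Zs_n)}\le A_h,\ \|\theta\|_\infty\le A_h'\bigr\},
\]
so that $\|Q_h[\bm\theta_h]\|_{L^\infty}\le A_h$ is \emph{by definition} for every $\bm\theta_h\in\Bs_h$, and the induction becomes a forward check that the Taylor-coefficient vector of $\mathcal T_hQ_{h+1}[\bm\theta_{h+1}]$ lands in $\Bs_{h,n}$. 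This produces the recursion $A_h\ge A_{h+1}+C_{\mathcal T}(1+A_{h+1})\varepsilon^\nu$ and $A_h'\ge C_{\mathcal T}(1+A_{h+1})$, solved by $A_h=(H-h)/H$ and $A_h'=2C_{\mathcal T}$ precisely under the hypothesis $\varepsilon\le(2C_{\mathcal T}H)^{-1/\nu}$; the diameter bound $\Rs_{h,n}\le\sqrt{d_{\nu_*}}\,A_h'=2\sqrt{d_{\nu_*}}\,C_{\mathcal T}$ then falls out of the $\|\theta\|_\infty$ constraint. Your estimates for $L_\phi$ and $\Is$ are fine (indeed your $L_\phi$ bound is tighter, using $\|z-z^{\rho(z)}\|_\infty\le\varepsilon$ rather than the paper's cruder $\le 2$), but the construction of $\Bs_{h,n}$ is the crux, and the ball you chose is the wrong shape.
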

\begin{proof}
    Let us define the sets
    \begin{equation}       
    \Bs_{h,n}:=\left \{\theta \in \R^{d_{\nu_*}}:\ \|\phi_h(\cdot)^\top \theta \|_{L^\infty(\Zs_n)} \le A_h,\ \|\theta\|_\infty \le A_h' \right\},\label{eq:ballsproof}\end{equation}

    where $L^\infty(\Zs_n)$ stands for the supremum norm restriceted to $\Zs_n$.
    Now, let $Q_{h+1}[\bm \theta_{h+1}](z) = \phi(z)^\top \theta_{\rho(z)}$ with $\bm \theta_{h+1}\in \mathcal B_{h+1}$ for a set $\mathcal B_{h+1}=\bigtimes_{n=1}^{N} \Bs_{h+1,n}$.
    Two facts hold:
    \begin{enumerate}
        \item By the non-expansivity of the Bellman optimality operator we have
        \begin{equation}
        \|\mathcal T_h Q_{h+1}[\bm \theta_{h+1}]\|_{L^\infty} \le \|Q[\bm \theta_{h+1}]\|_{L^\infty}\le A_{h+1}.\label{eq:infnormbound}
        \end{equation}

        \item By the Mildly smooth assumption (Asm.~\ref{def:moderate}), we have
        \begin{equation}
        \|\mathcal T_h Q_{h+1}[\bm \theta_{h+1}]\|_{\mathcal C^{\nu}} \le C_{\mathcal T} (\|Q_{h+1}[\bm \theta_{h+1}]\|_{L^\infty}+1) \le C_{\mathcal T} (1+A_{h+1}).\label{eq:diffnormbound}
        \end{equation}
    \end{enumerate}

    Now, we can prove the low inherent Bellman error property (Eq.~\ref{eq:lowinherent}). To do this we apply Theorem \ref{thm:normapprox} which guarantees that there are $\theta_1,\dots \theta_n,\dots \theta_N$, all in $\R^{d_{\nu_*}}$, such that
    $$\|\mathcal T_h Q_{h+1}[\bm \theta_{h+1}](\cdot)-\phi_h(\cdot)^\top\theta_{\rho(\cdot)}\|_{L^\infty}\le L_\nu(\mathcal T_h Q[\bm \theta_{h+1}])\varepsilon^\nu.$$

    Thus, if we take $\bm \theta_{h}$ as the matrix stacking all these $\theta_1,\dots \theta_n,\dots \theta_N$, we get that the inherent Bellman error is bounded by

    $$\Is \le L_\nu(\mathcal T_h Q[\bm \theta_{h+1}])\varepsilon^\nu \le \|\mathcal T_h Q[\bm \theta_{h+1}]\|_{\mathcal C^{\nu}}\varepsilon^\nu\le C_{\mathcal T} (1+A_{h+1})\varepsilon^\nu.$$

    What is missing is to prove that this choice satisfies $\bm \theta_{h} \in \Bs_h$. 

    By triangular inequality, we have the following bound on the supremum norm
    \begin{align*}
        \|Q_h[\bm \theta_{h}](\cdot)\|_{L^\infty} &= \|\mathcal T_h Q_{h+1}[\bm \theta_{h+1}](\cdot)-Q_h[\bm \theta_{h}](\cdot)\|_{L^\infty}+\|\mathcal T_h Q_{h+1}[\bm \theta_{h+1}](\cdot)\|_{L^\infty}\\
        &\le \Is + \|\mathcal T_h Q_{h+1}[\bm \theta_{h+1}](\cdot)\|_{L^\infty}\\
        &\le \Is +  \|Q_{h+1}[\bm \theta_{h+1}](\cdot)\|_{L^\infty}\\
        & \le C_{\mathcal T} (1+A_{h+1})\varepsilon^\nu +  A_{h+1}.
    \end{align*}

    We apply again Theorem \ref{thm:normapprox} having that the components of the $\theta_{h,n}$ which form $\bm \theta_h$ satisfy
    $$
    |\theta_{h,n}[\bm \alpha]| \le \begin{cases}
        \|\mathcal T_h Q_{h+1}[\bm \theta_{h+1}]\|_{L^\infty}\le \|\mathcal T_h Q_{h+1}[\bm \theta_{h+1}]\|_{\mathcal C^{\nu}} & \bm \alpha = 0\\
        \|\mathcal T_h Q_{h+1}[\bm \theta_{h+1}]\|_{\mathcal C^{\nu}} & \bm \alpha \neq 0,
    \end{cases}$$

    which means, by Equation \eqref{eq:diffnormbound}, that
    $$
    |\theta_{h,n}[\bm \alpha]| \le C_{\mathcal T} (1+A_{h+1}),$$

    so that, trivially, we have $\|\theta_{h,n}\|_\infty \le C_{\mathcal T} (1+A_{h+1})$.
    Therefore, Equation \eqref{eq:ballsproof} requires the condition

    $$A_h \ge C_{\mathcal T} (1+A_{h+1})\varepsilon^\nu +  A_{h+1} \qquad A_h'\ge C_{\mathcal T} (1+A_{h+1}).$$

    If we set $\varepsilon\le 1/(2C_{\mathcal T}H)^{1/\nu}$, the previous inequalities are satisfied if we set

    $$A_h=\frac{H-h}{H}\qquad A_h'=2C_{\mathcal T}.$$

    Indeed, we have

    \begin{align*}
        A_h &=\frac{H-h}{H}\\
        & =\frac{H-h-1}{H} + \frac{1}{H}\\
        & = A_{h+1} + \frac{1}{H}\\
        & = A_{h+1} + \frac{2C_{\mathcal T}}{2C_{\mathcal T}H}\\
        & \ge A_{h+1} + 2C_{\mathcal T}\varepsilon^\nu\\
        & \ge A_{h+1} + C_{\mathcal T} (1+\frac{H-h-1}{H})\varepsilon^\nu\\
        & = A_{h+1} + C_{\mathcal T} (1+A_{h+1})\varepsilon^\nu.
    \end{align*}

    For what concerns the other term we can simply write

    $$A_h'=2C_{\mathcal T}\ge  C_{\mathcal T} (1+A_{h+1}).$$
    
    This allows us to say that

    $$\Rs_{h,n}=\text{diam}(\Bs_{h,n})\le \sqrt d_{\nu_*}A_h'= 2\sqrt d_{\nu_*}C_{\mathcal T},$$

    where the presence of $\sqrt d_{\nu_*}$ is needed to pass from the $\infty-$norm in the definition of $\Bs_{h,n}$ to the norm two in the definition of $\text{diam}(\Bs_{h,n})$. Instead, for the feature map we have
    \begin{align*}
        L_\phi &=\sup_{z\in \Zs}\|\phi(z)\|_2\\
        & = \sup_{z\in \Zs}\|\phi(z)\|_2\\
        & = \sup_{z\in \Zs}\|\psi_{\rho(z)}^{\nu_*}(z)\|_2,
    \end{align*}
    as from Definition \ref{eq:featdef2}. Note that, looking at the precise definition of $\psi_{y}^{\nu_*}(z)$ in \eqref{eq:featdef}, calling $\rho(z)=n$, this norm can be explicitly written as
    \begin{align*}
        \|\psi_{n}^{\nu_*}(z)\|_2&=\sqrt{\sum_{\|\bm \alpha\|_1\le \nu_*}(z-z^n)^{2\bm \alpha}}\\
        &\le 1+\sqrt{\sum_{1\le \|\bm \alpha\|_1\le \nu_*}(z-z^n)^{2\bm \alpha}}\\
        &\le 1+\sqrt{\sum_{1\le \|\bm \alpha\|_1\le \nu_*}2^2}=1+2\sqrt d_{\nu_*}.
    \end{align*}
    Where the third passage is due to the fact that $\|z-z^n\|_\infty \le 2$ as we have assumed $\Zs=[-1,1]^2$. 
\end{proof}

As a consequence, merging the previous result with Theorem \ref{thm:regretbound}, we get the following result.

\begin{cor}\label{cor:otherstep}
    Under the previous assumptions, with probability at least $1-\delta$, Algorithm \ref{eq:optim}, for $\lambda=1$ achieves a regret bound of order
    $$R_K\le \widetilde {\Os}\left (HC_{\mathcal T}d_{\nu_*}N\sqrt{K} + HC_{\mathcal T}d_{\nu_*}^{1/2}\varepsilon^\nu K\right ).$$
\end{cor}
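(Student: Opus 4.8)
The plan is to obtain this corollary as a purely mechanical consequence of two results already in hand: the reduction theorem immediately above, which exhibits a Mildly Smooth MDP equipped with the partition of Equation~\eqref{eq:zset1} and the Taylor feature map of Equation~\eqref{eq:featdef2} as a Locally Linearizable MDP, together with the general regret bound of Theorem~\ref{thm:regretbound} for \textsc{Cinderella} on that representation class. From the reduction I read off the instantiation of every quantity appearing in Theorem~\ref{thm:regretbound}: $N_h = N \le (2/\varepsilon)^d$ (independent of $h$), $d_h = d_{\nu_*}$, $L_\phi = 1 + 2\sqrt{d_{\nu_*}}$, $\sup_n \Rs_{h,n} = 2\sqrt{d_{\nu_*}}\,C_{\mathcal T}$, and $\Is \le 2C_{\mathcal T}\varepsilon^\nu$ — all valid under the standing hypothesis $\varepsilon \le (2C_{\mathcal T}H)^{-1/\nu}$ and Assumption~\ref{ass:normalized}. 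I will also assume, without loss of generality, $C_{\mathcal T}\ge 1$ (one may always enlarge the constant). Note that here $L_\phi = \Os(\sqrt{d_{\nu_*}})$ rather than $\Os(1)$, so I cannot shortcut through the pre-simplified Corollary~\ref{cor:fineprimaparte} and must substitute directly into the raw bound of Theorem~\ref{thm:regretbound}.

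First I would control the covering-number term. Proposition~\ref{prop:coveringsize} gives $\log\Ns(1/\sqrt K, \Vs_h) = \Os\!\big(N d_{\nu_*}\log(\Rs_{h,\max}\sqrt K)\big)$, and since $\Rs_{h,\max} = 2\sqrt{d_{\nu_*}}C_{\mathcal T}$ is polynomial in the problem parameters, this is $\widetilde\Os(N d_{\nu_*})$; hence $\sqrt{d_h\log\Ns(1/\sqrt K,\Vs_h)} = \widetilde\Os(d_{\nu_*}\sqrt N)$. Second, I would collect the three summands inside the bracket of Theorem~\ref{thm:regretbound}: $(L_\phi + \sqrt{d_h})\sup_n\Rs_{h,n} = \Os(d_{\nu_*}C_{\mathcal T})$, the bare $d_h = d_{\nu_*}$, and the covering term $\widetilde\Os(d_{\nu_*}\sqrt N)$; using $\sqrt N\ge 1$ and $C_{\mathcal T}\ge 1$, all three are $\widetilde\Os(d_{\nu_*}C_{\mathcal T}\sqrt N)$. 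Multiplying by $\sqrt{K N_h} = \sqrt{KN}$ and summing over $h\in[H]$ yields the leading term $\widetilde\Os(H C_{\mathcal T}d_{\nu_*}N\sqrt K)$. For the second, linear-in-$K$ term, $\sum_{h=1}^H \sqrt{d_h}\,\Is K = \widetilde\Os(H\sqrt{d_{\nu_*}}\cdot 2C_{\mathcal T}\varepsilon^\nu K) = \widetilde\Os(H C_{\mathcal T}d_{\nu_*}^{1/2}\varepsilon^\nu K)$. Adding the two reproduces exactly the claimed bound, and the probability guarantee $1-\delta$ is inherited verbatim from Theorem~\ref{thm:regretbound} via the good event $E$ of Corollary~\ref{cor:beta}.

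The hard part is essentially nil — this is a bookkeeping corollary. The only two places where a careless substitution would slip are: (i) tracking that $L_\phi$ scales like $\sqrt{d_{\nu_*}}$, which multiplies into $\sup_n\Rs_{h,n}$ to give $d_{\nu_*}C_{\mathcal T}$ (not $\sqrt{d_{\nu_*}}C_{\mathcal T}$) inside the bracket of Theorem~\ref{thm:regretbound}; and (ii) recognising that it is the covering-number term $\sqrt{d_h\log\Ns} = \widetilde\Os(d_{\nu_*}\sqrt N)$ — not the $\Rs$ term — that, after multiplication by $\sqrt{KN}$, produces the $d_{\nu_*}N$ rather than $d_{\nu_*}\sqrt N$ dependence on the number of regions in the final first term.
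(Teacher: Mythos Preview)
The proposal is correct and takes essentially the same approach as the paper, which simply states that the corollary follows by ``merging the previous result with Theorem~\ref{thm:regretbound}.'' Your write-up is in fact more careful than the paper's one-line justification: in particular your observation that $L_\phi=\Os(\sqrt{d_{\nu_*}})$ forces you to go back to the raw bound of Theorem~\ref{thm:regretbound} rather than the simplified Corollary~\ref{cor:fineprimaparte}, and your identification of the covering term as the one responsible for the full $N$ (rather than $\sqrt N$) factor, are both correct and make explicit what the paper leaves implicit.
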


Note that, with this choice we have $N\le (2/\varepsilon)^d$ and $d=d_{\nu_*}$. Using this consideration, we can state our final regret bound, which only relies on choosing the optimal value for $N$ in the previous result. From the previous result, if $\varepsilon\le 1/(2C_{\mathcal T}H)^{1/\nu}$, Algorithm \ref{eq:optim} achieves a regret bound of order
    $$R_K\le \widetilde {\Os}\left (HC_{\mathcal T}d_{\nu_*}N\sqrt{K} + HC_{\mathcal T}d_{\nu_*}^{1/2}\varepsilon^\nu K\right ).$$
Let us now set $\varepsilon = K^{-\beta}$. The order of the regret bound in $K$ corresponds to
    $$R_K=\Os(K^{1/2+\beta d}+K^{1-\beta \nu}).$$
Imposing that the two exponents are equal corresponds to setting
    $$\beta d = 1/2-\beta \nu \implies \beta = \frac{1}{2d+2\nu}.$$
Therefore, choosing $\varepsilon = K^{-\frac{1}{2d+2\nu}}$, which corresponds to $N=\Os(K^{\frac{d}{2d+2\nu}})$. We formalize this reasoning in our last and main theorem.

\begin{thm}\label{thm:weregood}
    Let $M$ be a Mildly smooth MDP. With probability at least $1-\delta$, Algorithm \ref{eq:optim}, for $\lambda=1$ achieves a regret bound of order
    $$R_K\le \widetilde {\Os}\left (Hd_{\nu_*}K^{\frac{\nu+2d}{2\nu+2d}}+H^{\frac{2\nu+2d}{\nu}}\right ).$$
\end{thm}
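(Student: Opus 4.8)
The plan is to feed the reduction just established into the \textsc{Cinderella} regret bound and then optimize over the single free parameter $\varepsilon$. Concretely, the preceding theorem shows that a Mildly Smooth MDP of order $\nu$, equipped with the $\varepsilon$-cover partition $\{\Zs_{h,n}\}$ of Equation~\eqref{eq:zset1} and the Taylor feature map of Equation~\eqref{eq:featdef2}, is an $\Is$-Locally Linearizable MDP with $d_h = d_{\nu_*}$, $N_h = N \le (2/\varepsilon)^d$, $L_\phi = 1 + 2\sqrt{d_{\nu_*}}$, $\Rs_{h,n} = 2\sqrt{d_{\nu_*}}\,C_{\mathcal T}$, and $\Is \le 2C_{\mathcal T}\varepsilon^\nu$, provided $\varepsilon \le (2C_{\mathcal T}H)^{-1/\nu}$. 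Plugging these values into Theorem~\ref{thm:regretbound} (equivalently, invoking Corollary~\ref{cor:otherstep} directly) yields, on the good event $E$ and hence with probability at least $1-\delta$,
$$R_K \le \widetilde{\Os}\!\left(H\,C_{\mathcal T}\,d_{\nu_*}\,N\,\sqrt{K} \;+\; H\,C_{\mathcal T}\,d_{\nu_*}^{1/2}\,\varepsilon^\nu\,K\right),$$
where the $L_\phi\sup_n\Rs_{h,n}$ and logarithmic contributions of Theorem~\ref{thm:regretbound} have already been absorbed into the displayed $d_{\nu_*}$ scaling.

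Next I would tune $\varepsilon$. Substituting $N \le (2/\varepsilon)^d$ and setting $\varepsilon = K^{-\beta}$, the two terms above are of order $K^{1/2 + \beta d}$ and $K^{1 - \beta\nu}$ respectively (with $H$, $C_{\mathcal T}$, $d_{\nu_*}$ held fixed as prefactors). Equating the exponents, $1/2 + \beta d = 1 - \beta\nu$, forces $\beta = \tfrac{1}{2d + 2\nu}$, at which value both exponents collapse to $\tfrac{\nu + 2d}{2\nu + 2d}\in(1/2,1)$. Reinstating the prefactors — and noting $d_{\nu_*}^{1/2} \le d_{\nu_*}$ — this gives the main term $\widetilde{\Os}\!\big(H\,d_{\nu_*}\,K^{\frac{\nu+2d}{2\nu+2d}}\big)$ of the statement, treating $C_{\mathcal T}$ as a constant of the MDP.

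The only point requiring care is the admissibility constraint $\varepsilon = K^{-\beta} \le (2C_{\mathcal T}H)^{-1/\nu}$, needed for the reduction to apply. This is equivalent to $K \ge (2C_{\mathcal T}H)^{1/(\nu\beta)} = (2C_{\mathcal T}H)^{\frac{2\nu + 2d}{\nu}}$, so for all larger $K$ the argument above goes through verbatim. For the remaining short-horizon regime $K < (2C_{\mathcal T}H)^{\frac{2\nu+2d}{\nu}}$ I would simply invoke the trivial bound $R_K \le K$ (which holds under Assumption~\ref{ass:normalized}, since each per-episode term lies in $[0,1]$), giving $R_K \le (2C_{\mathcal T}H)^{\frac{2\nu+2d}{\nu}} = \widetilde{\Os}\!\big(H^{\frac{2\nu+2d}{\nu}}\big)$; this is exactly the additive lower-order term. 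Summing the two regimes yields the claimed bound. I do not expect a real obstacle in this step — the genuine work is in the earlier reduction and in Theorem~\ref{thm:regretbound}; here the only subtlety is the bookkeeping of which quantities ($L_\phi$, $C_{\mathcal T}$, $d_{\nu_*}$) are carried explicitly versus treated as $\Os(1)$, and double-checking that $d_{\nu_*} = \binom{\nu_*+d}{d}$ is consistently the feature dimension everywhere it enters.
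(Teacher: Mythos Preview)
Your proposal is correct and follows essentially the same route as the paper: invoke Corollary~\ref{cor:otherstep}, substitute $N\le (2/\varepsilon)^d$, balance the two powers of $K$ by choosing $\varepsilon=K^{-1/(2d+2\nu)}$, and absorb the admissibility constraint $\varepsilon\le (2C_{\mathcal T}H)^{-1/\nu}$ into the additive $H^{\frac{2\nu+2d}{\nu}}$ term via the trivial bound $R_K\le K$ in the complementary regime. The paper's proof is organized as a two-case split on whether the constraint holds, which is exactly your large-$K$ / small-$K$ dichotomy.
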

\begin{proof}
    Let us set $\varepsilon = K^{-\frac{1}{2d+2\nu}}$ and, consequently, $N=\Os(K^{\frac{d}{2d+2\nu}})$. Then, two scenarios may happen
    \begin{enumerate}
        \item If $\varepsilon\le 1/(2C_{\mathcal T}H)^{1/\nu}$, Corollary \ref{cor:otherstep} ensures a regret bound of order
            $$R_K\le \widetilde {\Os}\left (HC_{\mathcal T}d_{\nu_*}N\sqrt{K} + HC_{\mathcal T}d_{\nu_*}^{1/2}\varepsilon^\nu K\right )=\widetilde {\Os}\left (HC_{\mathcal T}d_{\nu_*}K^{\frac{\nu+2d}{2\nu+2d}}\right ).$$
        \item $\varepsilon>1/(2C_{\mathcal T}H)^{1/\nu}$, the regret is trivially bounded by $K$. Still, as we have chosen $\varepsilon = K^{-\frac{1}{2d+2\nu}}$, the previous inequality entails $K\le (2C_{\mathcal T}H)^{\frac{2\nu+2d}{\nu}}$. Therefore, the regret is bounded by the same quantity.
    \end{enumerate}
    This completes the proof.
\end{proof}

\subsection{Discussion on $d$}
\label{app:d}

A possible criticism to our main result (Theorem \ref{thm:main}) is that the regret bound grows linearly with the feature map dimension
$$d_{\nu_*}=\binom{\nu_*+d}{\nu_*},$$
which is exponential in $d$, the dimension of $\Zs$.

While this phenomenon may be scary, this dependence is negligible w.r.t. the one on $K$ in any learnable regime. To see this, note that, in the much easier \textit{bandit} setting, where $H=1$ and there is no $p_h(\cdot)$ function, \cite{liu2021smooth} shows a lower bound which, for $\nu=1$, writes as

$$R_K= \Omega\left (K^{\frac{1+d}{2+d}}\right).$$

Now, let us assume that $d\gtrsim \beta \log(K)$ for some constant $\beta>0$. We have

\begin{align*}
    R_K &= \Omega\left (K^{\frac{1+d}{2+d}}\right)= \Omega\left (K^{\frac{1+\beta \log(K)}{2+\beta \log(K)}}\right)=\Omega\left (K^{1-\frac{1}{2+\beta \log(K)}}\right)\\
    & \ge \Omega\left (K^{1-\frac{1}{\beta \log(K)}}\right)=\Omega\left (K\cdot e^{\frac{-\log(K)}{\beta \log(K)}}\right)=\Omega\left (K\cdot e^{-1/\beta}\right)=\Omega\left (K\right).
\end{align*}

This lower bound shows that our problem is not learable unless $d=o(\log(K))$. In such case, $d_{\nu_*}=\binom{\nu_*+d}{\nu_*}=o\left (\nu^d\right)=o(T^\beta)$ for every $\beta>0$.

\subsection{Final considerations}

We end the paper by making a short list of some results in the state-of-the-art for continuous MDPs that have been improved with the last theorem. All this result are valid with high probability.
\begin{itemize}
    \item For kernelized RL with Matérn kernel, the best known regret bound is \cite{yang2020provably}, which ensures
    $$R_K\le \widetilde {\Os} \left ( H^2K^{\frac{\nu+3d/2}{2\nu+d}} \right),$$
    which is super-linear for $2d>\nu$. 
    \item For Strongly Smooth MDPs, Theorem 2 from \cite{maran2024no} provided a regret bound of 
    $$R_K\le \widetilde {\Os} \left ( H^{3/2}K^{\frac{\nu+2d}{2\nu+d}} \right),$$
    which is super-linear for $d>\nu$.
    \item Mildly Smooth MDPs, although just defined in the present paper, are a subset of the Weakly Smooth family, for which Theorem 1 in \cite{maran2024no} provided a regret bound that, in our notation, writes
    $$R_K\le \widetilde {\Os} \left ( C_{\text{\textsc{Ele}}}^HK^{\frac{\nu+3d/2}{2\nu+d}} \right).$$
    This bound is not only super-linear for $2d>\nu$ but also exponential in $H$. 
\end{itemize}

In all previous settings, theorem \ref{thm:weregood} ensures a regret bound for \textsc{Cinderella} of 
$$R_K\le \widetilde {\Os}\left (HK^{\frac{\nu+2d}{2\nu+2d}}\right ),$$
which is both sub-linear in any regime and polynomial in $H$. In this way, we have proved for the first time that the previous settings allow for learnable and feasible Reinforcement Learning.

\subsection{Lower bounds}

A lower bound for the any of the previous settings has clearly to depend on all $K,H,\nu,d$. In particular, the regret bound is known to decrease for higher $\nu$, while it increases in the other variables. In terms of dependence on $H$, our bound is already optimal: in this setting we cannot obtain less that the $H$ dependence \cite{zanette2020learning}. On the other side, the optimal order in $K$ is a major open problem in this field.

In fact, the only proved lower bound in the continuous setting involves smooth armed bandits, which can be seen as a special case of the Strongly Smooth MDPs for $H=1$. This bound \cite{liu2021smooth}, of order $K^{\frac{\nu+d}{2\nu+d}}$ is matched in the bandit case (same paper). As a comparison, our bound only achieves $K^{\frac{\nu+2d}{2\nu+2d}}$, i.e. we pay a double dependence on $d$.

This poorer regret bound is not simply an artifact of the analysis, but rather a fundamental challenge that arises when moving from bandit problems to reinforcement learning (RL). In RL, the agent must estimate the value function at step $h$ using its own estimates for the value function at the next step, $h+1$. This forces to make a covering argument on the space of candidate state-value functions (moving target problem, also a topic of the next section \ref{app:criticism}), which has a detrimental effect on the regret bound.

At this point, there are essentially two possibilities:
\begin{enumerate}
    \item The covering argument cannot be avoided, and regret order of $K^{\frac{\nu+2d}{2\nu+2d}}$ is the best we can achieve for Kernelized MDPs, Strongly Smooth MDPs and Mildly Smooth MDPs. In that case, it would be proved that continuous armed bandits are essentially easier than MDPs with continuous state-action spaces.
    \item As for tabular MDPs \cite{azar2017minimax}, a more refined analysis allows to avoid doing the covering argument. In this way, a regret bound of order $K^{\frac{\nu+d}{2\nu+d}}$ can be proved, as the problem becomes as difficult as a smooth bandit. This would mean that, as for the comparisons multi armed bandits/tabular MDP and linear bandit/linear MDP, there is no substantial difference between continuous armed bandits and MDPs with continuous state-action spaces.
\end{enumerate}

We leave this as an interesting open problem for the future advancements of the research in theoretical RL.

\section{Some observations on \cite{vakili2024kernelized}}\label{app:criticism}

\cite{vakili2024kernelized} is one of the most recent papers in the literature on RL with continuous spaces. This work, published at the celebrated Conference on Neural Information Processing Systems (2023) reported a very strong result, proving a regret bound of order $\widetilde {\Os}(K^{\frac{\nu+d}{2\nu+d}})$ for Kernelized RL with Matérn kernel. If true, this bound would mean that it is possible to match the lower bound for the much easier case of Kernelized \textit{bandits}. 

In this section, we will pose some objections to the analysis of that paper. To this aim, we are going to refer to the updated version \cite{vakili2024arxiv} of the same work, where the authors have corrected some minor mistakes of the published paper.

The result giving the regret bound is Theorem 2 and, precisely, equation (21) in that paper. Without digging into the algorithm that they use, we can say that, like our one, it is based on dividing $\Zs$ into sub-regions, there called $\Zs'$ (even if the way these regions are created is totally different). As the authors recognise in their section 4.1, the hardest point of the analysis is the so called "moving target problem": to fit the current estimate $Q_h^t$ of the state-action value function, the authors use $r_h+P_hV_{h+1}^t$, which is a random function (as $V_{h+1}^t$ is estimated as well). Therefore, standard confidence bounds do not apply straightforwardly. The way \cite{vakili2024arxiv}, as most papers, deal with this issue, is to make a covering of the space $\Vs_{h+1}$ of the state-value functions at the next step, and then make a union bound over all the possible values for $V_{h+1}^t$.

The tricky part of their analysis is that, instead of using the covering number of $\Vs_{h+1}$, they fix one region $\Zs'$ and cover the space of functions restricted to $\Zs'$, as it is said in the first eight lines of section 4.2. In this way, as said in the same part of the paper, the $\varepsilon-$covering number of the function class, for $\varepsilon=\Os(\sqrt{\log (T)}/\sqrt {N_T(\Zs')})$ (where $N_T(\Zs')$ corresponds to the number of times region $\Zs'$ is visited), results for be of order $\log (T)$.

Unfortunately, we find no justification for this choice. Line 7 of their algorithm, when the estimated $Q_h^t(\cdot)$ is computed, relies on equations (15-16), which do a Gaussian process regression with target $r_h(z')+V_{h+1}^t(s_{h+1}')$ for all $z'\in \Zs'$ visited in the process. The issue here is that, while we know that $z'\in \Zs'$, we cannot say anything about $s_{h+1}'$, which is sampled from $p_{h+1}(\cdot|z')$. The new state is not guaranteed to belong to any specific sub-region of $\Ss$. Therefore, it is not sufficient to do a cover restricted to the value functions $\Zs'\to [0,1]$. 

In fact, all the approaches based on value iteration suffer from this problem. For tabular MDPs, the problem of covering the space of next state value functions was well-known to prevent achieving optimal regret, and was eventually solved by \cite{azar2017minimax}, who found a way to avoid doing the covering at all. Note that tabular MDPs can be seen as an extreme case where $\Zs$ is divided in regions $\Zs'$ containing just one state-action couple, and making a cover of $\Vs_{h+1}$ (the space of candidate value functions at the next step) restricted to a single state would trivially achieve optimal regret of order $\sqrt{|\Ss||\As|K}$. Still, this move is known to be wrong.

\section*{NeurIPS Paper Checklist}

\begin{enumerate}

\item {\bf Claims}
    \item[] Question: Do the main claims made in the abstract and introduction accurately reflect the paper's contributions and scope?
    \item[] Answer: \answerYes{} 
    \item[] Justification: every claim made in the abstract corresponds to a result proved in the paper.
    \item[] Guidelines:
    \begin{itemize}
        \item The answer NA means that the abstract and introduction do not include the claims made in the paper.
        \item The abstract and/or introduction should clearly state the claims made, including the contributions made in the paper and important assumptions and limitations. A No or NA answer to this question will not be perceived well by the reviewers. 
        \item The claims made should match theoretical and experimental results, and reflect how much the results can be expected to generalize to other settings. 
        \item It is fine to include aspirational goals as motivation as long as it is clear that these goals are not attained by the paper. 
    \end{itemize}

\item {\bf Limitations}
    \item[] Question: Does the paper discuss the limitations of the work performed by the authors?
    \item[] Answer: \answerYes{} 
    \item[] Justification: We propose some novel algorithms and theoretical MDP setting always clarifying that these elements not include all the settings studied in the literature of theoretical RL. 
    \item[] Guidelines:
    \begin{itemize}
        \item The answer NA means that the paper has no limitation while the answer No means that the paper has limitations, but those are not discussed in the paper. 
        \item The authors are encouraged to create a separate "Limitations" section in their paper.
        \item The paper should point out any strong assumptions and how robust the results are to violations of these assumptions (e.g., independence assumptions, noiseless settings, model well-specification, asymptotic approximations only holding locally). The authors should reflect on how these assumptions might be violated in practice and what the implications would be.
        \item The authors should reflect on the scope of the claims made, e.g., if the approach was only tested on a few datasets or with a few runs. In general, empirical results often depend on implicit assumptions, which should be articulated.
        \item The authors should reflect on the factors that influence the performance of the approach. For example, a facial recognition algorithm may perform poorly when image resolution is low or images are taken in low lighting. Or a speech-to-text system might not be used reliably to provide closed captions for online lectures because it fails to handle technical jargon.
        \item The authors should discuss the computational efficiency of the proposed algorithms and how they scale with dataset size.
        \item If applicable, the authors should discuss possible limitations of their approach to address problems of privacy and fairness.
        \item While the authors might fear that complete honesty about limitations might be used by reviewers as grounds for rejection, a worse outcome might be that reviewers discover limitations that aren't acknowledged in the paper. The authors should use their best judgment and recognize that individual actions in favor of transparency play an important role in developing norms that preserve the integrity of the community. Reviewers will be specifically instructed to not penalize honesty concerning limitations.
    \end{itemize}

\item {\bf Theory Assumptions and Proofs}
    \item[] Question: For each theoretical result, does the paper provide the full set of assumptions and a complete (and correct) proof?
    \item[] Answer: \answerYes{} 
    \item[] Justification: All proofs are reported in the appendix.
    \item[] Guidelines:
    \begin{itemize}
        \item The answer NA means that the paper does not include theoretical results. 
        \item All the theorems, formulas, and proofs in the paper should be numbered and cross-referenced.
        \item All assumptions should be clearly stated or referenced in the statement of any theorems.
        \item The proofs can either appear in the main paper or the supplemental material, but if they appear in the supplemental material, the authors are encouraged to provide a short proof sketch to provide intuition. 
        \item Inversely, any informal proof provided in the core of the paper should be complemented by formal proofs provided in appendix or supplemental material.
        \item Theorems and Lemmas that the proof relies upon should be properly referenced. 
    \end{itemize}

    \item {\bf Experimental Result Reproducibility}
    \item[] Question: Does the paper fully disclose all the information needed to reproduce the main experimental results of the paper to the extent that it affects the main claims and/or conclusions of the paper (regardless of whether the code and data are provided or not)?
    \item[] Answer: \answerNA{} 
    \item[] Justification: No experiments.
    \item[] Guidelines:
    \begin{itemize}
        \item The answer NA means that the paper does not include experiments.
        \item If the paper includes experiments, a No answer to this question will not be perceived well by the reviewers: Making the paper reproducible is important, regardless of whether the code and data are provided or not.
        \item If the contribution is a dataset and/or model, the authors should describe the steps taken to make their results reproducible or verifiable. 
        \item Depending on the contribution, reproducibility can be accomplished in various ways. For example, if the contribution is a novel architecture, descriregiong the architecture fully might suffice, or if the contribution is a specific model and empirical evaluation, it may be necessary to either make it possible for others to replicate the model with the same dataset, or provide access to the model. In general. releasing code and data is often one good way to accomplish this, but reproducibility can also be provided via detailed instructions for how to replicate the results, access to a hosted model (e.g., in the case of a large language model), releasing of a model checkpoint, or other means that are appropriate to the research performed.
        \item While NeurIPS does not require releasing code, the conference does require all submissions to provide some reasonable avenue for reproducibility, which may depend on the nature of the contribution. For example
        \begin{enumerate}
            \item If the contribution is primarily a new algorithm, the paper should make it clear how to reproduce that algorithm.
            \item If the contribution is primarily a new model architecture, the paper should describe the architecture clearly and fully.
            \item If the contribution is a new model (e.g., a large language model), then there should either be a way to access this model for reproducing the results or a way to reproduce the model (e.g., with an open-source dataset or instructions for how to construct the dataset).
            \item We recognize that reproducibility may be tricky in some cases, in which case authors are welcome to describe the particular way they provide for reproducibility. In the case of closed-source models, it may be that access to the model is limited in some way (e.g., to registered users), but it should be possible for other researchers to have some path to reproducing or verifying the results.
        \end{enumerate}
    \end{itemize}

\item {\bf Open access to data and code}
    \item[] Question: Does the paper provide open access to the data and code, with sufficient instructions to faithfully reproduce the main experimental results, as described in supplemental material?
    \item[] Answer: \answerNA{} 
    \item[] Justification: No experiments.
    \item[] Guidelines:
    \begin{itemize}
        \item The answer NA means that paper does not include experiments requiring code.
        \item Please see the NeurIPS code and data submission guidelines (\url{https://nips.cc/public/guides/CodeSubmissionPolicy}) for more details.
        \item While we encourage the release of code and data, we understand that this might not be possible, so “No” is an acceptable answer. Papers cannot be rejected simply for not including code, unless this is central to the contribution (e.g., for a new open-source benchmark).
        \item The instructions should contain the exact command and environment needed to run to reproduce the results. See the NeurIPS code and data submission guidelines (\url{https://nips.cc/public/guides/CodeSubmissionPolicy}) for more details.
        \item The authors should provide instructions on data access and preparation, including how to access the raw data, preprocessed data, intermediate data, and generated data, etc.
        \item The authors should provide scripts to reproduce all experimental results for the new proposed method and baselines. If only a subset of experiments are reproducible, they should state which ones are omitted from the script and why.
        \item At submission time, to preserve anonymity, the authors should release anonymized versions (if applicable).
        \item Providing as much information as possible in supplemental material (appended to the paper) is recommended, but including URLs to data and code is permitted.
    \end{itemize}

\item {\bf Experimental Setting/Details}
    \item[] Question: Does the paper specify all the training and test details (e.g., data splits, hyperparameters, how they were chosen, type of optimizer, etc.) necessary to understand the results?
    \item[] Answer: \answerNA{} 
    \item[] Justification: No experiments.
    \item[] Guidelines:
    \begin{itemize}
        \item The answer NA means that the paper does not include experiments.
        \item The experimental setting should be presented in the core of the paper to a level of detail that is necessary to appreciate the results and make sense of them.
        \item The full details can be provided either with the code, in appendix, or as supplemental material.
    \end{itemize}

\item {\bf Experiment Statistical Significance}
    \item[] Question: Does the paper report error bars suitably and correctly defined or other appropriate information about the statistical significance of the experiments?
    \item[] Answer: \answerNA{} 
    \item[] Justification: No experiments.
    \item[] Guidelines:
    \begin{itemize}
        \item The answer NA means that the paper does not include experiments.
        \item The authors should answer "Yes" if the results are accompanied by error bars, confidence intervals, or statistical significance tests, at least for the experiments that support the main claims of the paper.
        \item The factors of variability that the error bars are capturing should be clearly stated (for example, train/test split, initialization, random drawing of some parameter, or overall run with given experimental conditions).
        \item The method for calculating the error bars should be explained (closed form formula, call to a library function, bootstrap, etc.)
        \item The assumptions made should be given (e.g., Normally distributed errors).
        \item It should be clear whether the error bar is the standard deviation or the standard error of the mean.
        \item It is OK to report 1-sigma error bars, but one should state it. The authors should preferably report a 2-sigma error bar than state that they have a 96\% CI, if the hypothesis of Normality of errors is not verified.
        \item For asymmetric distributions, the authors should be careful not to show in tables or figures symmetric error bars that would yield results that are out of range (e.g. negative error rates).
        \item If error bars are reported in tables or plots, The authors should explain in the text how they were calculated and reference the corresponding figures or tables in the text.
    \end{itemize}

\item {\bf Experiments Compute Resources}
    \item[] Question: For each experiment, does the paper provide sufficient information on the computer resources (type of compute workers, memory, time of execution) needed to reproduce the experiments?
    \item[] Answer: \answerNA{} 
    \item[] Justification: No experiments.
    \item[] Guidelines:
    \begin{itemize}
        \item The answer NA means that the paper does not include experiments.
        \item The paper should indicate the type of compute workers CPU or GPU, internal cluster, or cloud provider, including relevant memory and storage.
        \item The paper should provide the amount of compute required for each of the individual experimental runs as well as estimate the total compute. 
        \item The paper should disclose whether the full research project required more compute than the experiments reported in the paper (e.g., preliminary or failed experiments that didn't make it into the paper). 
    \end{itemize}
    
\item {\bf Code Of Ethics}
    \item[] Question: Does the research conducted in the paper conform, in every respect, with the NeurIPS Code of Ethics \url{https://neurips.cc/public/EthicsGuidelines}?
    \item[] Answer: \answerNA{} 
    \item[] Justification: No experiments.
    \item[] Guidelines:
    \begin{itemize}
        \item The answer NA means that the authors have not reviewed the NeurIPS Code of Ethics.
        \item If the authors answer No, they should explain the special circumstances that require a deviation from the Code of Ethics.
        \item The authors should make sure to preserve anonymity (e.g., if there is a special consideration due to laws or regulations in their jurisdiction).
    \end{itemize}

\item {\bf Broader Impacts}
    \item[] Question: Does the paper discuss both potential positive societal impacts and negative societal impacts of the work performed?
    \item[] Answer: \answerNA{} 
    \item[] Justification: This work is just theoretical.
    \item[] Guidelines:
    \begin{itemize}
        \item The answer NA means that there is no societal impact of the work performed.
        \item If the authors answer NA or No, they should explain why their work has no societal impact or why the paper does not address societal impact.
        \item Examples of negative societal impacts include potential malicious or unintended uses (e.g., disinformation, generating fake profiles, surveillance), fairness considerations (e.g., deployment of technologies that could make decisions that unfairly impact specific groups), privacy considerations, and security considerations.
        \item The conference expects that many papers will be foundational research and not tied to particular applications, let alone deployments. However, if there is a direct path to any negative applications, the authors should point it out. For example, it is legitimate to point out that an improvement in the quality of generative models could be used to generate deepfakes for disinformation. On the other hand, it is not needed to point out that a generic algorithm for optimizing neural networks could enable people to train models that generate Deepfakes faster.
        \item The authors should consider possible harms that could arise when the technology is being used as intended and functioning correctly, harms that could arise when the technology is being used as intended but gives incorrect results, and harms following from (intentional or unintentional) misuse of the technology.
        \item If there are negative societal impacts, the authors could also discuss possible mitigation strategies (e.g., gated release of models, providing defenses in addition to attacks, mechanisms for monitoring misuse, mechanisms to monitor how a system learns from feedback over time, improving the efficiency and accessibility of ML).
    \end{itemize}
    
\item {\bf Safeguards}
    \item[] Question: Does the paper describe safeguards that have been put in place for responsible release of data or models that have a high risk for misuse (e.g., pretrained language models, image generators, or scraped datasets)?
    \item[] Answer: \answerNA{} 
    \item[] Justification: No experiments.
    \item[] Guidelines:
    \begin{itemize}
        \item The answer NA means that the paper poses no such risks.
        \item Released models that have a high risk for misuse or dual-use should be released with necessary safeguards to allow for controlled use of the model, for example by requiring that users adhere to usage guidelines or restrictions to access the model or implementing safety filters. 
        \item Datasets that have been scraped from the Internet could pose safety risks. The authors should describe how they avoided releasing unsafe images.
        \item We recognize that providing effective safeguards is challenging, and many papers do not require this, but we encourage authors to take this into account and make a best faith effort.
    \end{itemize}

\item {\bf Licenses for existing assets}
    \item[] Question: Are the creators or original owners of assets (e.g., code, data, models), used in the paper, properly credited and are the license and terms of use explicitly mentioned and properly respected?
    \item[] Answer: \answerNA{} 
    \item[] Justification: The paper does not use existing assets.
    \item[] Guidelines:
    \begin{itemize}
        \item The answer NA means that the paper does not use existing assets.
        \item The authors should cite the original paper that produced the code package or dataset.
        \item The authors should state which version of the asset is used and, if possible, include a URL.
        \item The name of the license (e.g., CC-BY 4.0) should be included for each asset.
        \item For scraped data from a particular source (e.g., website), the copyright and terms of service of that source should be provided.
        \item If assets are released, the license, copyright information, and terms of use in the package should be provided. For popular datasets, \url{paperswithcode.com/datasets} has curated licenses for some datasets. Their licensing guide can help determine the license of a dataset.
        \item For existing datasets that are re-packaged, both the original license and the license of the derived asset (if it has changed) should be provided.
        \item If this information is not available online, the authors are encouraged to reach out to the asset's creators.
    \end{itemize}

\item {\bf New Assets}
    \item[] Question: Are new assets introduced in the paper well documented and is the documentation provided alongside the assets?
    \item[] Answer: \answerNA{} 
    \item[] Justification: The paper does not release new assets.
    \item[] Guidelines:
    \begin{itemize}
        \item The answer NA means that the paper does not release new assets.
        \item Researchers should communicate the details of the dataset/code/model as part of their submissions via structured templates. This includes details about training, license, limitations, etc. 
        \item The paper should discuss whether and how consent was obtained from people whose asset is used.
        \item At submission time, remember to anonymize your assets (if applicable). You can either create an anonymized URL or include an anonymized zip file.
    \end{itemize}

\item {\bf Crowdsourcing and Research with Human Subjects}
    \item[] Question: For crowdsourcing experiments and research with human subjects, does the paper include the full text of instructions given to participants and screenshots, if applicable, as well as details about compensation (if any)? 
    \item[] Answer: \answerNA{} 
    \item[] Justification: The work does not include crowdsourcing.
    \item[] Guidelines:
    \begin{itemize}
        \item The answer NA means that the paper does not involve crowdsourcing nor research with human subjects.
        \item Including this information in the supplemental material is fine, but if the main contribution of the paper involves human subjects, then as much detail as possible should be included in the main paper. 
        \item According to the NeurIPS Code of Ethics, workers involved in data collection, curation, or other labor should be paid at least the minimum wage in the country of the data collector. 
    \end{itemize}

\item {\bf Institutional Review Board (IRB) Approvals or Equivalent for Research with Human Subjects}
    \item[] Question: Does the paper describe potential risks incurred by study participants, whether such risks were disclosed to the subjects, and whether Institutional Review Board (IRB) approvals (or an equivalent approval/review based on the requirements of your country or institution) were obtained?
    \item[] Answer: \answerNA{} 
    \item[] Justification: the paper does not involve crowdsourcing.
    \item[] Guidelines:
    \begin{itemize}
        \item The answer NA means that the paper does not involve crowdsourcing nor research with human subjects.
        \item Depending on the country in which research is conducted, IRB approval (or equivalent) may be required for any human subjects research. If you obtained IRB approval, you should clearly state this in the paper. 
        \item We recognize that the procedures for this may vary significantly between institutions and locations, and we expect authors to adhere to the NeurIPS Code of Ethics and the guidelines for their institution. 
        \item For initial submissions, do not include any information that would break anonymity (if applicable), such as the institution conducting the review.
    \end{itemize}

\end{enumerate}

\end{document}